%% file: main.tex
\documentclass[10pt]{article}
\usepackage[accepted]{tmlr}

\input{math_commands.tex}

\usepackage{hyperref}
\usepackage{url}
\usepackage{xcolor}

\newcommand{\wt}{\tilde{w}}

\newcommand{\Ralpha}[1]{R_{\alpha}(#1)}

\newcommand{\C}[1]{C_*(#1)}
\newcommand{\Gmax}{\mathcal{G}^{\max(w,\tilde{w})}}
\newcommand{\Gmin}{\mathcal{G}^{\min(w,\tilde{w})}}
\newcommand{\Gw}{\mathcal{G}^{w}}
\newcommand{\Gwt}{\mathcal{G}^{\tilde{w}}}
\newcommand{\Gmaxprime}{\hat{\mathcal{G}}'_{max}}
\newcommand{\Gsymprime}{\hat{\mathcal{G}}'_{sym}}
\newcommand{\Gminprime}{\hat{\mathcal{G}}'_{min}}
\newcommand{\Cone}{\text{Cone}}
\newcommand{\wmax}{\max(w, \wt)}
\newcommand{\wmin}{\min(w, \wt)}

\usepackage{amsthm}
\usepackage[ruled,linesnumbered]{algorithm2e}
\usepackage{tikz}
\usetikzlibrary{calc,positioning,arrows.meta}
\usepackage{graphicx}
\usepackage{booktabs}
\usepackage{subcaption}
\theoremstyle{plain}
\newtheorem{theorem}{Theorem}[section]

\newtheorem{lemma}[theorem]{Lemma} 
\newtheorem{corollary}[theorem]{Corollary}
\usepackage{nccmath}
\theoremstyle{definition}
\newtheorem{definition}[theorem]{Definition} 
 
\title{Symmetric Divergence and Normalized Similarity: A Unified Topological Framework for Representation Analysis}

\author{\name Yan Wang \email wangyan1@cuhk.edu.cn \\
      \addr School of Data Science\\
      The Chinese University of Hong Kong, Shenzhen
      \AND
      \name Tianyang Hu \email hutianyang@cuhk.edu.cn\\
      \addr School of Data Science\\
      The Chinese University of Hong Kong, Shenzhen}

\def\month{July}
\def\year{2026}
\def\openreview{\url{https://openreview.net/forum?id=pGgJ9qB2Io}}

\begin{document}

\maketitle
\begingroup
\renewcommand{\thefootnote}{}
\footnotetext{Code is available at: \url{https://github.com/frankwy505/SRTD-NTS}.}
\endgroup

\begin{abstract}
Topological Data Analysis (TDA) offers a principled, intrinsic lens for comparing neural representations. However, existing paired topological divergences (e.g., RTD) are limited by heuristic asymmetry and, more critically, unbounded scores that depend on sample size, hindering reliable cross-scenario benchmarking. To address these challenges, we develop a unified topological toolkit serving two complementary needs: fine-grained structural diagnosis and robust, standardized evaluation.
First, we complete the RTD framework by introducing \textbf{Symmetric Representation Topology Divergence (SRTD)} and its efficient variant \textbf{SRTD-lite}. Beyond resolving the theoretical asymmetry of prior variants, SRTD consolidates diagnostic information into a single, comprehensive cross-barcode signature. This allows for precise localization of structural discrepancies and serves as an effective optimization objective without the overhead of dual directional computations.
Second, to enable reliable benchmarking across heterogeneous settings, we propose \textbf{Normalized Topological Similarity (NTS)}. By measuring the rank correlation of hierarchical merge orders, NTS yields a scale-invariant metric bounded between -1 and 1, effectively overcoming the scale and sample-dependence of unnormalized divergences.
Experiments across synthetic and real-world deep learning settings demonstrate that our toolkit captures functional shifts in CNNs missed by geometric measures and robustly maps LLM genealogy even under distance saturation, offering a rigorous, topology-aware perspective that complements measures like CKA.
\end{abstract}

\section{Introduction}
\label{sec:intro}
Understanding the internal representations of neural networks is a central challenge in deep learning, crucial for interpreting their behavior and improving their design. In representation analysis, it is common to compare activations obtained from the \emph{same} collection of inputs---for instance, from different models or different layers---which induces a natural sample-wise correspondence between representations \citep{klabunde2025survey,lenc2015equivariance,li2016convergent,chen2023explore,bansal2021stitching,csiszarik2021frankenstein}.
Early approaches focused on subspace-based comparisons, most notably Canonical Correlation Analysis (CCA) and its variants such as
SVCCA \citep{raghu2017svcca} and PWCCA \citep{morcos2018insights}. These methods, however, can be overly permissive, as they remain invariant under
arbitrary invertible linear transformations. To address this, Centered Kernel Alignment (CKA) \citep{kornblith2019similarity} was proposed and
has since become a widely adopted tool. By comparing centered Gram matrices, CKA yields a normalized similarity score that facilitates comparison
across diverse settings and is robust to fundamental geometric transformations.

While geometric analysis dominates the field, Topological Data Analysis (TDA) offers a complementary perspective by probing the intrinsic shape of data. Using tools like persistent homology \citep{barannikov1994framed,carlsson2004persistence}, this line of approaches examines how the fundamental topological structure of the data---from simple clusters to complex loops and voids---is formed and evolves across a continuous range of scales.

Although certain topological methods effectively evaluate individual network units at a microscopic level \citep{zhao2022quantitative}, existing macroscopic approaches for comparing representations face distinct limitations regarding their applicability. Methods such as Geometry Score \citep{khrulkov2018geometry} and IMD \citep{tsitsulin2019shape} are highly general and do not require a one-to-one correspondence between representations. While flexible, they fail to leverage the valuable pairing information inherent in comparing neural network layers, often resulting in lower discriminative power. Conversely, approaches that do analyze distributional topology often strictly require the point clouds to reside in the same ambient space \citep{kynkaanniemi2019improved, barannikov2021manifold}, severely limiting their scope.

A significant breakthrough in bridging this gap is Representation Topology Divergence (RTD) \citep{barannikov2021representation} and its scalable variant, RTD-lite \citep{tulchinskii2025rtd}. These methods successfully utilize the one-to-one correspondence between data points without requiring them to share the same ambient space, making them powerful tools for representation analysis and optimization \citep{trofimov2023learning}.

Despite this progress, the RTD framework still faces two key limitations that hinder its broader adoption. First, its theoretical underpinnings remain incomplete: the commonly used ``symmetric'' RTD is defined as a brute-force average of two directional values, $RTD(w,\wt)$ and $RTD(\wt,w)$, which can differ dramatically (Figure~\ref{tab:sub_asym}) without a clear explanation of when and why such asymmetry should occur. Moreover, its dual variant, Max-RTD, proposed by \citet{trofimov2023learning} to enrich gradient information, has not been fully characterized theoretically, leaving its role and relationship to RTD ambiguous.
Second, and more fundamentally, divergence-style topological measures are \emph{not normalized}. The outputs of RTD and RTD-lite are unbounded positive numbers that depend strongly on the number of sample points and the intrinsic distance scale, making cross-scenario comparison difficult and interpretability elusive. These two limitations correspond exactly to the dual needs our toolkit targets: fine-grained diagnosis and robust cross-scenario evaluation.

Our overarching goal is to develop a cohesive representation analysis toolkit from a topological perspective.
To this end, we introduce two complementary components that address different but equally essential needs:
fine-grained structural diagnosis and standardized, cross-scenario similarity evaluation.

\begin{itemize}
\item For structural diagnosis: SRTD and SRTD-lite. We complete the RTD framework by introducing Symmetric Representation Topology Divergence (SRTD) and its lightweight variant. While preserving the fine-grained diagnostic power and optimization utility of the original RTD family, SRTD offers a crucial advancement in efficiency and interpretability. Unlike prior directional variants that require dual computations, SRTD captures the total topological discrepancy in a single, symmetric cross-barcode. This not only provides a unified theoretical link between RTD and Max-RTD but also serves as a more efficient objective for optimization with comparable quality.

\item For standardized evaluation: NTS. To overcome the limitations of divergence-based measures---specifically their unbounded nature and dependence on sample size---we introduce Normalized Topological Similarity (NTS). By leveraging intrinsic rank-based normalization to compare hierarchical merge orders, NTS yields a standardized score bounded between -1 and 1. This formulation ensures consistency across varying dataset sizes and scenarios, enabling robust benchmarking where unnormalized divergences would be incomparable, and offering a topology-aware complement to geometric baselines like CKA.

\end{itemize}

The rest of this paper is structured as follows.
In Section~\ref{sec:theory}, we introduce notation and review persistent homology as well as the RTD/RTD-lite framework.
In Section~\ref{sec:srtd}, we present SRTD and SRTD-lite and establish their theoretical relationships with RTD and Max-RTD.
In Section~\ref{sec:nts}, we propose Normalized Topological Similarity (NTS) and discuss its algorithmic definition and basic properties.
In Section~\ref{sec:experiments}, we evaluate our toolkit on synthetic hierarchical shifts, CNN layer-wise analysis, and LLM case studies.
Computational complexity and scalability are analyzed in Section~\ref{sec:complexity}.
Finally, Section~\ref{sec:conclusion} concludes and outlines limitations and future directions.

Additional materials are deferred to the appendix, including (i) full algorithmic details and proofs for SRTD/SRTD-lite and NTS (Appendix~\ref{apd:df_alg}--\ref{apd:pf}), (ii) supplementary barcode visualizations and qualitative interpretations, including ultra-long barcode cases and query-level diagnostics (Appendix~\ref{apd:srtd_lite_llm}, \ref{apd:cluster_barcodes}), (iii) an RSA baseline analysis using the full distance matrices (Appendix~\ref{apd:rsa}), and (iv) extended experimental setups and additional analyses/heatmaps (Appendix~\ref{apd:autoencoder}, \ref{apd:umap_analysis}, \ref{apd:tinycnn_arch}, \ref{app:llm-heatmaps}).

\section{Preliminaries: Persistent Homology and Representation Topology Divergence}
\label{sec:theory}
In the context of representation analysis, we consider a dataset of $n$ samples processed by a neural network. Let $X = \{x_1, \dots, x_n\} \subset \mathbb{R}^d$ and $X' = \{x'_1, \dots, x'_n\} \subset \mathbb{R}^{d'}$ be two sets of representations (e.g., activations from different layers or models) associated with these samples. We treat these representations as finite sets or \textbf{point clouds}, denoted as $P$ and $P'$, equipped with a pairwise \textbf{dissimilarity measure} (e.g., Euclidean distance or Cosine dissimilarity).

Accordingly, we represent the data through their pairwise dissimilarity matrices $w, \tilde{w} \in \mathbb{R}^{n \times n}$(We assume $w$ and $\wt$ are symmetric and nonnegative with zero diagonal: $w_{ij}=w_{ji}\ge 0$ and $w_{ii}=0$ (and similarly for $\wt$).), where $w_{ij} = \text{dissim}(x_i, x_j)$. Since both matrices arise from the same set of samples, this induces a natural one-to-one correspondence between the indices of $w$ and $\tilde{w}$. We define $\min(w, \wt)$ and $\max(w, \wt)$ as the element-wise minimum and maximum of the two matrices, respectively.

To understand the topological structure of these point clouds, we employ persistent homology. The process can be intuitively understood as follows: for a given point cloud $P$ with distance matrix $w$, we construct a sequence of simplicial complexes, known as the Vietoris-Rips filtration \citep{hausmann1995vietoris}, indexed by a proximity parameter $\alpha$. As $\alpha$ increases from zero, edges are added between points with distance less than or equal to $\alpha$. When a set of $n$ points are all mutually connected, the $(n-1)$-simplex they span is filled in (e.g., three points form a filled triangle). This growing complex is denoted as $R_{\alpha}(\mathcal{G}^{w})$.Formally, the Vietoris-Rips complex $R_{\alpha}(\mathcal{G}^{w})$ consists of all finite subsets (simplices) of $X$ with diameter at most $\alpha$: a simplex $\sigma = \{x_{i_0}, \dots, x_{i_k}\}$ is included in $R_{\alpha}$ if and only if $w_{i_j i_l} \le \alpha$ for all pairs in $\sigma$.

During this filtration process, topological features---such as connected components ($H_0$), cycles ($H_1$), and voids ($H_2$)---appear and disappear. We track the lifespan of each feature by recording its birth and death values as an interval $[b, d]$ \citep{barannikov1994framed}. The collection of these intervals is known as \textbf{barcodes} \citep{carlsson2004persistence}, which serves as a topological signature of the point cloud. The computation of persistent homology operates directly on the distance matrix.
\paragraph{RTD}
A set of barcodes characterizes one point cloud. To compare two, Representation Topology Divergence (RTD) \citep{barannikov2021representation} introduced an auxiliary matrix $M_{min}$ (Matrix~\ref{fig:m_min}) constructed from $w$, $\wt$, and $\min(w, \wt)$. The resulting barcode captures the differences in the evolution of topological features between an individual point cloud and the composite structure formed by their union, which is derived from the $\min(w, \wt)$ matrix. The length of a barcode interval in this context quantifies the discrepancy between when a feature forms in $w$ (or $\wt$) versus when it forms in $\min(w, \wt)$. 

We define $RTD(w, \wt)$ as the sum of the lengths of all barcodes computed from $M_{min}$ (Matrix~\ref{fig:m_min}). By swapping the roles of $w$ and $\wt$, we can similarly compute $RTD(\wt, w)$. To ensure symmetry, the final divergence is typically defined as their average:$RTD(P, P') = \frac{RTD(w, \wt) + RTD(\wt, w)}{2}$
Subsequently, \citet{trofimov2023learning} noted that a dual variant, which we term Max-RTD, can be defined by using an auxiliary matrix $M_{max}$ (Matrix~\ref{fig:m_max}) based on $w$, $\wt$, and $\max(w, \wt)$. However, the properties of this variant were not deeply investigated in their work. The symmetric versions of Max-RTD are defined analogously by averaging the two directional computations.

\paragraph{RTD-lite} 
To alleviate the computational burden of high-dimensional persistent homology, RTD-lite \citep{tulchinskii2025rtd} focuses exclusively on 0-dimensional features---specifically, the merging of connected components. Its core efficiency derives from the equivalence between the topological divergence and the difference of Minimum Spanning Tree (MST) weights. Specifically, the directional divergence is defined as $RTD\_{lite}(w, \wt) = MST(w) - MST(\min(w, \wt))$. This MST-based formulation provides a scalable and computationally feasible framework for large-scale representation analysis. \label{subsec:prelim_rtdlite}

\paragraph{Notation for Vietoris-Rips Complexes}
To streamline the following sections, we establish notation for the key Vietoris-Rips complexes used in our analysis. Recall that these are constructed based on a proximity parameter, $\alpha$, which acts as a distance threshold for connecting points. For any given threshold $\alpha$, we denote the complexes generated from the distance matrices $w$ and $\wt$ as $\Ralpha{\Gw}$ and $\Ralpha{\Gwt}$, respectively. The complexes derived from the element-wise minimum and maximum matrices have a crucial relationship to these: at the same scale $\alpha$, $\Ralpha{\Gmin}$ is the union of the individual complexes ($\Ralpha{\Gw} \cup \Ralpha{\Gwt}$), while $\Ralpha{\Gmax}$ is their intersection ($\Ralpha{\Gw} \cap \Ralpha{\Gwt}$).

\begin{figure}[htbp]
    \centering
    \begingroup
    \setlength{\arraycolsep}{3.5pt} 
    \scriptsize 
    \begin{subfigure}[t]{0.3\textwidth}
        \centering
        \[
        \begin{pmatrix}
            \wmax & (\wmax^+)^T & 0 \\
            \wmax^+ & \wmin & \infty \\
            0 & \infty & 0
        \end{pmatrix}
        \]
        \caption{$M_{\text{sym}}$}
        \label{fig:m_sym}
    \end{subfigure}%
    \hfill
    \begin{subfigure}[t]{0.3\textwidth}
        \centering
        \[
        \begin{pmatrix}
            w & (w^+)^T & 0 \\
            w^+ & \wmin & \infty \\
            0 & \infty & 0
        \end{pmatrix}
        \]
        \caption{$M_{\text{min}}$}
        \label{fig:m_min}
    \end{subfigure}%
    \hfill
    \begin{subfigure}[t]{0.3\textwidth}
        \centering
        \[
        \begin{pmatrix}
            \wmax & (\wmax^+)^T & 0 \\
            \wmax^+ & w & \infty \\
            0 & \infty & 0
        \end{pmatrix}
        \]
        \caption{$M_{\text{max}}$}
        \label{fig:m_max}
    \end{subfigure}
    
    \endgroup

    \caption{The three key auxiliary matrices. For any matrix $M$, $M^+$ is obtained by replacing its upper triangular part with infinity.}
    \label{fig:aux_matrices_full_elegant}
\end{figure}

\section{Symmetric Representation Topology Divergence (SRTD)}
\label{sec:srtd}
In practice, we observe a complementary phenomenon between RTD and Max-RTD (shown in Figure~\ref{tab:sub_asym}). When $RTD(w,\wt) > RTD(\wt,w)$, we consistently find that $Max\textit{-}RTD(w,\wt) < Max\textit{-}RTD(\wt,w)$. This suggests that the topological structural differences between $\Ralpha{\Gw}\cup \Ralpha{\Gwt}$ and $\Ralpha{\Gw}\cap \Ralpha{\Gwt}$ seem to be the core reason for the asymmetry in RTD. Therefore, we propose to directly measure this difference as the Symmetric Representation Topology Divergence (SRTD) of $P$ and $P'$.

\begin{definition}[SRTD]
\label{def:2}
For two point clouds $P$ and $P'$ with a one-to-one correspondence, the distance matrix of their auxiliary graph $\Gsymprime$ is $M_{sym}$ (Matrix~\ref{fig:m_sym}). The sum of the lengths of its persistent homology barcodes is defined as $SRTD(P,P')$ (see Algorithm~\ref{alg:srtd}). Its chain complex is homotopy equivalent to the mapping cone of the inclusion map $f': \C{\Ralpha{\Gw} \cap \Ralpha{\Gwt}} \to \C{\Ralpha{\Gw}\cup \Ralpha{\Gwt}}$.
\end{definition}

To significantly reduce computational complexity, \citet{tulchinskii2025rtd} proposed \textbf{RTD-lite} by simplifying topological analysis to MST-based calculations (as detailed in Section \ref{subsec:prelim_rtdlite}). Following this lightweight framework, we extend the principle to the intersection structure $\max(w, \wt)$ to formally define \textbf{Max-RTD-lite}. This completes the family alongside our proposed \textbf{SRTD-lite} , which directly quantifies the divergence between the composite union and intersection structures by comparing their respective MST weights.

\begin{definition}[SRTD-lite]
\label{def:3}
By comparing the minimum spanning trees of $\min(w,\wt)$ and $\max(w,\wt)$ through Algorithm~\ref{alg:srtd_lite}, we can obtain a series of barcodes. We define the sum of the lengths of these barcodes as $SRTD\textit{-}lite(w,\wt)$.
\end{definition}

\subsection{Mathematical Properties}
SRTD, RTD, and Max-RTD satisfy some elegant mathematical properties. The mapping cones corresponding to their auxiliary graphs (defined by the dissimilarity matrices $M_{sym}$ \ref{fig:m_sym}, $M_{min}$ \ref{fig:m_min}, and $M_{max}$ \ref{fig:m_max}) fit into the following long exact sequence:
\begin{multline*}
    \cdots \rightarrow H_n(\Ralpha{\Gw}, \Ralpha{\Gmax}) \xrightarrow{\gamma_n} H_n(\Ralpha{\Gmin}, \Ralpha{\Gmax}) \\
    \xrightarrow{\beta_n} H_n(\Ralpha{\Gmin}, \Ralpha{\Gw}) \xrightarrow{\delta_n} H_{n-1}(\Ralpha{\Gw}, \Ralpha{\Gmax}) \xrightarrow{\gamma_{n-1}} \cdots
\end{multline*}

\begin{theorem}
\label{thm:betti_relation}
For any dimension $i$, point clouds $P, P'$ and distance matrices $w, \wt$, the three divergences satisfy the following relationship:
    $$Max\textit{-}RTD_i(w,\wt)+RTD_i(w,\wt)-SRTD_i(w,\wt) = \int_{0}^{\infty}(\dim(\ker(\gamma_i)) + \dim(\ker(\gamma_{i-1})))d\alpha $$
\end{theorem}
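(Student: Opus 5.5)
The plan is to reduce everything to pointwise identities in $\alpha$ and then integrate.

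\textbf{Step 1: Divergences as integrals of relative Betti numbers.} For a filtration, the sum of the lengths of the barcodes in dimension $i$ equals $\int_0^\infty \beta_i(\alpha)\,d\alpha$, where $\beta_i(\alpha)$ is the number of bars alive at $\alpha$. By Definition~\ref{def:2} and the analogous descriptions of RTD and Max-RTD, the auxiliary complexes built from $M_{sym}$, $M_{min}$, $M_{max}$ are homotopy equivalent to the mapping cones of the inclusions
\[
R_\alpha(\mathcal{G}^{\max(w,\tilde w)})\hookrightarrow R_\alpha(\mathcal{G}^{\min(w,\tilde w)}),\quad
R_\alpha(\mathcal{G}^{w})\hookrightarrow R_\alpha(\mathcal{G}^{\min(w,\tilde w)}),\quad
R_\alpha(\mathcal{G}^{\max(w,\tilde w)})\hookrightarrow R_\alpha(\mathcal{G}^{w}).
\]
Since these are inclusions of subcomplexes, each cone has the homology of the corresponding pair (up to the usual degree shift, which I will fix consistently with the indexing of $RTD_i$). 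Hence
\[
SRTD_i=\int_0^\infty \dim H_i(\mathrm{Min},\mathrm{Max})\,d\alpha,\qquad
RTD_i=\int_0^\infty \dim H_i(\mathrm{Min},W)\,d\alpha,
\]
\[
Max\textit{-}RTD_i=\int_0^\infty \dim H_i(W,\mathrm{Max})\,d\alpha .
\]
Here I abbreviate the three complexes at scale $\alpha$ by $\mathrm{Max}\subseteq W\subseteq \mathrm{Min}$. This nesting holds because $\min(w,\tilde w)\le w\le\max(w,\tilde w)$ entrywise.

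\textbf{Step 2: Exactness gives a pointwise identity.} For the triple $(\mathrm{Min},W,\mathrm{Max})$, the displayed long exact sequence is exact at every $\alpha$. I would write $A_n=H_n(W,\mathrm{Max})$, $B_n=H_n(\mathrm{Min},\mathrm{Max})$ and $C_n=H_n(\mathrm{Min},W)$. Exactness at $B_i$ gives $\dim B_i=\operatorname{rank}\gamma_i+\operatorname{rank}\beta_i$. Exactness at $C_i$ gives $\operatorname{rank}\beta_i=\dim C_i-\operatorname{rank}\delta_i$. Exactness at $A_{i-1}$ gives $\operatorname{rank}\delta_i=\dim\ker\gamma_{i-1}$. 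Finally, $\operatorname{rank}\gamma_i=\dim A_i-\dim\ker\gamma_i$. Combining these,
\[
\dim A_i+\dim C_i-\dim B_i=\dim\ker\gamma_i+\dim\ker\gamma_{i-1}.
\]

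\textbf{Step 3: Integrate.} All these dimensions are finite, piecewise-constant functions of $\alpha$ with finitely many jumps, and they vanish for large $\alpha$ because all three complexes become the full simplex. Integrating the identity from Step 2 over $[0,\infty)$ therefore gives the theorem.

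\textbf{Main obstacle.} The delicate point is Step 1. I must verify that the barcode of each auxiliary matrix computes exactly the relative homology of the stated pair in the same degree $i$. Two things need checking: the cone/relative-homology degree shift, and the role of the extra vertex with its $0$ and $\infty$ entries, which cones off the first block. I would first try to cite the RTD paper's argument for $M_{min}$ and then transfer it verbatim to $M_{max}$ and $M_{sym}$ by substituting the pair of matrices. A secondary check is that the reduced versus unreduced conventions in dimension $0$ agree across all three quantities.
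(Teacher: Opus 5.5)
Your proposal is correct and follows essentially the same route as the paper. The paper likewise identifies each auxiliary-graph Betti number with the dimension of the corresponding relative homology group via the mapping-cone lemmas. It then uses exactness to write $\beta_i^{\max}=\dim\mathrm{im}\,\delta_{i+1}+\dim\mathrm{im}\,\gamma_i$, $\beta_i^{\mathrm{sym}}=\dim\mathrm{im}\,\gamma_i+\dim\mathrm{im}\,\beta_i$ and $\beta_i^{\min}=\dim\mathrm{im}\,\beta_i+\dim\mathrm{im}\,\delta_i$, which is equivalent to your rank bookkeeping, and integrates over $\alpha$. Your remarks on finiteness of the bars and on the reduced versus unreduced convention in dimension $0$ cover points the paper passes over silently, so they are worth keeping.
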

By swapping the positions of $w$ and $\tilde{w}$ in Theorem \ref{thm:betti_relation}, we obtain a similar equality. We denote $RTD_i(w,\tilde{w})+Max\text{-}RTD_i(w,\tilde{w})$ as $minmax(w,\tilde{w})$, and $RTD_i(\tilde{w},w)+Max\text{-}RTD_i(\tilde{w},w)$ as $minmax(\tilde{w},w)$. Both are strictly greater than SRTD, but in our experiments, we find this gap to be very small, as shown in Figure~\ref{tab:sub_diff}.

The introduction of SRTD provides a more mathematically elegant framework for understanding the RTD family. Within this framework, the asymmetric measures $minmax(w,\wt)$ and $minmax(\wt,w)$ can be decomposed into a large, shared symmetric component, $SRTD(w,\wt)$, and smaller, `private' components. These private components correspond to topological features unique to the individual filtrations of $\Gw$ or $\Gwt$ relative to the bounding filtrations of $\Gmin$ and $\Gmax$. This decomposition reveals that the asymmetry in the original RTD arises from these small, private feature sets, making the source of the divergence interpretable. The relationship becomes even more direct and elegant in the lite version:
\begin{corollary}
\label{cor:1}
$Max\textit{-}RTD\textit{-}lite(w,\wt)+RTD\textit{-}lite(w,\wt) = SRTD\textit{-}lite(w,\wt)$
\end{corollary}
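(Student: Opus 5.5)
The plan is to reduce everything to minimum spanning tree weights. By the RTD-lite formula recalled in Section~\ref{subsec:prelim_rtdlite}, $RTD\textit{-}lite(w,\wt)=MST(w)-MST(\min(w,\wt))$. The same principle applied to the intersection structure gives $Max\textit{-}RTD\textit{-}lite(w,\wt)=MST(\max(w,\wt))-MST(w)$. SRTD-lite compares the MSTs of the union and intersection structures, which should give $SRTD\textit{-}lite(w,\wt)=MST(\max(w,\wt))-MST(\min(w,\wt))$. Adding the first two expressions telescopes: the $MST(w)$ terms cancel and we recover the third. So the identity follows once each lite quantity is known to equal the corresponding difference of MST weights.

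The first step is to justify these three formulas at the level of barcodes, since the definitions are phrased as sums of bar lengths. I would use the standard fact that the $H_0$ barcode of a Vietoris--Rips filtration records merge times of Kruskal's algorithm. Hence the integral over $\alpha$ of the number of components $\beta_0(R_\alpha)-1$ equals the total MST weight.

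For a pair of filtrations $A_\alpha\subseteq B_\alpha$ on the same vertex set, the relative $H_0$ barcode (equivalently the $H_0$ part of the mapping cone) has total length
\[
\int_0^\infty \bigl(\beta_0(A_\alpha)-\beta_0(B_\alpha)\bigr)\,d\alpha .
\]
This holds because on vertex-only inclusions the long exact sequence gives $\dim H_0(B_\alpha,A_\alpha)=0$. It also gives $\dim H_1(B_\alpha,A_\alpha)\ge \beta_0(A_\alpha)-\beta_0(B_\alpha)$, with equality in the lite (0-dimensional, MST-restricted) setting, where $H_1$ of the subcomplex is ignored.

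Applying this with the pairs $(\Gmax,\Gw)$, $(\Gw,\Gmin)$ and $(\Gmax,\Gmin)$ gives the three MST differences. Concretely, Algorithm~\ref{alg:srtd_lite} produces bars whose lengths sum to exactly this integral.

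An alternative, more structural route is to specialize Theorem~\ref{thm:betti_relation} to dimension $0$ in the lite setting and show that the correction $\int(\dim\ker\gamma_i+\dim\ker\gamma_{i-1})\,d\alpha$ vanishes. For graphs, the relevant map $\gamma$ restricted to component-merging classes is injective: a relative class in $(\Gw,\Gmax)$ given by an edge merging components is not killed by passing to $\Gmin$, since $\Gmax\subseteq\Gw\subseteq\Gmin$. This route needs care with $H_1$ classes of the graph, which is why I prefer the MST computation.

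The main obstacle is pinning down that the bar lengths output by the lite algorithms are exactly the integrated component-count differences, rather than a differently paired set of bars. I would handle it by checking that each algorithm adds the edges of the larger structure's MST in order and records, for each, the threshold at which its endpoints become connected in the other filtration. Summing these bar lengths reproduces the integral of $\beta_0$ differences by a standard layer-cake argument.
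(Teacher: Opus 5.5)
Your proof is correct and is essentially the paper's argument: the paper writes each lite quantity as an MST difference ($MST(w)-MST(\min(w,\wt))$, $MST(\max(w,\wt))-MST(w)$, $MST(\max(w,\wt))-MST(\min(w,\wt))$, averaged over both directions in its version) and sums them so that the intermediate terms cancel. Your barcode-level justification goes beyond the paper, which simply takes these MST formulas as definitions; one correction there is that the object whose bar lengths integrate to $\beta_0(A_\alpha)-\beta_0(B_\alpha)$ is $\ker\bigl(H_0(A_\alpha)\to H_0(B_\alpha)\bigr)$, i.e.\ the image of the connecting map out of $H_1(B_\alpha,A_\alpha)$, not the relative $H_0$, which vanishes, as you yourself note.
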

\begin{corollary}
\label{cor:2}
    $Max\textit{-}RTD\textit{-}lite(P,P')\ge SRTD\textit{-}lite(P,P') \ge RTD\textit{-}lite(P,P') $
\end{corollary}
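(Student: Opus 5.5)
The plan is to turn everything into statements about minimum spanning tree weights, use Corollary~\ref{cor:1} to place SRTD-lite between the two other quantities, and reduce the claim to a single inequality that follows from submodularity of the graphic matroid rank. One normalization issue has to be settled first, and I flag it below.

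\textbf{Step 1: MST forms and the normalization.} Write $M(\cdot)$ for MST weight. Then
\[
RTD\textit{-}lite(w,\wt)=M(w)-M(\min(w,\wt)), \qquad Max\textit{-}RTD\textit{-}lite(w,\wt)=M(\max(w,\wt))-M(w),
\]
and $SRTD\textit{-}lite(w,\wt)=M(\max(w,\wt))-M(\min(w,\wt))$, which is already symmetric. These forms are consistent with Corollary~\ref{cor:1}. The symmetrized versions of RTD-lite and Max-RTD-lite, over $P,P'$, combine the two directions. Adding Corollary~\ref{cor:1} for $(w,\wt)$ and for $(\wt,w)$ gives
\[
Max\textit{-}RTD\textit{-}lite(P,P')+RTD\textit{-}lite(P,P') = c\cdot SRTD\textit{-}lite(P,P'),
\]
where $c$ depends on how the two directions are combined.

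\emph{Caveat.} The chain in the statement needs SRTD-lite to be the midpoint of the two symmetrized quantities, i.e.\ $c=2$, which holds when the two directions are summed. With the literal ``average'' convention of Section~\ref{sec:theory}, one gets $c=1$, and then $Max\textit{-}RTD\textit{-}lite(P,P') \le SRTD\textit{-}lite(P,P')$ since both terms are nonnegative, so the first inequality fails. I will therefore treat the statement as comparing on this common scale: summed directions, or equivalently averages against $\tfrac12 SRTD\textit{-}lite$.

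\textbf{Step 2: reduce to one inequality.} Once SRTD-lite is the midpoint, both inequalities are equivalent to $Max\textit{-}RTD\textit{-}lite(P,P') \ge RTD\textit{-}lite(P,P')$. Expanded, this reads
\[
2M(\max(w,\wt)) - M(w) - M(\wt) \;\ge\; M(w)+M(\wt) - 2M(\min(w,\wt)).
\]
Rearranged, the core inequality is
\[
M(\max(w,\wt))+M(\min(w,\wt)) \;\ge\; M(w)+M(\wt).
\]

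\textbf{Step 3: the core inequality.} I use the layer-cake formula $M(u)=\int_0^\infty \bigl(c_u(\alpha)-1\bigr)\,d\alpha$. Here $c_u(\alpha)$ is the number of connected components of the graph with edge set $E_u(\alpha)=\{ij: u_{ij}\le\alpha\}$; this is exactly the integrated $H_0$ Betti number of the Rips filtration. At each scale $\alpha$:
\begin{itemize}
\item $E_{\min(w,\wt)}(\alpha)=E_w(\alpha)\cup E_{\wt}(\alpha)$, and $E_{\max(w,\wt)}(\alpha)=E_w(\alpha)\cap E_{\wt}(\alpha)$, matching the union/intersection description of the complexes $\Ralpha{\Gmin}$ and $\Ralpha{\Gmax}$.
\item $c=n-r$, where $r$ is the rank function of the graphic matroid on $n$ vertices.
\item Rank is submodular, so $r(A\cup B)+r(A\cap B)\le r(A)+r(B)$. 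Hence $c(A\cup B)+c(A\cap B)\ge c(A)+c(B)$ pointwise in $\alpha$.
\end{itemize}
Integrating over $\alpha$ gives the core inequality.

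If a topological rather than matroidal argument is preferred, the same pointwise inequality can be read off the $H_0$ part of the Mayer--Vietoris sequence for $\Ralpha{\Gw}\cup\Ralpha{\Gwt}$.

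\textbf{Main obstacle.} The mathematics is routine once Step 3 is in place; the real difficulty is the normalization in Step 1. The proof must fix the convention for the symmetrized lite quantities so that SRTD-lite is exactly the midpoint, since otherwise the first inequality is false.
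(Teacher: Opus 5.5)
Your argument is correct and is essentially the paper's own proof. The paper also reduces the corollary to $\operatorname{MST}(\max(w,\wt))+\operatorname{MST}(\min(w,\wt))\ge\operatorname{MST}(w)+\operatorname{MST}(\wt)$, writes MST weight as $\int_0^\infty(\kappa(t)-1)\,dt$, identifies the threshold edge sets of $\max$ and $\min$ with intersections and unions, and integrates the supermodularity of the component count inherited from the graphic matroid rank.

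Your normalization caveat is also correct, and the paper glosses over this point. With the averaged symmetrizations written out in its proof of Corollary~\ref{cor:1}, one gets $SRTD\textit{-}lite(P,P')=Max\textit{-}RTD\textit{-}lite(P,P')+RTD\textit{-}lite(P,P')$. The first inequality then fails in general; for two points with $w_{12}=1$ and $\wt_{12}=3$ the chain reads $1\ge 2\ge 1$. The paper's asserted equivalence with $(\star)$ holds only under your convention of summing the two directions, or equivalently comparing the averages against $\tfrac12\,SRTD\textit{-}lite$.
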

Together, Theorem~\ref{thm:betti_relation} and Corollary~\ref{cor:1}, \ref{cor:2} provide a clear theoretical basis for a consistent pattern observed in our experiments: when plotting the divergence curves for either the full or lite families, the Max-RTD curve is always highest, the RTD curve is lowest, and the SRTD curve lies in between (as shown in Figure~\ref{fig:clusters_rtd_lite}). For the lite versions, Corollary~\ref{cor:2} proves this hierarchical ordering is strict, which explains why the SRTD\textit{-}lite curve appears perfectly centered between the other two. While the relationship for the full RTD family is more complex, this structure holds empirically, positioning SRTD as a balanced, median measure of topological divergence.
\begin{figure}[htbp]
    \centering
    \begin{tikzpicture}[
        node distance=3cm, 
        auto,
        main_node/.style={
            rectangle, 
            rounded corners=3pt, 
            draw=blue!60!black, 
            line width=0.8pt, 
            fill=blue!5!white, 
            align=center, 
            font=\small\bfseries, 
            minimum width=2.2cm, 
            minimum height=1cm,
            inner sep=5pt
        },
        relation/.style={
            ->, 
            thick, 
            draw=gray!70!black, 
            shorten >=2pt, 
            shorten <=2pt,
            >=stealth
        },
        label_style/.style={
            font=\footnotesize, 
            text=black
        }
    ]

        \node[main_node] (gmax_top) {$R_\alpha(\Gmax)$}; 
        \node[main_node, below=3.5cm of gmax_top] (gmin_bottom) {$R_\alpha(\Gmin)$}; 
        
        \node[main_node, below left=1.75cm and 1.8cm of gmax_top] (gw_left) {$R_\alpha(\Gw)$};
        \node[main_node, below right=1.75cm and 1.8cm of gmax_top] (gw_tilde_right) {$R_\alpha(\Gwt)$};
        
        \draw[relation] (gmax_top) -- (gw_left) node[midway, above left, label_style] {$Max\textit{-}RTD(w,\tilde{w})$};
        \draw[relation] (gmax_top) -- (gw_tilde_right) node[midway, above right, label_style] {$Max\textit{-}RTD(\tilde{w},w)$};
        
        \draw[relation] (gw_left) -- (gmin_bottom) node[midway, below left, label_style] {$RTD(w,\tilde{w})$};
        \draw[relation] (gw_tilde_right) -- (gmin_bottom) node[midway, below right, label_style] {$RTD(\tilde{w},w)$};
        
        \draw[relation] (gmax_top) -- (gmin_bottom) node[midway, right=2pt, label_style] {$SRTD(P,P')$};

    \end{tikzpicture}
    \caption{Conceptual relationship between SRTD, RTD, and Max-RTD.}
    \label{fig:srtd_relations_concept} 
\end{figure}
\section{Normalized Topological Similarity (NTS)}
\label{sec:nts}
\subsection{Motivation}
\label{subsec:nts_motivation}
As the second component of our unified topological toolkit, we now turn from diagnostic divergences to a normalized
similarity for robust cross-scenario comparison.

Building on the cross-barcode construction introduced in RTD and its variants, we obtain a principled way to
compare the topology of paired representations without requiring a shared ambient space \citep{barannikov2021representation,tulchinskii2025rtd,hu2023complexity}. 
Importantly, this construction yields fine-grained and interpretable barcodes (not just a single scalar), making
RTD/SRTD particularly useful for structural diagnosis; when optimization is desired, the associated divergence can
also serve as a natural loss term.

However, for \emph{general similarity analysis}---especially when comparisons must be made across layers, models, datasets, or experimental pipelines---the reliance on summing barcode lengths introduces fundamental limitations.
First, the resulting values are \emph{unnormalized}: they can grow with the number of samples and depend strongly on the distance scale, making scores hard to interpret and difficult to compare across scenarios.
Existing practice partially mitigates this by rescaling distances (e.g., dividing by a high quantile such as the 0.9-quantile) before computing RTD/RTD-lite \citep{barannikov2021representation,tulchinskii2025rtd},
but this is inherently heuristic and cannot fully eliminate scale effects across heterogeneous settings.

Second, the sum of barcode lengths sometimes can be dominated by a few ``ultra-long'' intervals (Figure~\ref{fig:barcode_plot_high}). 
In practice, these intervals sometimes come from a small number of corresponding sample pairs, whose contributions account for a large fraction of the total divergence. 
This can be undesirable: a divergence meant to summarize \emph{global} dissimilarity becomes overly sensitive to a handful of outlier pairs, potentially obscuring the overall structural relationship. 
This motivates a complementary goal: a \emph{normalized}, scale-robust similarity that captures hierarchical structure comparably across settings.

Several TDA pipelines turn persistence diagrams into objects that support inner-product based comparisons,
e.g., via positive-definite kernels or stable vectorizations such as persistence landscapes and persistence images
\citep{reininghaus2015kernel,kusano2016pwgk,bubenik2015landscapes,adams2017persistenceimages}.
These approaches treat each diagram as an unordered summary and are therefore agnostic to sample-wise correspondence. To our knowledge, a normalized topological similarity that explicitly exploits this pairing is still missing.

\subsection{Method: Capturing Merge-Order Similarity}
\label{subsec:nts_method}

\paragraph{From RSA to NTS.}
A core idea in representational similarity analysis (RSA) is to compare representations through their \emph{representational dissimilarity matrices} (RDMs), i.e., the full set of pairwise dissimilarities between the same inputs (the most basic choice is simply the pairwise distance matrix), and to compare two RDMs by how similarly they order these pairwise relations \citep{kriegeskorte2008representational,nili2014toolbox}.
Rank-based comparisons are attractive because they yield a normalized score and reduce sensitivity to global rescalings or other monotone distortions of dissimilarities.
We adopt this RSA principle, but replace the \emph{full} RDM vectorization with a topology-aware summary that focuses on $0$D connectivity events (merge order) rather than all pairwise magnitudes.
Accordingly, we use Spearman's rank correlation, defined as the Pearson correlation of rank-transformed vectors \citep{spearman1904proof}. Let $x,y\in\mathbb{R}^m$ be two real-valued vectors. Let $R(x),R(y)\in\mathbb{R}^m$ denote their rank vectors, where $R(x)_k$ is the rank of $x_k$ among $\{x_\ell\}_{\ell=1}^m$ (ties resolved deterministically, e.g., by mid-ranks). Spearman's $\rho_S$ is
\begin{equation}
\rho_S(x,y)
:= \mathrm{corr}\!\big(R(x),R(y)\big)
= \frac{\sum_{k=1}^{m}\big(R(x)_k-\overline{R(x)}\big)\big(R(y)_k-\overline{R(y)}\big)}
{\sqrt{\sum_{k=1}^{m}\big(R(x)_k-\overline{R(x)}\big)^2}\;
 \sqrt{\sum_{k=1}^{m}\big(R(y)_k-\overline{R(y)}\big)^2}} \in [-1,1],
\label{eq:spearman}
\end{equation}
where $\overline{R(x)}=\frac{1}{m}\sum_{k=1}^m R(x)_k$ and $\overline{R(y)}=\frac{1}{m}\sum_{k=1}^m R(y)_k$.

\paragraph{Topological event ordering in $0$D and its link to MST.}
In the $0$D Vietoris--Rips filtration induced by $w$, connected components merge exactly when an edge $(i,j)$ with $w_{ij}\le \alpha$ first connects two previously disconnected components.
A classical equivalence (single-linkage clustering / Kruskal's algorithm) states that these $n\!-\!1$ merge events occur at the $n\!-\!1$ edge weights of an MST of $w$, sorted in nondecreasing order.
We fix a deterministic tie-breaking rule so the MST is well-defined.
Moreover, for any pair $(i,j)$, their merge time equals the smallest threshold at which they become connected, which can be read off from the MST as
\begin{equation}
m_w(i,j)\;:=\;\min\{\alpha:\ i \text{ and } j \text{ are connected at threshold } \alpha\}
\;=\;\max_{e\in \mathrm{path}_{T_w}(i,j)} w_e .
\label{eq:merge_time}
\end{equation}

\paragraph{Core pairs and NTS.}
Let $E_w$ and $E_{\tilde w}$ be the MST edge sets under $w$ and $\tilde w$, and define $\mathcal{C}:=E_w\cup E_{\tilde w}$.
Indexing by $e=(i,j)\in\mathcal{C}$, we form aligned vectors and define NTS via Spearman correlation~\eqref{eq:spearman}:
\begin{itemize}
\item \textbf{NTS-E:} $x_e:=w_{ij}$, $\tilde x_e:=\tilde w_{ij}$, \quad
$\mathrm{NTS\textit{-}E}(w,\tilde w):=\rho_S\!\big((x_e)_{e\in\mathcal{C}},(\tilde x_e)_{e\in\mathcal{C}}\big)$.
\item \textbf{NTS-M:} $x_e:=m_w(i,j)$, $\tilde x_e:=m_{\tilde w}(i,j)$, \quad
$\mathrm{NTS\textit{-}M}(w,\tilde w):=\rho_S\!\big((x_e)_{e\in\mathcal{C}},(\tilde x_e)_{e\in\mathcal{C}}\big)$.
\end{itemize}

\subsection{Formal Definition and Properties}
\label{subsec:nts_formal}

Algorithms~\ref{alg:nts_m_final}--\ref{alg:nts_e_final} summarize the computation of NTS-M and NTS-E.
All MSTs are computed with a fixed deterministic tie-breaking rule, and ranks in Spearman's $\rho_S$ (Eq.~\ref{eq:spearman}) are computed with a fixed tie-handling convention (e.g., mid-ranks).

\begin{figure}[htbp]
\centering
\begin{minipage}[t]{0.48\textwidth}
\begin{algorithm}[H]
\caption{NTS-M (merge-time based)}
\label{alg:nts_m_final}
\SetAlgoLined
\DontPrintSemicolon
\KwIn{Dissimilarity matrices $w,\tilde w$}
\KwOut{$\mathrm{NTS\textit{-}M}(w,\tilde w)$}

$E_w \leftarrow$ edge set of $\mathrm{MST}(w)$\;
$E_{\tilde w} \leftarrow$ edge set of $\mathrm{MST}(\tilde w)$\;
$\mathcal{C} \leftarrow E_w \cup E_{\tilde w}$\;

$V \leftarrow \big(m_w(i,j)\big)_{(i,j)\in\mathcal{C}}$ \tcp*[r]{Eq.~\ref{eq:merge_time}}
$\tilde V \leftarrow \big(m_{\tilde w}(i,j)\big)_{(i,j)\in\mathcal{C}}$\;

\KwRet $\rho_S(V,\tilde V)$ \tcp*[r]{Eq.~\ref{eq:spearman}}
\end{algorithm}
\end{minipage}
\hfill
\begin{minipage}[t]{0.48\textwidth}
\begin{algorithm}[H]
\caption{NTS-E (edge-distance based)}
\label{alg:nts_e_final}
\SetAlgoLined
\DontPrintSemicolon
\KwIn{Dissimilarity matrices $w,\tilde w$}
\KwOut{$\mathrm{NTS\textit{-}E}(w,\tilde w)$}

$E_w \leftarrow$ edge set of $\mathrm{MST}(w)$\;
$E_{\tilde w} \leftarrow$ edge set of $\mathrm{MST}(\tilde w)$\;
$\mathcal{C} \leftarrow E_w \cup E_{\tilde w}$\;

$V \leftarrow \big(w_{ij}\big)_{(i,j)\in\mathcal{C}}$\;
$\tilde V \leftarrow \big(\tilde w_{ij}\big)_{(i,j)\in\mathcal{C}}$\;

\KwRet $\rho_S(V,\tilde V)$ \tcp*[r]{Eq.~\ref{eq:spearman}}
\end{algorithm}
\end{minipage}
\end{figure}

We next state two basic properties. We use ``weak order'' to include ties: two vectors induce the same weak order on $\mathcal{C}$ if for all $e_1,e_2\in\mathcal{C}$,
$V_{e_1}<V_{e_2}\Leftrightarrow \tilde V_{e_1}<\tilde V_{e_2}$ and $V_{e_1}=V_{e_2}\Leftrightarrow \tilde V_{e_1}=\tilde V_{e_2}$.

\begin{theorem}
\label{thm:nts_m_one}
Assume the rank vectors in Eq.~\ref{eq:spearman} have nonzero variance.
Then $\mathrm{NTS\textit{-}M}(w,\tilde w)=1$ if and only if the merge-time values $\{m_w(i,j)\}_{(i,j)\in\mathcal{C}}$ and $\{m_{\tilde w}(i,j)\}_{(i,j)\in\mathcal{C}}$ induce the same weak order on $\mathcal{C}$.
\end{theorem}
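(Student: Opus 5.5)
The plan is to reduce the claim to a statement about mid-rank vectors and the equality case of Cauchy--Schwarz. Write $V=(m_w(e))_{e\in\mathcal{C}}$ and $\tilde V=(m_{\tilde w}(e))_{e\in\mathcal{C}}$, and set $r=R(V)$, $\tilde r=R(\tilde V)$, with $m=|\mathcal{C}|$. By Eq.~\ref{eq:spearman}, $\mathrm{NTS\textit{-}M}(w,\tilde w)$ is the Pearson correlation of the centered vectors $r-\bar r\mathbf 1$ and $\tilde r-\bar{\tilde r}\mathbf 1$. These have nonzero norm by the nonzero-variance assumption. Moreover, for mid-ranks $\bar r=\bar{\tilde r}=(m+1)/2$, since mid-ranks always sum to $m(m+1)/2$.

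The first step is to record the basic fact about mid-ranks: $r_e=\#\{f: V_f<V_e\}+\tfrac12(\#\{f:V_f=V_e\}+1)$. From this, $r_{e_1}<r_{e_2}$ holds if and only if $V_{e_1}<V_{e_2}$, and $r_{e_1}=r_{e_2}$ holds if and only if $V_{e_1}=V_{e_2}$. The forward direction is by monotonicity of the formula. The converse holds because if $V_{e_1}<V_{e_2}$, the counting term for $e_2$ is at least the count for $e_1$ plus the full tie class of $e_1$, which gives a strict increase. So $r$ induces exactly the weak order of $V$, and $r$ is a function of that weak order alone: equal weak orders give equal mid-rank vectors.

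For the "if" direction, if $V$ and $\tilde V$ induce the same weak order on $\mathcal{C}$, the previous step gives $r=\tilde r$. The correlation of a nonconstant vector with itself is $1$.

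For the "only if" direction, suppose the correlation equals $1$. Equality in Cauchy--Schwarz forces $\tilde r-\bar{\tilde r}\mathbf 1=c\,(r-\bar r\mathbf 1)$ for some $c>0$. Since the means agree, this becomes $\tilde r=c\,r+(1-c)\bar r\mathbf 1$, a strictly increasing affine map of $r$. A strictly increasing map preserves both strict inequalities and equalities between coordinates, so $r$ and $\tilde r$ induce the same weak order. By the mid-rank fact, that weak order is also the one induced by $V$ and by $\tilde V$, which proves the claim. We do not need to show $c=1$, although it follows.

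The main point requiring care is the mid-rank lemma in the second paragraph, especially its converse: distinct values must receive distinct ranks, while ties receive identical ranks. The argument should also work for other deterministic tie conventions only insofar as they assign equal ranks to equal values. I will state the proof for mid-ranks and note this caveat, because a convention that breaks ties arbitrarily would make the "if" direction fail. Nothing about MSTs or merge times is used beyond the vectors being real-valued and indexed by $\mathcal{C}$, so the same proof applies verbatim to NTS-E.
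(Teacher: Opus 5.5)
Your proof is correct and follows essentially the same route as the paper's. In both, correlation $1$ forces a positive affine relation between the rank vectors, the rank map both preserves and reflects the weak order, and equal weak orders yield identical rank vectors; your explicit mid-rank formula makes rigorous the ``order-preserving and tie-preserving'' step that the paper only asserts.

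One small correction to your closing caveat concerns which direction fails. A deterministic convention that gives tied values distinct ranks (e.g.\ ordinal ranks with index tie-breaking) still satisfies the ``if'' direction, but it breaks the ``only if'' direction, since the value vectors $(1,1)$ and $(1,2)$ both receive ranks $(1,2)$. Requiring equal ranks for equal values is exactly what lets the ranks reflect ties. This matters here because any edge of $E_{\tilde w}\setminus E_w$ has the same $m_w$-value as some $T_w$-edge in $\mathcal{C}$, so ties in the merge-time vector are typical.
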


\begin{theorem}
\label{thm:nts_e_one}
Assume the rank vectors in Eq.~\ref{eq:spearman} have nonzero variance.
If $\mathrm{NTS\textit{-}E}(w,\tilde w)=1$, then $\mathrm{NTS\textit{-}M}(w,\tilde w)=1$.
The converse does not necessarily hold.
\end{theorem}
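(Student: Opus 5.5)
The plan is to reduce both NTS-E $=1$ and NTS-M $=1$ to statements about weak orders, and then show that the merge times are a monotone image of the edge weights on $\mathcal{C}$.

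\textbf{Step 1 (NTS-E $=1$ gives the same weak order on edge weights).} Under the nonzero-variance assumption, $\rho_S=1$ means the two rank vectors have Pearson correlation $1$, so one is a positive affine image of the other. Both mid-rank vectors on $m=|\mathcal{C}|$ entries have mean $(m+1)/2$. A positive affine relation between rank vectors preserves strict inequalities and equalities entrywise. Hence $(w_e)_{e\in\mathcal{C}}$ and $(\tilde w_e)_{e\in\mathcal{C}}$ induce the same weak order on $\mathcal{C}$. This is the same argument as the ``only if'' direction of Theorem~\ref{thm:nts_m_one}, applied to edge values instead of merge times. Consequently there is a strictly increasing map $\varphi:\{w_e:e\in\mathcal{C}\}\to\{\tilde w_e:e\in\mathcal{C}\}$ with $\tilde w_e=\varphi(w_e)$ for all $e\in\mathcal{C}$.

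\textbf{Step 2 (merge times depend only on weights in $\mathcal{C}$).} I will use the minimax characterization of single-linkage merge times. For any $i,j$, $m_w(i,j)$ equals the minimum over all paths from $i$ to $j$ in the complete graph of the maximum $w$-weight along the path. This minimax value is attained along the path in any MST, in particular along $\mathrm{path}_{T_w}(i,j)$, whose edges lie in $E_w\subseteq\mathcal{C}$. Therefore
\[
m_w(i,j)=\min_{P:\,i\leadsto j \text{ in } (V,\mathcal{C})}\ \max_{e\in P} w_e .
\]
Restricting to paths in $(V,\mathcal{C})$ cannot decrease the minimum, and the MST path shows the minimum is still attained. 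The same formula holds for $\tilde w$, because $E_{\tilde w}\subseteq\mathcal{C}$. This step is the crux: it lets the two merge-time functions be computed over the same graph, independently of the tie-breaking used in choosing the MSTs.

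\textbf{Step 3 (monotone maps commute with min--max).} Since $\varphi$ is strictly increasing, it commutes with both $\max$ and $\min$ over finite sets. Hence $m_{\tilde w}(i,j)=\varphi(m_w(i,j))$ for every $(i,j)\in\mathcal{C}$; note that $m_w$ takes values in the domain of $\varphi$. A strictly increasing image preserves the weak order, so Theorem~\ref{thm:nts_m_one} gives $\mathrm{NTS\textit{-}M}(w,\tilde w)=1$.

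\textbf{Step 4 (the converse fails).} Take three points with
\[
w_{12}=1,\quad w_{23}=2,\quad w_{13}=3, \qquad \tilde w_{12}=1,\quad \tilde w_{13}=2,\quad \tilde w_{23}=3 .
\]
Then $E_w=\{12,23\}$ and $E_{\tilde w}=\{12,13\}$, so $\mathcal{C}=\{12,23,13\}$.

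For NTS-E, the edge vectors in the order $(12,23,13)$ are $(1,2,3)$ and $(1,3,2)$. These induce different weak orders, so $\mathrm{NTS\textit{-}E}<1$; its value is $1/2$.

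For NTS-M, the merge times in the same order are $(1,2,2)$ under both $w$ and $\tilde w$. The vectors are identical and their ranks are nonconstant, so $\mathrm{NTS\textit{-}M}=1$.
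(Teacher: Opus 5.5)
Your proof is correct. The forward direction follows the paper's skeleton: a common weak order on $\mathcal{C}$ gives a strictly increasing $g$ with $\tilde w_e=g(w_e)$ there, $g$ commutes with the max along tree paths, and Theorem~\ref{thm:nts_m_one} finishes. Where you differ is Step~2. The paper simply declares that all MST and merge-time computations are done on the core graph and runs Kruskal there. But $m_w$ is defined through the MST of the full matrix, and $w,\tilde w$ need not agree in order off $\mathcal{C}$. Your Step~2 supplies exactly the justification that reduction needs: the minimax over paths in $(V,\mathcal{C})$ equals the full-graph merge time because $E_w,E_{\tilde w}\subseteq\mathcal{C}$. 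It also makes the argument independent of which MST the tie-breaking selects.

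For the converse you use a different example, and yours is the one that works. Your three-point example checks out: $\mathrm{NTS\textit{-}E}=1/2$, and the merge times are $(1,2,2)$ on both sides. The paper's four-point example fails as written. Kruskal gives the tree $\{12,24,23\}$ under $w$ and $\{24,23,13\}$ under $\tilde w$. The merge times on $(12,23,34,13,24)$ are therefore $(2,8,8,8,7)$ versus $(8,7,7,8,2)$. So the pair $(1,2)$ merges first under $w$ but last (tied) under $\tilde w$. Also, $(3,4)$ lies in neither tree and never merges at time $10$.

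One point applies equally to you and the paper: Theorem~\ref{thm:nts_m_one} needs the merge-time ranks to be non-constant. This is covered if the hypothesis is read as applying to both pairs of vectors, but it also follows here. Suppose $m_w\equiv c$ on $\mathcal{C}$. Then $w_e=c$ for $e\in E_w$. Non-constancy of $w$ on $\mathcal{C}$ gives some $f\in E_{\tilde w}\setminus E_w$ with $w_f>c$, since $w_f\ge m_w(f)$. By the common weak order, $f$ is then the strict maximum under $\tilde w$ on the cycle it forms with its $T_w$-path. Hence $f$ cannot lie in an MST of $\tilde w$, a contradiction.
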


\begin{figure*}[t]
\centering
\begin{tikzpicture}[x=1cm,y=1cm,>=Stealth]
\tikzset{
  box/.style={draw=gray!60, rounded corners=2pt, line width=0.45pt},
  pt/.style={circle, fill=black, inner sep=1.35pt},
  plab/.style={font=\scriptsize\bfseries, text=gray!90},
  edge/.style={draw=black, line width=0.6pt},
  newedge/.style={draw=blue!70!black, line width=1.15pt},
  ev/.style={circle, fill=blue!70!black, inner sep=1pt,
             font=\tiny\bfseries, text=white}
}

\def\W{2.85}
\def\H{1.90}
\def\DX{3.15}
\def\DY{-2.75}
\def\CenterX{6.15} 

\node[font=\small\bfseries] at (\CenterX,2.35) {increasing threshold $\alpha$};
\draw[->, line width=0.7pt] (0.5,2.12) -- (11.8,2.12);

\foreach \k/\lab in {0/$\alpha_0$,1/$\alpha_1$,2/$\alpha_2$,3/$\alpha_3$}{
  \node[font=\small\sffamily] at (\k*\DX+0.5*\W,1.97) {\lab};
}

\node[anchor=east, font=\small\bfseries] at (-0.25,0.95) {Rep.\ A};
\node[anchor=east, font=\small\bfseries] at (-0.25,\DY+0.95) {Rep.\ B};

\def\coordA{
  \coordinate (Aa) at (0.4, 0.5);
  \coordinate (Ab) at (0.8, 0.7);
  \coordinate (Ac) at (1.5, 0.5);
  \coordinate (Ad) at (2.4, 0.7);
  \coordinate (Ae) at (2.5, 1.3);
}

\begin{scope}[shift={(0,0)}]
  \draw[box] (0,0) rectangle (\W,\H); \coordA
  \foreach \p/\n in {Aa/a,Ab/b,Ac/c,Ad/d,Ae/e}{
    \node[pt] at (\p) {}; \node[plab, anchor=south east] at (\p) {\n};
  }
\end{scope}

\begin{scope}[shift={(\DX,0)}]
  \draw[box] (0,0) rectangle (\W,\H); \coordA
  \draw[newedge] (Aa) -- (Ab);
  \node[ev] at ($ (Aa)!0.5!(Ab) + (0.15,-0.15) $) {1};
  \foreach \p/\n in {Aa/a,Ab/b,Ac/c,Ad/d,Ae/e}{
    \node[pt] at (\p) {}; \node[plab, anchor=south east] at (\p) {\n};
  }
\end{scope}

\begin{scope}[shift={(2*\DX,0)}]
  \draw[box] (0,0) rectangle (\W,\H); \coordA
  \draw[edge] (Aa) -- (Ab);
  \draw[newedge] (Ad) -- (Ae);
  \node[ev] at ($ (Ad)!0.5!(Ae) + (0.2,0) $) {2};
  \foreach \p/\n in {Aa/a,Ab/b,Ac/c,Ad/d,Ae/e}{
    \node[pt] at (\p) {}; \node[plab, anchor=south east] at (\p) {\n};
  }
\end{scope}

\begin{scope}[shift={(3*\DX,0)}]
  \draw[box] (0,0) rectangle (\W,\H); \coordA
  \draw[edge] (Aa) -- (Ab); \draw[edge] (Ad) -- (Ae);
  \draw[newedge] (Ab) -- (Ac); \node[ev] at ($ (Ab)!0.5!(Ac) + (0,0.2) $) {3};
  \draw[newedge] (Ac) -- (Ad); \node[ev] at ($ (Ac)!0.5!(Ad) + (0,0.2) $) {4};
  \foreach \p/\n in {Aa/a,Ab/b,Ac/c,Ad/d,Ae/e}{
    \node[pt] at (\p) {}; \node[plab, anchor=north west] at (\p) {\n};
  }
\end{scope}

\def\coordB{
  \coordinate (Ba) at (0.4, 0.6);
  \coordinate (Bb) at (0.8, 0.4);
  \coordinate (Bc) at (1.5, 0.2);
  \coordinate (Bd) at (1.6, 1.1);
  \coordinate (Be) at (2.2, 1.0);
}

\begin{scope}[shift={(0,\DY)}]
  \draw[box] (0,0) rectangle (\W,\H); \coordB
  \foreach \p/\n in {Ba/a,Bb/b,Bc/c,Bd/d,Be/e}{
    \node[pt] at (\p) {}; \node[plab, anchor=south west] at (\p) {\n};
  }
\end{scope}

\begin{scope}[shift={(\DX,\DY)}]
  \draw[box] (0,0) rectangle (\W,\H); \coordB
  \draw[newedge] (Ba) -- (Bb);
  \node[ev] at ($ (Ba)!0.5!(Bb) + (-0.15,-0.15) $) {1};
  \foreach \p/\n in {Ba/a,Bb/b,Bc/c,Bd/d,Be/e}{
    \node[pt] at (\p) {}; \node[plab, anchor=south west] at (\p) {\n};
  }
\end{scope}

\begin{scope}[shift={(2*\DX,\DY)}]
  \draw[box] (0,0) rectangle (\W,\H); \coordB
  \draw[edge] (Ba) -- (Bb);
  \draw[newedge] (Bd) -- (Be);
  \node[ev] at ($ (Bd)!0.5!(Be) + (0,-0.2) $) {2};
  \foreach \p/\n in {Ba/a,Bb/b,Bc/c,Bd/d,Be/e}{
    \node[pt] at (\p) {}; \node[plab, anchor=south west] at (\p) {\n};
  }
\end{scope}

\begin{scope}[shift={(3*\DX,\DY)}]
  \draw[box] (0,0) rectangle (\W,\H); \coordB
  \draw[edge] (Ba) -- (Bb); \draw[edge] (Bd) -- (Be);
  \draw[newedge] (Bb) -- (Bc); \node[ev] at ($ (Bb)!0.5!(Bc) + (-0.05,-0.15) $) {3};
  \draw[newedge] (Bc) -- (Bd); \node[ev] at ($ (Bc)!0.5!(Bd) + (0.2,0) $) {4};
  \foreach \p/\n in {Ba/a,Bb/b,Bc/c,Bd/d,Be/e}{
    \node[pt] at (\p) {}; \node[plab, anchor=south west] at (\p) {\n};
  }
\end{scope}

\node[anchor=west, align=center, font=\small\bfseries, text=blue!70!black] at (12.4, -0.425) {
  Identical\\Merge Order\\$\Downarrow$\\ \large NTS = 1
};

\end{tikzpicture}
\caption{While Rep. A and Rep. B have distinct geometric layouts (CKA $\approx$ 0.68), their 0D merge events follow the exact same sequence (labeled \textcircled{1}--\textcircled{4}). NTS yields a score of 1, reflecting topological consistency despite geometric variance.}
\label{fig:nts_concept}
\end{figure*}

Intuitively, NTS-E is stricter because it matches the rank structure of the underlying edge weights on $\mathcal{C}$, whereas NTS-M only requires agreement in the induced merge-event ordering.

\section{Experiments}
\label{sec:experiments}
\subsection{Analysis of Hierarchical Clustering Structures}
\label{subsec:synthetic_data}

We evaluate the two complementary components of our unified topological toolkit on controlled hierarchical-structure shifts.
First, we use SRTD and SRTD-lite as \emph{diagnostic} divergences: their cross-barcodes provide fine-grained, interpretable evidence of where paired representations differ (and, when desired, the associated divergence can also serve as a loss term).
Second, we study NTS as a \emph{normalized} topological similarity for robust cross-scenario comparison, and later illustrate how this topological perspective complements geometric baselines in CNN and LLM case studies.

\paragraph{Clusters Experiment.}
We test sensitivity to increasing structural dissimilarity by comparing a single cluster of 300 2D Gaussian points against variants where the points are partitioned into $k=2, \dots, 12$ clusters arranged on a circle. The results in Figure \ref{fig:clusters_full_analysis} reveal a clear performance divide: our proposed NTS and SRTD families correctly capture the expected trend of increasing dissimilarity. In contrast, CKA is largely insensitive to these structural changes, while RTD-lite produces an anomalous, inverted trend, confirming that the $\max(w, \wt)$ component is essential for a robust divergence measure.

\begin{figure*}[!t]
    \centering
    
    \begin{minipage}{\textwidth}
        \centering
        \includegraphics[width=\textwidth]{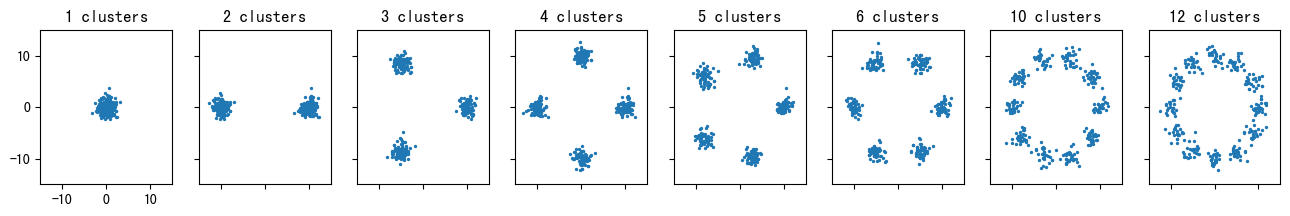}
        \vspace{2mm}
        {\small Visualization of the synthetic Clusters dataset.}
    \end{minipage}

    \vspace{8mm}

    \begin{minipage}{\textwidth}
        \centering
        \begin{subfigure}[b]{0.23\textwidth}
            \centering
            \includegraphics[width=\textwidth]{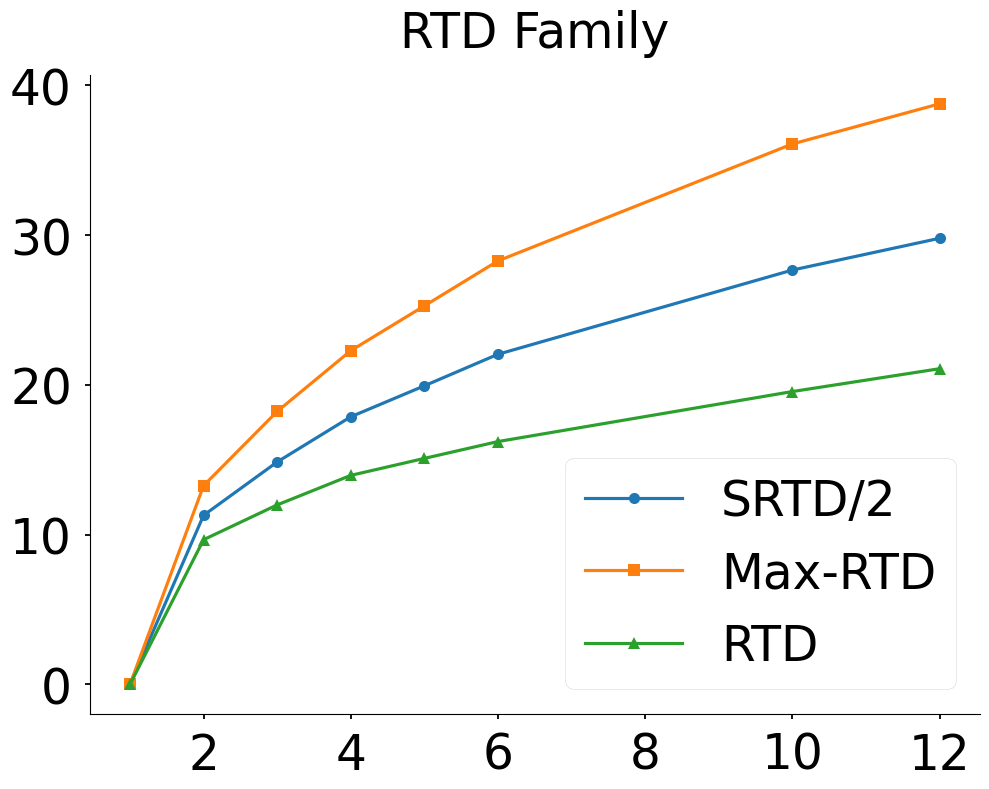}
            \caption{RTD Family} 
            \label{fig:clusters_rtd_full}
        \end{subfigure}
        \hfill
        \begin{subfigure}[b]{0.23\textwidth}
            \centering
            \includegraphics[width=\textwidth]{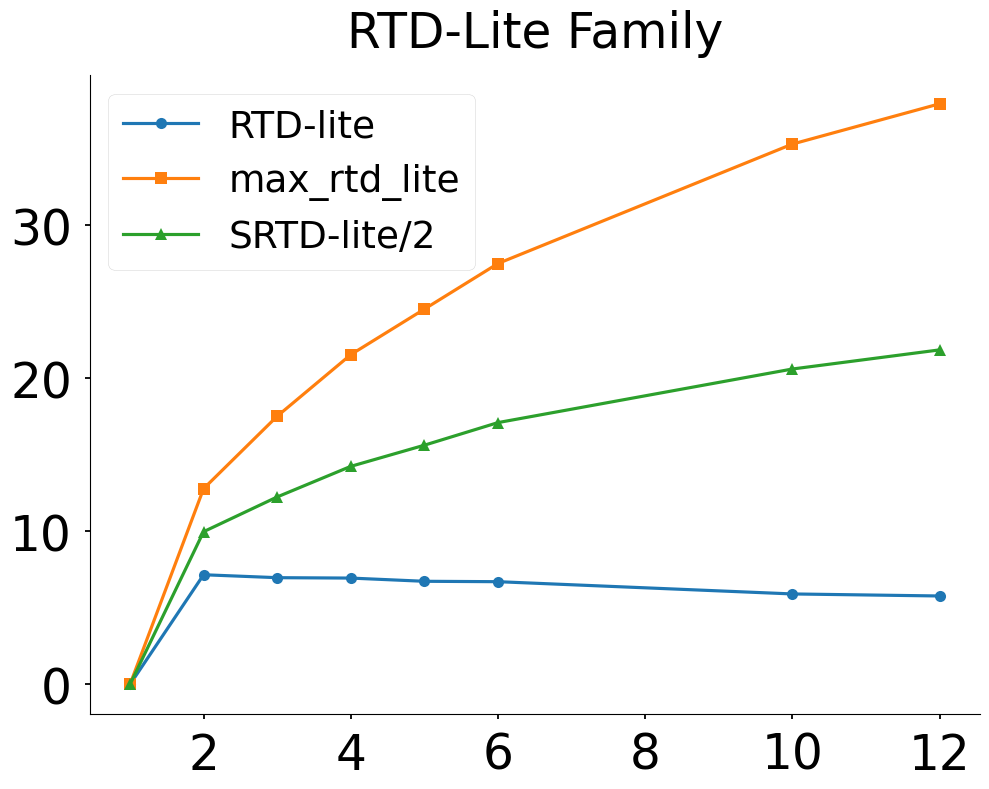}
            \caption{RTD-lite Family}
            \label{fig:clusters_rtd_lite}
        \end{subfigure}
        \hfill
        \begin{subfigure}[b]{0.23\textwidth}
            \centering
            \includegraphics[width=\textwidth]{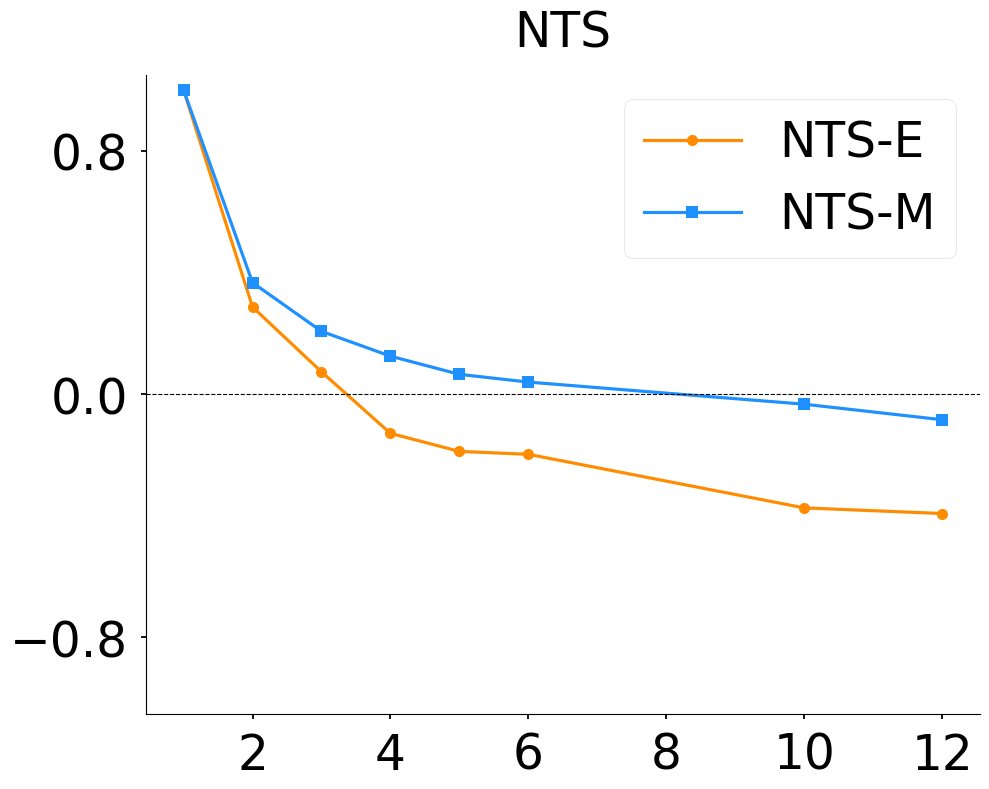}
            \caption{NTS (Ours)}
            \label{fig:clusters_nts}
        \end{subfigure}
        \hfill
        \begin{subfigure}[b]{0.23\textwidth}
            \centering
            \includegraphics[width=\textwidth]{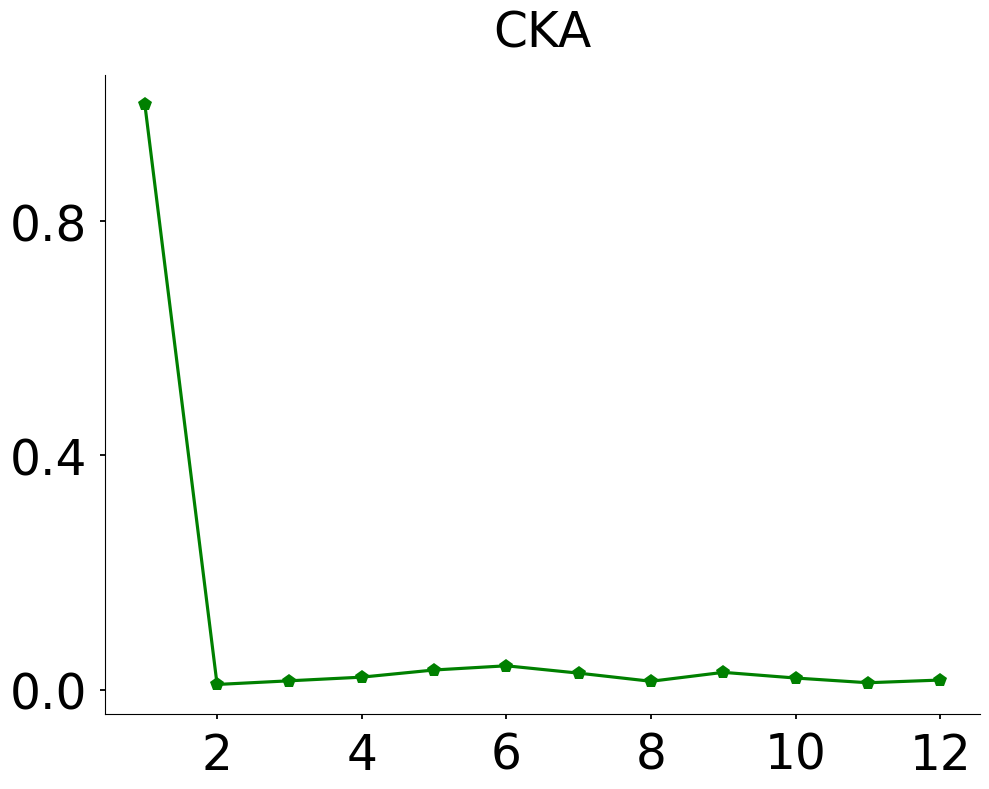}
            \caption{CKA Baseline}
            \label{fig:clusters_cka}
        \end{subfigure}
    \end{minipage}
    
    \vspace{8mm}

    \begin{minipage}{\textwidth}
        \centering
        \begin{subfigure}[t]{0.52\textwidth}
            \centering
            \small
            \caption{Theoretical Difference from SRTD}
            \label{tab:sub_diff}
            \setlength{\tabcolsep}{6pt}
            \begin{tabular}{@{}lccc@{}}
                \toprule
                \textbf{Clusters} & \textbf{$E_{1}$ (Percentage$_1$)} & \textbf{$E_{2}$ (Percentage$_2$)} \\ \midrule
                2  & 0.357 (3.16\%) & 0.000 (0.00\%) \\
                3  & 0.493 (3.32\%) & 0.013 (0.09\%) \\
                4  & 0.441 (2.47\%) & 0.061 (0.34\%) \\
                5  & 0.451 (2.26\%) & 0.039 (0.20\%) \\
                6  & 0.347 (1.57\%) & 0.060 (0.27\%) \\
                10 & 0.263 (0.95\%) & 0.043 (0.15\%) \\
                12 & 0.226 (0.76\%) & 0.046 (0.15\%) \\ \bottomrule
            \end{tabular}
        \end{subfigure}
        \hfill 
        \begin{subfigure}[t]{0.42\textwidth}
            \centering
            \small
            \caption{Asymmetry of RTD vs.\ Max-RTD}
            \label{tab:sub_asym}
            \setlength{\tabcolsep}{8pt}
            \begin{tabular}{@{}cc@{}}
                \toprule
                \textbf{Min-Asym} & \textbf{Max-Asym} \\ \midrule
                13.0976 & -12.3839 \\
                11.2554 & -10.2954 \\
                10.8131 & -10.0535 \\
                10.3320 & -9.5084  \\
                9.4315  & -8.8572  \\
                8.3074  & -7.8674  \\
                7.6888  & -7.3296  \\ \bottomrule
            \end{tabular}
        \end{subfigure}
    \end{minipage}

    \vspace{4mm}

    \caption{\textbf{Comprehensive analysis of the RTD framework on the synthetic Clusters dataset.} 
\textbf{(a--d)} Performance comparison across different measures; note the superior sensitivity of NTS and SRTD families compared to CKA and RTD-lite. 
\textbf{(e)} Evaluation of the small theoretical gap between SRTD and symmetrized directional variants, where $E_1$ and $E_2$ quantify the contribution of private topological features unique to individual filtrations. 
\textbf{(f)} Visualization of the strong asymmetry and inherent complementarity between RTD and Max-RTD. 
\textit{Definitions:} $E_1 = (\text{RTD}(w,\tilde{w}) + \text{Max-RTD}(w,\tilde{w}) - \text{SRTD})/2$; $\text{Min-Asym}=\text{RTD}(w,\tilde{w})-\text{RTD}(\tilde{w},w)$.}
\label{fig:clusters_full_analysis}
\end{figure*}

\paragraph{UMAP Embeddings Experiment.}
We test sensitivity to structural changes by generating a sequence of 2D UMAP embeddings~\citep{damrich2021umap} from the MNIST dataset \citep{lecun1998gradient}, varying the \texttt{n\_neighbors} parameter to control the trade-off between local and global structure. Pairwise comparisons of these embeddings (Figure \ref{fig:umap_heatmap}) demonstrate that our proposed methods, NTS and SRTD-lite, track these changes with a smooth, monotonic response. In contrast, the CKA baseline fails to capture this gradual evolution, highlighting the superior sensitivity of our topological measures.

\begin{figure}[htbp]
    \centering

    \begin{subfigure}[b]{0.41\textwidth}
        \centering
        \includegraphics[width=\textwidth]{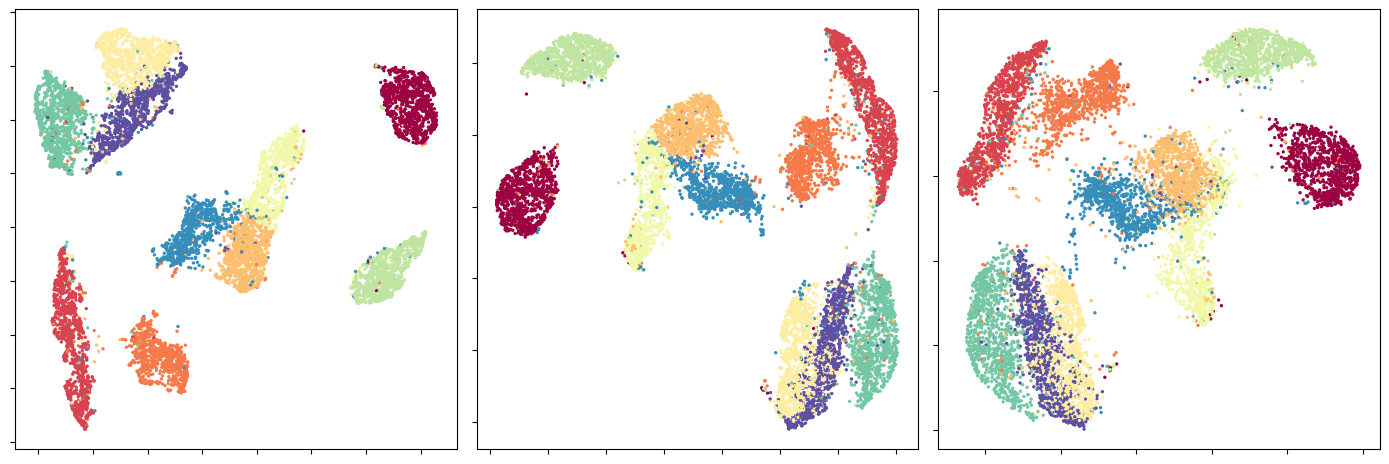}
        \caption{UMAP n\_neighbors=(10, 50, 200)}
        \label{fig:umap}
    \end{subfigure}%
    \hfill
    \begin{subfigure}[b]{0.57\textwidth}
        \centering
        \includegraphics[width=\textwidth]{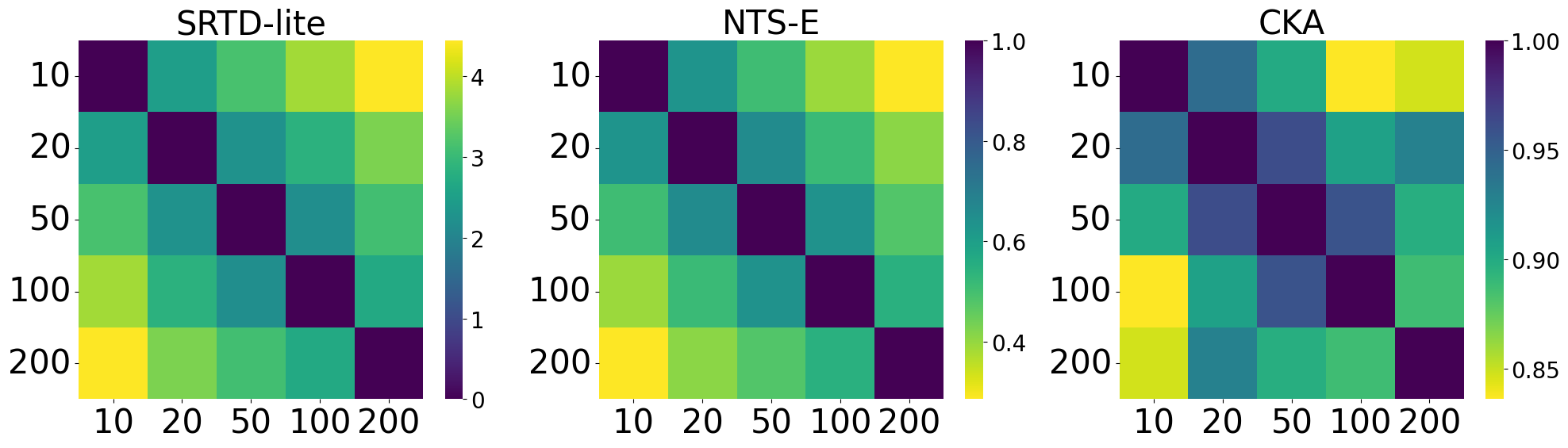}
        \caption{UMAP experiment heatmap}
        \label{fig:second_image}
    \end{subfigure}

    \caption{UMAP experiment}
    \label{fig:umap_heatmap}
\end{figure}

\subsection{Efficiency as an Optimization Loss}
\label{subsec:autoencoder}

We evaluate the practical utility of our divergence measures as loss terms for training an autoencoder, a task for which they are naturally suited. In this experiment, an autoencoder is trained to reduce the dimensionality of the F-MNIST and COIL-20 datasets to 16~\citep{xiao2017fashion,nene1996columbia}. It is crucial to note this is an \textbf{intra-family comparison}, designed to demonstrate that our proposed SRTD offers the best trade-off between performance and efficiency within the RTD class of methods. The results confirm that SRTD and SRTD-lite achieve top-tier performance on quality metrics while being faster than their predecessors. (Full results are provided in Appendix~\ref{apd:autoencoder}).

\subsection{Analyzing Structural Consistency and Functional Hierarchy}
\label{subsec:layer_analysis}

To evaluate our proposed methods in a practical deep learning context, we analyze the structural consistency of representations learned by an 8-layer \texttt{TinyCNN} (see Appendix \ref{apd:tinycnn_arch}). Our experimental design on CIFAR-10 \citep{krizhevsky2009learning} follows the evaluation protocols established in the foundational work by \citet{kornblith2019similarity}. We extract representations from 5,000 test images across ten models trained with different random seeds. 

The heatmaps in Figure \ref{fig:layer_heatmaps}, showing the average results over all 45 unique model pairs, provide two key insights into the behavior of these measures:

\begin{itemize}
\item \textbf{Normalized similarity landscapes.}
We observe that NTS and CKA (as well as RSA, see Appendix \ref{apd:rsa}) exhibit a consistent, interpretable near-diagonal organization: representations are most similar to their immediate neighbors, with similarity decaying smoothly with depth distance. This confirms that these metrics correctly capture the hierarchical evolution of features. In stark contrast, the RTD family (represented by SRTD-lite, see Appendix \ref{apd:rtdsp} for detail) fails to recover this monotonic trend. Instead, it produces irregular heatmaps containing counter-intuitive inversions---for instance, an early layer (e.g., Layer 1) may appear structurally closer to the final layer (Layer 8) than a middle layer (Layer 4) does. These anomalies suggest that the unnormalized nature of RTD-style divergences, coupled with the heuristic nature of the 0.9-quantile normalization, makes them ill-suited for fine-grained cross-layer comparison, thereby highlighting the necessity of the rank-based NTS.

\item \textbf{Complementary signal at functional transitions.}
Beyond producing a coherent landscape, topology-aware measures (NTS and the RTD family, e.g., SRTD-lite) additionally highlight a sharp transition at the final pooling layer. This aligns with the architectural shift from local feature extraction to global aggregation, suggesting that topological tools can provide a complementary diagnostic cue for identifying structural changes that may not be obvious from magnitude-based similarity alone.
\end{itemize}

\begin{figure*}[htbp]
    \centering
    \begin{subfigure}[b]{0.23\textwidth}
        \centering
        \includegraphics[width=\textwidth]{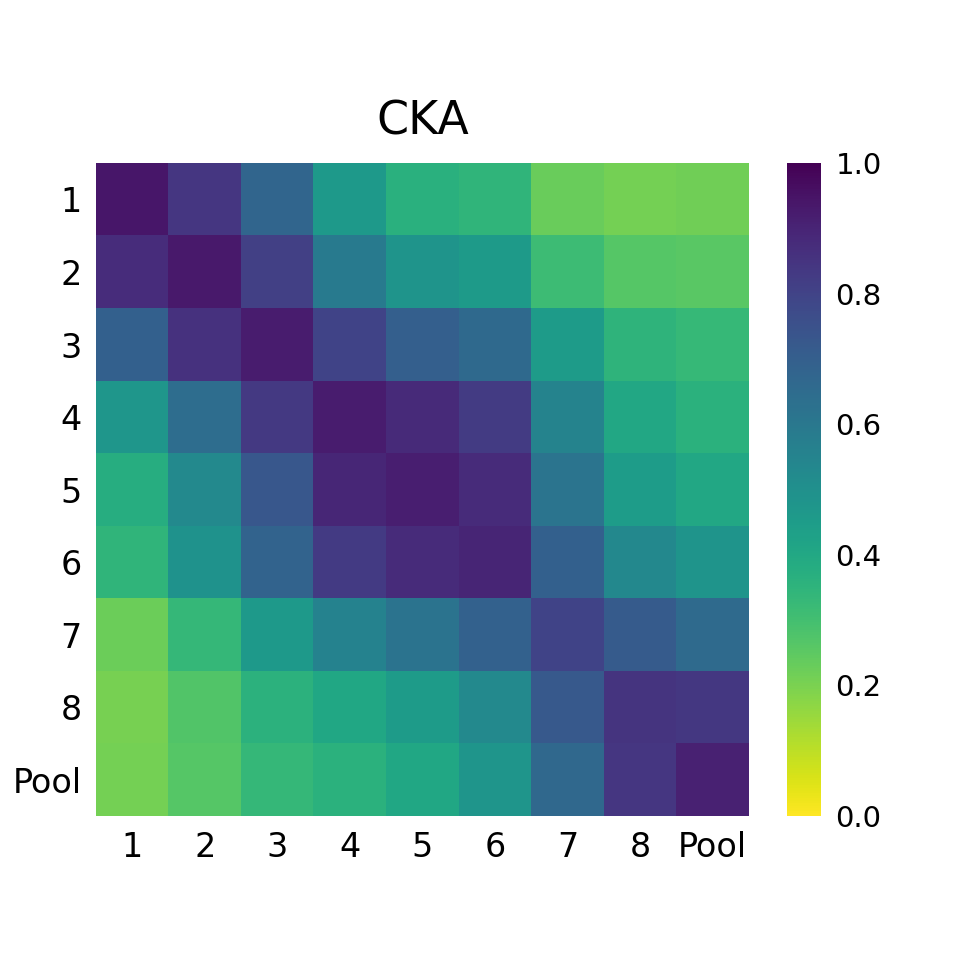}
        \caption{CKA (98.89\%)}
        \label{fig:heatmap_cka}
    \end{subfigure}
    \hfill
    \begin{subfigure}[b]{0.23\textwidth}
        \centering
        \includegraphics[width=\textwidth]{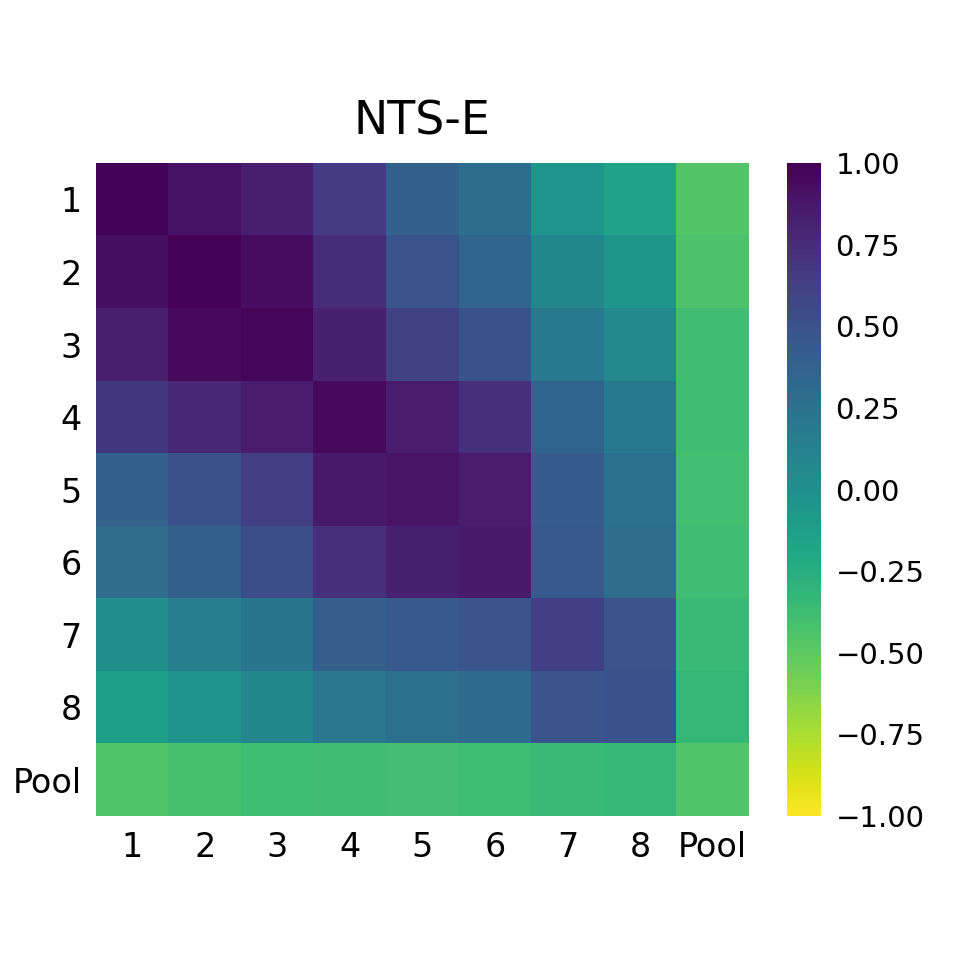}
        \caption{NTS-E (97.22\%)}
        \label{fig:heatmap_ntse}
    \end{subfigure}
    \hfill
    \begin{subfigure}[b]{0.23\textwidth}
        \centering
        \includegraphics[width=\textwidth]{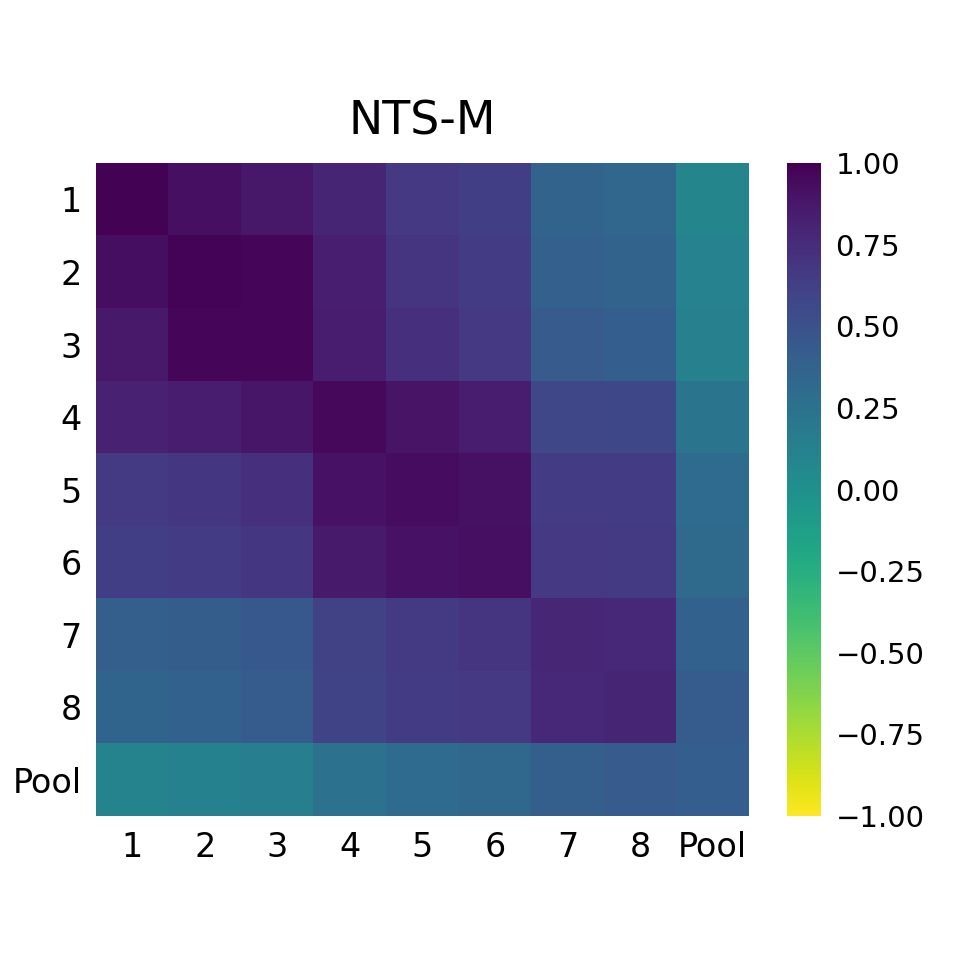}
        \caption{NTS-M (94.72\%)}
        \label{fig:heatmap_ntsm}
    \end{subfigure}
    \hfill
    \begin{subfigure}[b]{0.23\textwidth}
        \centering
        \includegraphics[width=\textwidth]{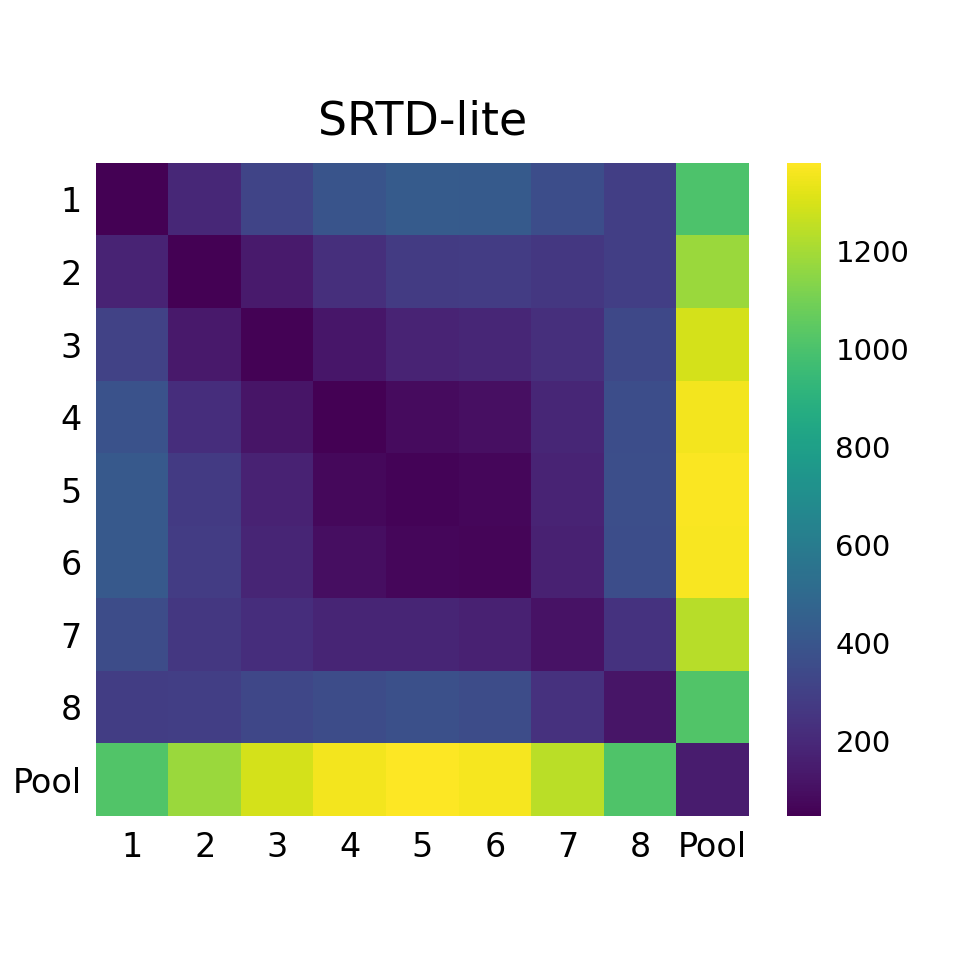}
        \caption{SRTD-lite (98.33\%)}
        \label{fig:heatmap_srtdlite}
    \end{subfigure}
    
    \caption{Average layer-wise similarity comparison over 45 pairs of trained \texttt{TinyCNN}s. While CKA (a) and NTS (b, c) both produce interpretable patterns within convolutional layers, only the topological measures (b--d) capture the sharp structural break at the final pooling layer---a functional shift missed by geometric analysis.}
    \label{fig:layer_heatmaps}
\end{figure*}

\subsection{Analysis of Large Language Model Representations}
\label{subsec:llm_analysis}

Finally, we extend our evaluation to Large Language Models (LLMs) to assess the utility of NTS. Our goal is to illustrate how topological similarity measures can enrich the analysis provided by established geometric methods like CKA. We investigate whether a topological lens can uncover structural nuances---such as family-specific hierarchies---that may be less apparent under standard geometric analysis due to phenomena like distance saturation.

Our methodology is closely adapted from REEF~\citep{zhang2024reef}, a recent study that established a robust protocol for fingerprinting and comparing LLM representations. REEF identified that certain datasets are particularly effective at eliciting discriminative features that highlight inter-model differences. Following their findings, we conduct our analysis on two such datasets: TruthfulQA~\citep{lin2021truthfulqa} and ToxiGen~\citep{hartvigsen2022toxigen}. For each dataset, we adopt the REEF protocol of extracting the last-token representation from every Transformer layer across 1,000 randomly sampled QA pairs.
\paragraph{Identifying Intra-Model Hierarchical Patterns.}
Our evaluation of intra-model layer similarity yields a compelling empirical finding regarding structural consistency. 
We observe that NTS uncovers highly consistent hierarchical ``fingerprints'' across models within the same family (Qwen, InternLM, Baichuan, and Llama). 
This aligns with the intuition that models sharing a common lineage should preserve their fundamental topological structure despite post-training refinements. 
In contrast, CKA does not consistently exhibit this family-wise regularity, as summarized in Figure~\ref{fig:llm_combined_analysis}. 
While it captures similar patterns within the InternLM family, it tends to diverge in others---often due to score saturation (e.g., Llama) or sensitivity to fine-tuning shifts (e.g., Qwen and Baichuan). 
This suggests that NTS provides a valuable, distinctive lens for characterizing the conserved functional hierarchy of LLMs, where geometric measures may be obscured by distance saturation.

\begin{figure*}[htbp]
    \centering

    \text{TruthfulQA Dataset}
    \vspace{1.5mm}
    \begin{minipage}[c]{0.05\textwidth} 
        \centering
        \rotatebox{90}{\large NTS-E}
    \end{minipage}%
    \begin{minipage}[c]{0.93\textwidth} 
        \includegraphics[width=\linewidth]{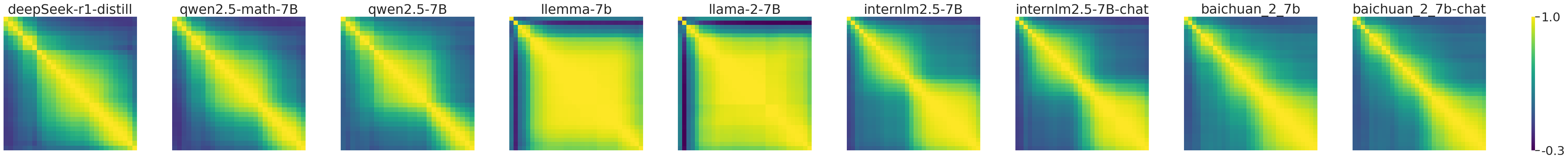}
    \end{minipage}
    \vspace{3mm}
    \begin{minipage}[c]{0.05\textwidth}
        \centering
        \rotatebox{90}{\large CKA}
    \end{minipage}%
    \begin{minipage}[c]{0.93\textwidth}
        \includegraphics[width=\linewidth]{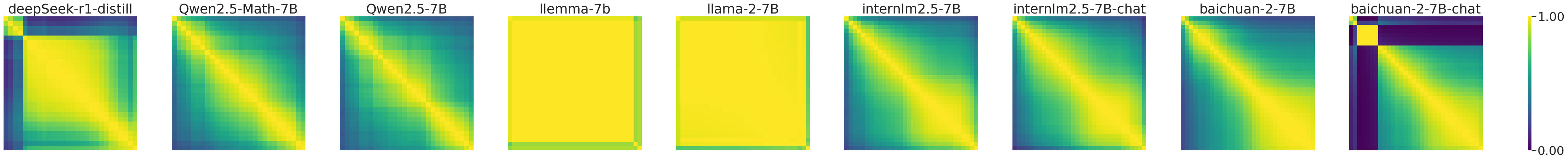}
    \end{minipage}

    \vspace{6mm}

    \text{ToxiGen Dataset}
    \vspace{1.5mm}
    \begin{minipage}[c]{0.05\textwidth} 
        \centering
        \rotatebox{90}{\large NTS-E}
    \end{minipage}%
    \begin{minipage}[c]{0.93\textwidth} 
        \includegraphics[width=\linewidth]{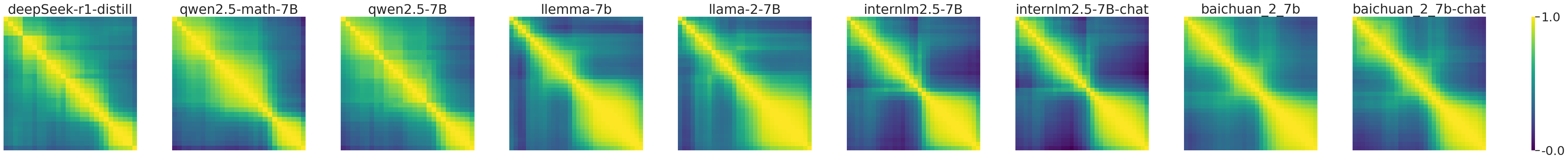}
    \end{minipage}
    \vspace{3mm}
    \begin{minipage}[c]{0.05\textwidth}
        \centering
        \rotatebox{90}{\large CKA}
    \end{minipage}%
    \begin{minipage}[c]{0.93\textwidth}
        \includegraphics[width=\linewidth]{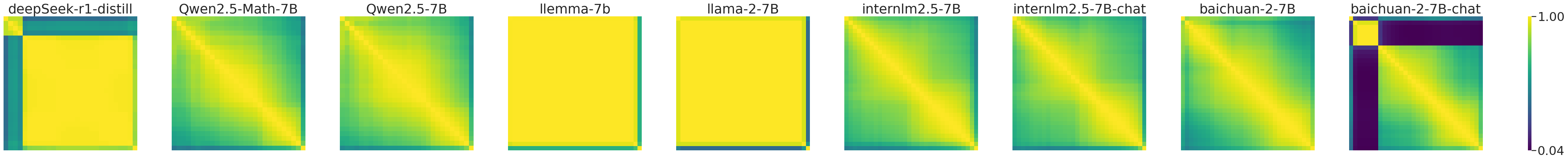}
    \end{minipage}

    \caption{Intra-model layer similarity for LLM families on the TruthfulQA (top half) and ToxiGen (bottom half) datasets. NTS (top row of each pair) consistently reveals structured hierarchical patterns. In contrast, CKA (bottom row of each pair) often produces saturated or inconsistent heatmaps, failing on most families except InternLM.}
    \label{fig:llm_combined_analysis}
\end{figure*}

\paragraph{Inter-Model Similarity Analysis}
Finally, we compare the ability of NTS and CKA to map the relationships between different LLM families. 
For this analysis, we focus on the last-token representation from the 6th Transformer layer, which empirically yielded the most discriminative features. 
Crucially, we apply Z-score normalization across the feature dimension before computing NTS to mitigate variance in individual activations. 
Detailed ablation studies on layer selection and normalization effects are provided in Appendix~\ref{app:llm-heatmaps}.

Following the methodology of REEF~\citep{zhang2024reef}, we present the results on the TruthfulQA dataset in Figure~\ref{fig:llm_inter_model_maps}. 
This visualization highlights a fundamental behavioral distinction between the two measures. 
While both metrics generally assign high scores to cross-family comparisons, CKA exhibits significant \textbf{score saturation}. 
As shown in Figure~\ref{fig:llm_cka_full}, CKA scores for most non-Llama model pairs cluster near the maximum ($>0.8$), severely limiting the ability to distinguish between distinct families such as Qwen, Mistral, and InternLM. 
In contrast, NTS scores (Figure~\ref{fig:llm_nts_full}) are less saturated and more broadly distributed, offering a sharper, more discriminative view of the model landscape.

A particularly illuminating case involves \texttt{DeepSeek-R1-Ds}~\citep{guo2025deepseek}, a model distilled from \texttt{Qwen2.5-Math-7B}~\citep{yang2024qwen2}. 
Here, CKA yields a surprisingly low similarity score between the distilled model and its parent \texttt{Qwen2.5} family, failing to reflect their known lineage. 
Conversely, NTS-E successfully identifies a high structural similarity between them. 
This suggests that by focusing on topological merge orders rather than pure geometric alignment, NTS captures robust structural signals that persist even when geometric measures are disrupted by distillation-induced shifts.

Collectively, these findings underscore a key insight: while advanced training techniques like distillation may significantly alter the geometric layout of representation spaces---thereby obscuring relationships under CKA---the underlying topological backbone often remains preserved. 
NTS successfully captures this conserved structural heritage, validating its potential as a reliable tool for mapping the evolutionary genealogy of Large Language Models in an increasingly complex ecosystem.
\begin{figure*}[!t]
    \centering
    \begin{subfigure}[b]{0.49\textwidth}
        \centering
        \includegraphics[width=\textwidth]{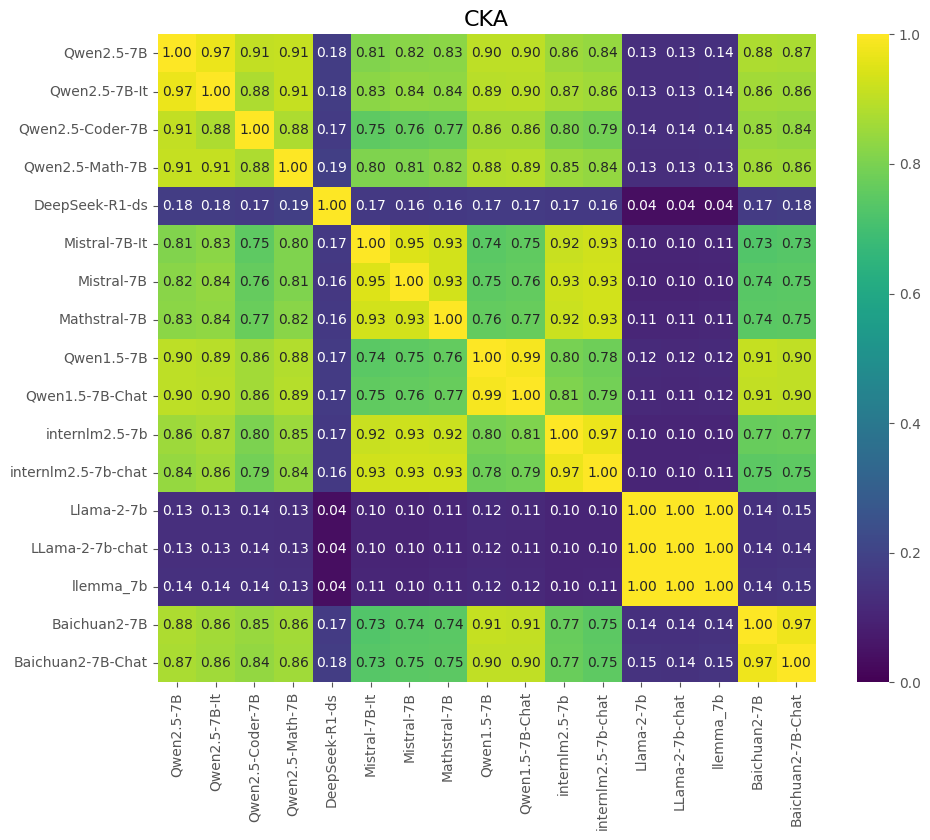}
        \caption{CKA Inter-Model Similarity}
        \label{fig:llm_cka_full}
    \end{subfigure}
    \hfill
    \begin{subfigure}[b]{0.49\textwidth}
        \centering
        \includegraphics[width=\textwidth]{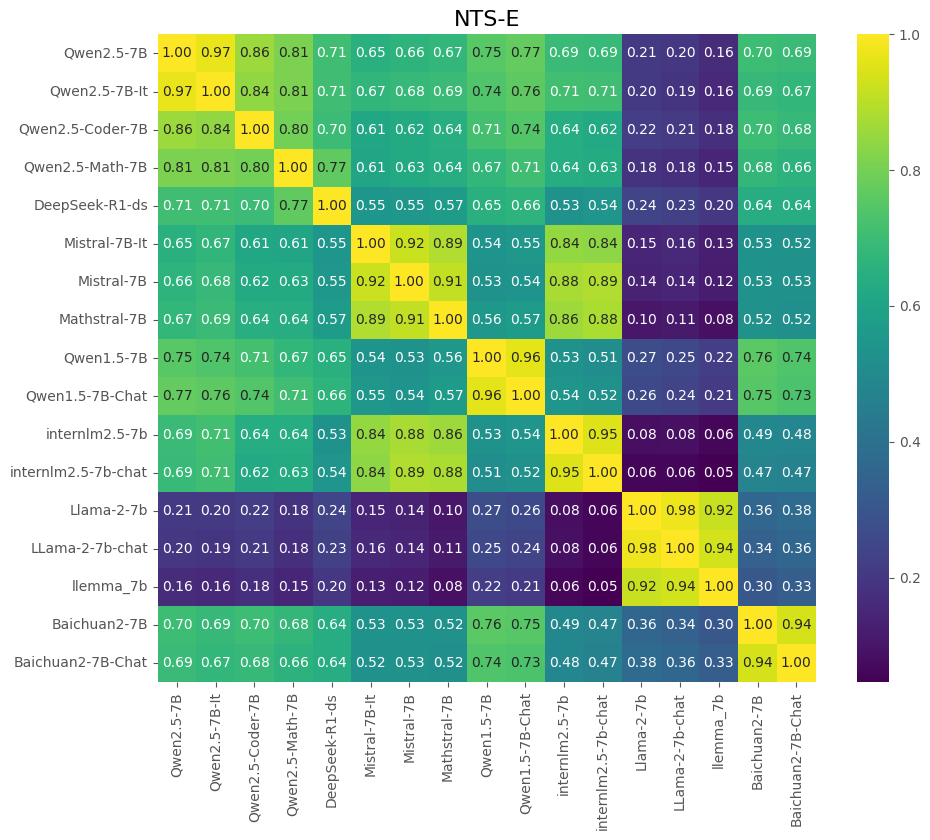}
        \caption{NTS-E Inter-Model Similarity}
        \label{fig:llm_nts_full}
    \end{subfigure}
    
    \caption{Inter-model similarity maps for 17 LLMs}
    \label{fig:llm_inter_model_maps}
\end{figure*}

\section{Computational Efficiency and Scalability}
\label{sec:complexity}

Our proposed toolkit is designed for both scalability and analytical power. A formal complexity analysis shows that while the full SRTD is computationally intensive, the core components of our framework are highly efficient. Both SRTD-lite and NTS-E operate in $O(n^2(\alpha_{\mathrm{uf}}(n)+d))$ time, where $\alpha_{\mathrm{uf}}(n)$ denotes the inverse Ackermann factor from union-find operations. This cost is primarily dominated by the pairwise distance calculation and the Minimum Spanning Tree (MST) construction.

To empirically validate this scalability, we conducted a runtime benchmark using representations from a TinyCNN trained on CIFAR-10. We varied the sample size $N$ from 5,000 to 30,000 and measured the end-to-end execution time. The results in Figure \ref{fig:scale_test} unequivocally show that NTS-E exhibits the best scalability, followed by SRTD-lite, with RTD-lite being the slowest due to its triple MST calculation.

\begin{figure}[htbp]
    \centering
    \begin{minipage}[c]{0.48\textwidth}
        This significant efficiency gain in NTS-E stems from two key factors:
        \begin{enumerate}
            \item \textbf{No Normalization Required:} Being a rank-based measure, NTS-E operates directly on raw distance matrices, bypassing the costly quantile calculation and matrix division required by RTD and SRTD.
            \item \textbf{Minimal Memory Footprint:} NTS-E avoids constructing dense auxiliary matrices (e.g., $\min(w, \tilde{w})$), reducing peak memory usage from $O(3N^2)$ to $O(2N^2)$, making it the most memory-efficient method.
        \end{enumerate}
    \end{minipage}
    \hfill
    \begin{minipage}[c]{0.48\textwidth}
        \centering
        \includegraphics[width=\linewidth]{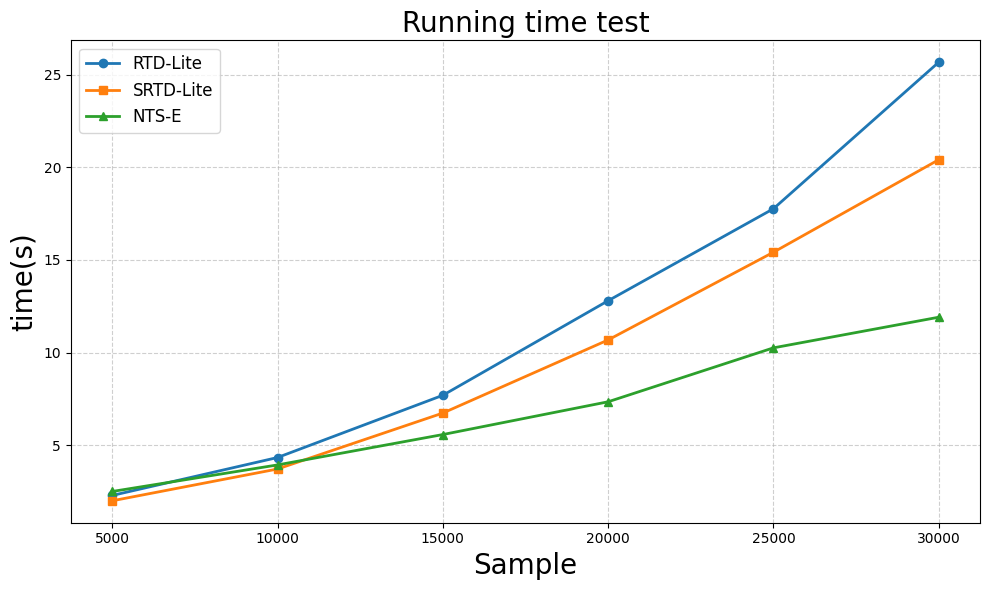} 
        \caption{Runtime comparison on CIFAR-10 representations with varying sample sizes.}
        \label{fig:scale_test}
    \end{minipage}
\end{figure}

\section{Conclusion}
\label{sec:conclusion}

In summary, we introduce a complementary topological toolkit. These methods offer a powerful choice for representation analysis. While NTS is ideal for obtaining a single, stable similarity score, SRTD-lite offers in-depth diagnostics (Table ~\ref{tab:barcodes_high}) and can serve as an effective loss term. To achieve high scalability, our lightweight variants rely primarily on 0-dimensional features. While this trades off the unit-level, higher-order topological insights captured by methods like feature entropy \citep{zhao2022quantitative}, it enables highly efficient macro-level benchmarking. Another limitation is that NTS, in its current form, is an analysis-only measure. Its non-differentiable nature prevents its use in direct model optimization. Therefore, a crucial avenue for future research is to develop a differentiable formulation of NTS, enabling it to guide representation learning.

\section*{Broader Impact Statement}
This work provides foundational theoretical tools for understanding neural network representations. By advancing representation analysis, it indirectly benefits model auditing and interpretability. As an abstract methodological study, it does not directly introduce negative societal impacts or dual-use risks.

\bibliography{main}
\bibliographystyle{tmlr}
\newpage
\appendix
\section{Definition and Algorithm}
\label{apd:df_alg}
\begin{definition}[Max-RTD]
\label{def:1}
For two point clouds $P$ and $P'$ with a one-to-one correspondence, the distance matrix of their auxiliary graph $\Gmaxprime$ is given by $M_{max}$ (Matrix~\ref{fig:m_max}). The sum of the lengths of the persistent homology barcodes of $\Gmaxprime$ is defined as $Max\textit{-}RTD(w,\wt)$. Its chain complex is homotopy equivalent to the mapping cone of the inclusion map $f': \C{\Ralpha{\Gw} \cap \Ralpha{\Gwt}} \to \C{\Ralpha{\Gw}}$.
\end{definition}
\subsection{SRTD Algorithm}
\begin{algorithm}[H]
    \caption{Symmetric Representation Topology Divergence (SRTD) Calculation}
    \label{alg:srtd}
    
    \KwIn{Pairwise distance matrices $w, \wt$}
    \KwOut{A set of divergence scores $\{SRTD_i\}_{i \ge 0}$ for each dimension $i$}
    $w_{norm}, \wt_{norm} \leftarrow$ Normalize $w, \wt$ by their 0.9 quantiles\;
    $w_{min} \leftarrow \min(w_{norm}, \wt_{norm})$\;
    $w_{max} \leftarrow \max(w_{norm}, \wt_{norm})$\;
    
    Construct the symmetric auxiliary matrix $M_{sym}$ using $w_{min}$ and $w_{max}$ (see Matrix~\ref{fig:m_sym})\;
    
    \For{each dimension of interest $i \in \{0, 1, \dots\}$}{
        Compute barcodes: $B_i \leftarrow \text{PersistentHomology}(M_{sym}, i)$\;
        Compute divergence score: $SRTD_i \leftarrow \sum_{(b, d) \in B_i} (d - b)$\;
    }
    
    \KwRet $\{SRTD_i\}_{i \ge 0}$\;
\end{algorithm}
\subsection{SRTD-lite Barcode Algorithm}
\begin{algorithm}[H]
    \caption{Computation of SRTD-lite Barcode}
    \label{alg:srtd_lite}

    \SetKwFunction{FMain}{SRTD-L-Barcode}
    \SetKwFunction{FMST}{MST}
    \SetKwFunction{FSort}{Sort}
    \SetKwFunction{Fcopy}{copy}
    
    \KwIn{Weight matrices $D_1, D_2$}
    \KwOut{A multiset of intervals (the SRTD-L-Barcode)}
    
    \SetKwProg{Proc}{procedure}{}{}
    \Proc{\FMain{$D_1, D_2$}}{
        $D'_1, D'_2 \leftarrow$ Normalize $D_1, D_2$ by their 0.9 quantiles\;
        $D_{\min} \leftarrow$ Element-wise minimum of $D'_1$ and $D'_2$\;
        $D_{\max} \leftarrow$ Element-wise maximum of $D'_1$ and $D'_2$\;
        $E_{\min} \leftarrow \FSort(\FMST(D_{\min}))$\;
        $E_{\max} \leftarrow \FSort(\FMST(D_{\max}))$\;
        $BarcodeSet \leftarrow$ []\;
        $SubTree \leftarrow$ Empty graph with $N$ vertices\;
        \ForEach{edge $e = (u, v)$ with weight $w_{birth}$ in $E_{\min}$}{
            \If{$u$ and $v$ are not connected in $SubTree$}{
                $TemporaryGraph \leftarrow \Fcopy(SubTree)$\;
                \ForEach{edge $e' = (u', v')$ with weight $w_{death}$ in $E_{\max}$}{
                    Add $e'$ to $TemporaryGraph$\;
                    \If{$u$ and $v$ are connected in $TemporaryGraph$}{
                        Add $(w_{birth}, w_{death})$ to $BarcodeSet$\;
                        \;
                    }
                }
                Add $e$ to $SubTree$\;
            }
        }
        \KwRet $BarcodeSet$\;
    }
\end{algorithm}

\section{Proofs}\label{apd:pf}
\subsection{Statement in Definition}
We first prove the following lemmas, which are stated in Definition~\ref{def:1} and Definition~\ref{def:2}. The construction and proof for this part refer to \cite{barannikov2021representation}. Let $A=\Ralpha{\Gw}$ and $B =\Ralpha{\Gwt}$:

\begin{lemma}
\label{lem:1}
There exists a specially constructed auxiliary graph $\Gmaxprime$ such that its chain complex is homotopy equivalent to the mapping cone $\Cone(f')$, where $f': \C{A \cap B} \rightarrow \C{A}$ is a chain map induced by the inclusion.
$$ \Ralpha{\Gmaxprime} \sim \Cone\left(R_{\alpha}(\Gmax) \rightarrow R_{\alpha}(\Gw)\right) $$
\end{lemma}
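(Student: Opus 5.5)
We follow the construction of \citet{barannikov2021representation} for RTD, replacing the pair $(\Gw,\Gmin)$ by $(\Gmax,\Gw)$. Let $\Gmaxprime$ be the weighted graph on $2n+1$ vertices $\{1,\dots,n\}\cup\{1',\dots,n'\}\cup\{O\}$ whose distance matrix is $M_{max}$ (Matrix~\ref{fig:m_max}). Concretely:
\begin{itemize}
\item the edges among the unprimed vertices carry weights $\wmax$;
\item the edges among the primed vertices carry weights $w$;
\item the edge $(i,j')$ carries weight $\wmax_{ij}$ for $i>j$, and $+\infty$ for $i<j$, which is the superscript-$+$ convention;
\item the edges $(i,i')$ carry weight $0$;
\item $O$ is joined to every unprimed vertex at weight $0$ and to every primed vertex at weight $\infty$.
\end{itemize}
Since $\wmax\ge w$ entrywise, we have $R_\alpha(\Gmax)\subseteq R_\alpha(\Gw)$ at each scale. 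This inclusion is exactly the map $A\cap B\hookrightarrow A$, with $A=\Ralpha{\Gw}$ and $B=\Ralpha{\Gwt}$.

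\textbf{Step 1: identify the subcomplexes.} At a fixed threshold $\alpha$, we identify the pieces of $\Ralpha{\Gmaxprime}$:
\begin{itemize}
\item the full subcomplex on the primed vertices is a copy of $\Ralpha{\Gw}$;
\item the full subcomplex on the unprimed vertices is a copy of $\Ralpha{\Gmax}$;
\item the cone vertex $O$ joined to the unprimed block yields the cone $C(\Ralpha{\Gmax})$, which is contractible.
\end{itemize}

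\textbf{Step 2: the mixed simplices.} A simplex with vertex set $\{i_0,\dots,i_k\}\cup\{j'_0,\dots,j'_l\}$ is present exactly when all unprimed pairwise weights $\wmax$ are at most $\alpha$, all primed pairwise weights $w$ are at most $\alpha$, and all cross weights are at most $\alpha$. For the cross weights we use $\wmax_{ij}\ge w_{ij}$ together with the zero diagonal $(i,i')$ edges. Following the RTD argument, we order the vertices so that each unprimed $i$ precedes its twin $i'$. We then show that the cross simplices assemble into the prism (cylinder) $\Ralpha{\Gmax}\times[0,1]$ mapped into the primed copy via the inclusion. The simplices are exactly those of the standard triangulation of a mapping cylinder: $[i_0,\dots,i_p,i'_p,\dots,i'_k]$ with $i_0<\dots<i_k$ spanning a simplex of $\Ralpha{\Gmax}$.

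The lower-triangular $\infty$ pattern in $\wmax^+$ is precisely what selects this staircase triangulation and excludes the other cross simplices. Hence the whole complex deformation-retracts onto the mapping cylinder $\mathrm{Cyl}(f)$ with the cone on its top end attached. This space is the mapping cone $\Cone\bigl(R_\alpha(\Gmax)\to R_\alpha(\Gw)\bigr)$.

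\textbf{Step 3: pass to chains.} The simplicial chain complex of a mapping cone of an inclusion is chain homotopy equivalent to the algebraic mapping cone $\Cone(f')$ with $f':\C{A\cap B}\to\C{A}$; this is a standard fact. All identifications in Steps 1–2 are natural in $\alpha$, because they are inclusions of subcomplexes defined by the same threshold. The homotopy equivalence is therefore one of filtered complexes, and it gives the stated relation for $\Ralpha{\Gmaxprime}$.

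\textbf{Main obstacle.} The delicate step is Step 2: verifying that, for every $\alpha$, the mixed simplices allowed by $M_{max}$ are exactly those of the prism triangulation and no extra ones. This requires checking that a cross edge $(i,j')$ with $i>j$ appears at $\alpha\ge\wmax_{ij}$ precisely when the corresponding unprimed edge does, so that no simplex appears before its image under the inclusion. We will do this by reducing to the RTD case, which is proved verbatim in \cite{barannikov2021representation} with $(w,\wmin)$ in place of $(\wmax,w)$. That proof uses only the facts that the second matrix is entrywise at most the first and that both are symmetric with zero diagonal, and both hypotheses hold here.
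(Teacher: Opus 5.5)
\textbf{The gap is in Step~2.} The Vietoris--Rips complex of $M_{max}$ is in general strictly larger than the staircase mapping cylinder plus the cone. So the claim of ``no extra ones'' is false and cannot be verified.

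The reason is structural. A Rips complex is a flag complex. The mapping cylinder of $L=R_\alpha(\mathcal{G}^{\max(w,\tilde w)})\hookrightarrow K=R_\alpha(\mathcal{G}^{w})$ is not a flag complex: its primed edges come from $K$, but its mixed simplices must have index set in $L$. Your own membership criterion shows the mismatch. A mixed simplex with unprimed part $\{i_1,\dots,i_k\}$ and primed part $\{j'_1,\dots,j'_m\}$ (with $i_k\le j_1$) needs $\max(w,\tilde w)\le\alpha$ only on unprimed and cross pairs. On primed pairs it needs merely $w\le\alpha$.

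For a concrete case, take $i<j<l$ with $w_{ij}=\tilde w_{ij}=w_{il}=\tilde w_{il}=w_{jl}=1$ and $\tilde w_{jl}>1$. At $\alpha=1$ the simplices $\{i,j',l'\}$ and $\{i,i',j',l'\}$ are present, although $\{i,j,l\}\notin L$. So they lie in no prism over a simplex of $L$.

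Checking cross edges one at a time, as your ``main obstacle'' paragraph proposes, cannot detect this. The asserted deformation retraction onto $\mathrm{Cyl}(f)\cup C(L)$ is therefore unproved. Deferring to RTD does not help either. The RTD argument, like this paper's, never identifies the complex with the prism, since the same kind of extra simplices occur there; it is a chain homotopy equivalence.

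Separately, your orientation of the $+$ convention is reversed. In $M_{max}$ the edge $(i,j')$ is finite iff $i\le j$. That is what your staircase simplices $[i_0,\dots,i_p,i'_p,\dots,i'_k]$ require; under your stated convention they would be absent.

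\textbf{How the paper handles it.} The paper's proof deals with exactly these simplices. It classifies all simplices blockwise into three kinds:
\begin{itemize}
\item staircase ones $A_{i_1}\dots A_{i_k}A'_{i_k}\dots A'_{i_n}$;
\item non-staircase ones $A_{i_1}\dots A_{i_k}A'_{i_{k+1}}\dots A'_{i_n}$;
\item cone ones $OA_{i_1}\dots A_{i_n}$.
\end{itemize}
It then defines $\psi'$ and $\tilde\psi'$ between $C_*(A\cap B)[-1]\oplus C_*(A)$ and the auxiliary complex. The homotopy is $H(A_{i_1}\dots A_{i_k}A'_{i_{k+1}}\dots A'_{i_n})=\sum_{l\le k}A_{i_1}\dots A_{i_l}A'_{i_l}\dots A'_{i_n}$. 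Its terms are valid simplices even when the index set lies only in $K$, so $H$ absorbs precisely the simplices your argument overlooks.

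\textbf{How to repair your route.} If you want to keep the geometric approach, replace Step~2 with a collapse.
\begin{itemize}
\item Call a mixed simplex \emph{extra} if its index set is not in $L$.
\item Pair each extra $\rho$ without a repeated index with $\rho\cup\{i_k'\}$, where $i_k$ is its largest unprimed index.
\end{itemize}
This matching works for the following reasons.
\begin{itemize}
\item The partner $\rho\cup\{i_k'\}$ lies in the complex, because each of its new edges is dominated by an edge of $\rho$.
\item It has the same index set as $\rho$, so it is also extra.
\item Every extra simplex gets matched: an extra simplex with a repeated index must contain another primed vertex, so removing $i_k'$ leaves an extra mixed simplex.
\item The matching is acyclic, since a gradient path can continue only by deleting $i_k$, which lowers the number of unprimed vertices.
\end{itemize}
Hence the complex collapses onto $C(L)\cup\mathrm{Cyl}\cup K'$. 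These subcomplexes increase with $\alpha$, so the inclusions are natural in the filtration, and your Step~3 then finishes the proof.
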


\begin{lemma}
\label{lem:2}
Similarly, there exists a specially constructed auxiliary graph $\Gsymprime$ such that its chain complex is homotopy equivalent to the mapping cone $\Cone(f')$, where $f': \C{A \cap B} \rightarrow \C{A\cup B}$ is a chain map induced by the inclusion.
$$ \Ralpha{\Gsymprime} \sim \Cone\left(R_{\alpha}(\Gmax) \rightarrow R_{\alpha}(\Gmin)\right) $$
\end{lemma}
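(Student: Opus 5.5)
We follow the construction of \cite{barannikov2021representation} for $M_{min}$, adapting it to $M_{sym}$. Index the vertices of $\Gsymprime$ as three groups: $\{1,\dots,n\}$ (first block), $\{1',\dots,n'\}$ (second block), and an extra vertex $O$ (third block).

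\textbf{The filtered complex.} The matrix $M_{sym}$ gives the following edge lengths.
\begin{itemize}
\item Inside the first block, edges carry $\max(w,\wt)$.
\item Inside the second block, edges carry $\min(w,\wt)$.
\item The edge $i$--$i'$ has length $0$ (the diagonal of $\wmax^+$, which is zero since $w_{ii}=\wt_{ii}=0$).
\item For $i<j$, the edge $i'$--$j$ (lower-triangular part of $\wmax^+$) has length $\max(w,\wt)_{ij}$; the upper-triangular cross entries are $\infty$.
\item $O$ is joined at $0$ to every vertex of the first block and at $\infty$ to the second block.
\end{itemize}
Hence at threshold $\alpha$ the Vietoris--Rips complex $\Ralpha{\Gsymprime}$ has three pieces: the cone $O * \Ralpha{\Gmax}$ on the first copy, the complex $\Ralpha{\Gmin}$ on the primed copy, and cross simplices joining them.

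\textbf{Step 1: the cross simplices form a cylinder.} I would show that the cross simplices together with the two copies form a subcomplex $Z_\alpha$ that deformation retracts onto the primed copy $\Ralpha{\Gmin}$. The retraction sends $i\mapsto i'$. This is a well-defined simplicial map because any simplex with vertex set $I\cup J'$ in the filtration has all pairwise $\min$-distances at most $\alpha$, using $\min\le\max$ and the zero-length edges $i$--$i'$. The ordering convention (using only the lower triangle of $\wmax^+$) should turn $Z_\alpha$ into a simplicial mapping cylinder of the inclusion $\Ralpha{\Gmax}\hookrightarrow\Ralpha{\Gmin}$, in the standard "staircase" subdivision of $\Delta\times I$. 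The point of the triangular $\infty$ pattern is exactly to select, for each simplex $\{i_0<\dots<i_k\}$, the prisms $\{i_0,\dots,i_m,i'_m,\dots,i'_k\}$. I expect this to be the main obstacle: one has to check that no other cross simplex appears, and that all the staircase simplices appear exactly at filtration value $\max_{\text{pairs}}$ (max-weight on the unprimed part, min-weight on the primed part). This needs a careful index check. I would copy Barannikov et al.'s argument for $M_{min}$ verbatim, replacing $w$ by $\wmax$ on the first block; the structure of the proof only used that the first-block weights dominate the second-block weights, which holds here because $\max\ge\min$.

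\textbf{Step 2: coning off.} Adding $O$ with zero edges to the first block cones off the top end $\Ralpha{\Gmax}$ of the mapping cylinder. The union of a mapping cylinder and a cone on its source is the topological mapping cone, so $\Ralpha{\Gsymprime}\simeq \Cone(\Ralpha{\Gmax}\to\Ralpha{\Gmin})$. On chains this gives $\C{\Ralpha{\Gsymprime}}\simeq \Cone(f')$ with $f':\C{A\cap B}\to\C{A\cup B}$, using $\Ralpha{\Gmax}=A\cap B$ and $\Ralpha{\Gmin}=A\cup B$ from the notation section.

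\textbf{Step 3: naturality.} Finally, all retractions and homotopies are built from the combinatorial maps above, so they commute with the inclusions for $\alpha\le\alpha'$. The equivalence is therefore one of filtered complexes (persistence modules), which is what is needed to identify barcodes. Equivalently, the reduced homology of the cone is $H_*(A\cup B, A\cap B)$ functorially in $\alpha$.
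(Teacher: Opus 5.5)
\textbf{The gap is in Step 1, at exactly the check you flag.} You want $Z_\alpha$ (the part of $\Ralpha{\Gsymprime}$ not containing $O$) to be the staircase mapping cylinder of $\Ralpha{\Gmax}\hookrightarrow\Ralpha{\Gmin}$. That would mean every cross simplex is a face of a prism $\{i_0,\dots,i_m,i'_m,\dots,i'_k\}$ over some $\{i_0<\dots<i_k\}\in A\cap B$. This is false.

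A cross simplex $I\cup J'$ (with $\max I\le\min J$) needs $\max(w,\wt)\le\alpha$ only on pairs inside $I$ and on cross pairs. Pairs inside $J$ only need $\min(w,\wt)\le\alpha$. For instance, suppose $\max(w,\wt)_{12},\max(w,\wt)_{13},\min(w,\wt)_{23}\le\alpha<\max(w,\wt)_{23}$. Then $\{1,2',3'\}$ and $\{1,1',2',3'\}$ lie in $Z_\alpha$, although $\{1,2,3\}\notin A\cap B$. Put differently, a prism simplex over $\sigma$ can enter the filtration before $\sigma$ enters $A\cap B$. So $Z_\alpha$ is strictly larger than the cylinder.

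You then have no argument left that your retraction $r\colon i\mapsto i'$ is a homotopy equivalence. Contiguity of $\iota\circ r$ with the identity also fails: $I\cup I'\cup J'$ is not a simplex once $I$ contains $i_1<i_2$, because the cross entry for $\{i_2,i_1'\}$ is $\infty$. Since $Z_\alpha\simeq\Ralpha{\Gmin}$ under $\Ralpha{\Gmax}$ is the only nontrivial input to your Step 2, the proof is incomplete as planned. (Minor: the finite cross entries are $\{i',j\}$ with $j\le i$, not $i<j$; your staircase already uses this orientation.)

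\textbf{Repair.} The conclusion of Step 1 is nevertheless true, and you can prove it without the cylinder identification. Pair each $\sigma=I\cup J'$ with $I\neq\emptyset$ and $m=\max I\notin J$ with $\sigma\cup\{m'\}$. The new edges have lengths $0$, $\max(w,\wt)_{im}$ (already an edge of $\sigma$), and $\min(w,\wt)_{mj}\le\max(w,\wt)_{mj}$ (already realised by the cross edge $\{m,j'\}$). So both simplices enter at the same $\alpha$. This is a perfect matching of all simplices with $I\neq\emptyset$. It is acyclic, because along a gradient path the only upward-matched facet is obtained by deleting $\max I$, so $|I|$ strictly decreases. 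Hence $Z_\alpha$ collapses onto the primed copy of $\Ralpha{\Gmin}$, naturally in $\alpha$. Then $r$ is a homotopy equivalence restricting to the inclusion on the top copy, and your Steps 2--3 go through.

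\textbf{Comparison with the paper.} The paper's route is the chain-level counterpart. It never identifies a cylinder; instead it writes explicit maps $\psi'$, $\tilde\psi'$ and a prism homotopy $H(A_{i_1}\dots A_{i_k}A'_{i_{k+1}}\dots A'_{i_n})=\sum_{l=1}^{k}A_{i_1}\dots A_{i_l}A'_{i_l}\dots A'_{i_n}$. This $H$ is defined on \emph{every} mixed simplex, which is what absorbs the extra simplices above. Your topological picture gives a cleaner conceptual reason (cylinder plus cone) and makes naturality in $\alpha$ transparent. Note also that on non-$O$ simplices the correct backward map is your $r_\#$. The paper's choice $\tilde\psi'=0$ on unprimed and mixed simplices does not commute with $\partial$: on the edge $A_1A'_2$, the boundary $A'_2-A_1$ is sent to $A_2\neq 0$.
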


\begin{proof}

The mapping cone we are interested in is constructed from the direct sum of the following chain complexes:
$$ \Cone(f') = \C{A \cap B}[-1] \oplus \C{A} $$

Following the construction from the RTD paper, we can propose two auxiliary graph schemes:
The vertex set of the auxiliary graph $\Gmaxprime$ is composed of the original vertices $v'_i$, mirrored vertices $v_i$, and a special vertex $O$. Its distance rules are defined as follows:
 $d'_{v_i v_j} = \max(w_{ij}, \tilde{w}_{ij})$,$d'_{v'_i v'_j} = w_{ij}$,$d'_{v_i v'_i} = 0$,$d'_{Ov_i} = 0$, $d'_{Ov'_i} = +\infty$,$d'_{v_i v'_j}=\max(w_{ij},\tilde{w}_{ij})$

The vertex set of the auxiliary graph $\Gsymprime$ is composed of twice the number of original vertices and $O$.
 $d'_{v_i v_j} = \max(w_{ij}, \tilde{w}_{ij})$,$d'_{v'_i v'_j} = \min(w_{ij}, \tilde{w}_{ij})$,$d'_{v_i v'_i} = 0$
,$d'_{Ov_i} = 0$, $d'_{Ov'_i} = +\infty$,$d'_{v_i v'_j}=\max(w_{ij},\tilde{w}_{ij})$

For the auxiliary graph $R_\alpha(\Gmaxprime)$, there are three types of simplices:
\begin{itemize}
    \item $A_{i_1}\dots A_{i_k}A'_{i_k}\dots A'_{i_n}$, where $\max(w_{A_{i_r}A_{i_s}}, \tilde{w}_{A_{i_r}A_{i_s}})\le\alpha$ for $r\le k$, and $w_{A_{i_r}A_{i_s}}\le\alpha$ for $r,s\ge k$.
    \item $A_{i_1}\dots A_{i_k}A'_{i_{k+1}}\dots A'_{i_n}$, where $\max(w_{A_{i_r}A_{i_s}}, \tilde{w}_{A_{i_r}A_{i_s}})\le\alpha$ for $r\le k$, and $w_{A_{i_r}A_{i_s}}\le\alpha$ for $r,s\ge k+1$.
    \item $OA_{i_1}A_{i_2}\dots A_{i_n}$, where $\max(w_{A_{i_r}A_{i_s}}, \tilde{w}_{A_{i_r}A_{i_s}})\le\alpha$.
\end{itemize}

\textbf{Forward Map}
$$ \psi': \Cone(f') \rightarrow \Ralpha{\Gmaxprime} $$
\begin{itemize}
    \item For $c \in \C{A \cap B}[-1]$ (of the form $A_{i_1}\dots A_{i_n}[-1]$):
    $$ \psi'(c) = OA_{i_1}\dots A_{i_n} + \sum_{k=1}^{n}A_{i_1}\dots A_{i_k}A'_{i_k}\dots A'_{i_n} $$
    \item For $a \in \C{A}$ (of the form $A_{i_1}\dots A_{i_n}$):
    $$ \psi'(a) = A'_{i_1}\dots A'_{i_n} $$
\end{itemize}

\textbf{Backward Map}
$$ \tilde{\psi}': \Ralpha{\Gmaxprime} \rightarrow \Cone(f') $$
\begin{itemize}
    \item $\tilde{\psi}'(OA_{i_1}\dots A_{i_n}) = A_{i_1}\dots A_{i_n}[-1]$
    \item $\tilde{\psi}'(A'_{i_1}\dots A'_{i_n}) = A_{i_1}\dots A_{i_n}$
    \item $\tilde{\psi}'(\Delta) = 0$ (for all other types of simplices $\Delta$)
\end{itemize}

\textbf{Homotopy Operator H}
For the second type of simplex:
$$H:A_{i_1}\dots A_{i_k}A'_{i_{k+1}}\dots A'_{i_n}\rightarrow\sum_{l=1}^kA_{i_1}\dots A_{i_l}A'_{i_l}\dots A'_{i_n}, 1\le k\le n$$
For all other simplices:
$$H(\Delta)=0$$

Therefore, $\tilde{\psi}' \circ \psi' = \text{Id}$ and $\psi' \circ \tilde{\psi}' - \text{Id} = H\partial - \partial H$. This proves \ref{lem:1}, and \ref{lem:2} can be proven similarly.
\end{proof}
\subsection{Proof of Theorem~\ref{thm:betti_relation}}
Let's prove Theorem~\ref{thm:betti_relation}.
To prove the theorem, we just need to prove the following lemma:
\begin{lemma}
\label{lem:betti_relation1}
For any dimension $i$, the Betti numbers of the three auxiliary graphs satisfy the following relation:
$$ \beta_i^{\min}(\alpha) + \beta_i^{\max}(\alpha) - \beta_i^{\text{sym}}(\alpha) = \dim(\text{ker}(\gamma_{i})) + \dim(\text{ker}(\gamma_{i-1})) $$
\end{lemma}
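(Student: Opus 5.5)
We identify each auxiliary Betti number with a relative homology group and then use rank–nullity along the long exact sequence of the triple. By Lemma~\ref{lem:1}, Lemma~\ref{lem:2} and the corresponding statement for $M_{min}$ from \cite{barannikov2021representation}, the complex of each auxiliary graph at scale $\alpha$ is homotopy equivalent to a mapping cone of an inclusion. The homology of the cone of an inclusion $K\hookrightarrow L$ is the relative homology $H_*(L,K)$; we take the degree shift convention fixed by the paper's indexing of $RTD_i$. This gives
\[
\beta_i^{\max}(\alpha)=\dim H_i(\Ralpha{\Gw},\Ralpha{\Gmax}),
\]
\[
\beta_i^{\text{sym}}(\alpha)=\dim H_i(\Ralpha{\Gmin},\Ralpha{\Gmax}),
\]
\[
\beta_i^{\min}(\alpha)=\dim H_i(\Ralpha{\Gmin},\Ralpha{\Gw}).
\]
These are exactly the three groups in the long exact sequence displayed before Theorem~\ref{thm:betti_relation}. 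That sequence is the long exact sequence of the triple $\Ralpha{\Gmax}\subseteq\Ralpha{\Gw}\subseteq\Ralpha{\Gmin}$, which exists because intersection is contained in $A$ and $A$ is contained in the union. All groups are finite-dimensional, since the complexes are finite.

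Write $X_i,Y_i,Z_i$ for the three groups above in the order max, sym, min. Exactness at each spot gives the following identities:
\[
\dim Y_i=\operatorname{rank}\gamma_i+\operatorname{rank}\beta_i ,
\]
\[
\dim Z_i=\operatorname{rank}\beta_i+\operatorname{rank}\delta_i ,
\]
\[
\operatorname{rank}\delta_i=\dim\ker\gamma_{i-1} .
\]
Also $\dim X_i=\dim\ker\gamma_i+\operatorname{rank}\gamma_i$. Hence
\[
\dim X_i+\dim Z_i-\dim Y_i=\dim\ker\gamma_i+\dim\ker\gamma_{i-1},
\]
which is Lemma~\ref{lem:betti_relation1}.

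To get Theorem~\ref{thm:betti_relation}, we integrate over $\alpha\in[0,\infty)$. The sum of bar lengths in degree $i$ equals $\int_0^\infty\beta_i(\alpha)\,d\alpha$, since each bar $[b,d)$ contributes $1$ to $\beta_i(\alpha)$ exactly for $\alpha\in[b,d)$. All quantities are piecewise constant with finitely many jumps. The integrals are finite because every relative group vanishes for $\alpha$ larger than $\max_{ij}\max(w_{ij},\wt_{ij})$, where all three complexes are full simplices. Linearity of the integral then gives the claim.

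The main obstacle is the first step: matching degrees and the orientation of the cones with the relative groups. The $[-1]$ shift in $\Cone(f')$ means the bar counted as $RTD_i$ may correspond to $H_{i+1}$ of the pair, and the $\Gmin$ auxiliary matrix $M_{min}$ puts $w$ in the top-left block rather than $\max(w,\wt)$. I would check this against the $RTD$ paper's convention that $H_i$ of the cone equals $H_i(L,K)$ up to its indexing. I would also verify that the index shift is the same for all three divergences, so that the relation is preserved. A secondary point is to treat the $\dim\ker\gamma_{i-1}$ term at $i=0$ as zero, using $\gamma_{-1}=0$.
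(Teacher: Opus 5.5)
Your proposal is correct and follows essentially the same route as the paper: you identify the three auxiliary Betti numbers with the relative homology groups of the triple $R_\alpha(\mathcal{G}^{\max(w,\tilde{w})}) \subseteq R_\alpha(\mathcal{G}^{w}) \subseteq R_\alpha(\mathcal{G}^{\min(w,\tilde{w})})$, and then do exactness/rank--nullity bookkeeping along its long exact sequence. The only cosmetic difference is that the paper writes $\beta_i^{\max}(\alpha)=\dim\operatorname{im}\delta_{i+1}+\dim\operatorname{im}\gamma_i$ and converts $\dim\operatorname{im}\delta_{i+1}$ to $\dim\ker\gamma_i$ at the end, whereas you apply rank--nullity to $\gamma_i$ directly.
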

\begin{proof}
We have the following inclusion of simplicial complexes:
$$ R_{\alpha}(\Gmax) \subseteq R_{\alpha}(\Gw) \subseteq R_{\alpha}(\Gmin) $$
This forms a triple of complexes, which gives rise to a standard short exact sequence of their chain complexes:
$$ 0 \rightarrow \C{{\Ralpha{\Gw}},\Ralpha{\Gmax}}\rightarrow \C{\Ralpha{\Gmin},\Ralpha{\Gmax}} \rightarrow \ \C{{\Ralpha{\Gmin},\Ralpha{\Gw}}} \rightarrow 0 $$
This, in turn, induces the following long exact sequence in homology:
\begin{multline*}
    \cdots \rightarrow H_n(\Ralpha{\Gw}, \Ralpha{\Gmax}) \rightarrow H_n(\Ralpha{\Gmin}, \Ralpha{\Gmax}) \\
    \rightarrow H_n(\Ralpha{\Gmin}, \Ralpha{\Gw}) \xrightarrow{\partial_*} H_{n-1}(\Ralpha{\Gw}, \Ralpha{\Gmax}) \rightarrow \cdots
\end{multline*}
Since the relative homology groups are isomorphic to the homology groups of the corresponding mapping cones, we have the following long exact sequence for the auxiliary graphs:
\begin{multline*}
    \cdots \rightarrow H_{i}(\Ralpha{\Gmaxprime}) \xrightarrow{\gamma_i} H_{i}(\Ralpha{\Gsymprime})
    \xrightarrow{\beta_i} H_{i}(\Ralpha{\Gminprime}) \xrightarrow{\delta_i} H_{i-1}(\Ralpha{\Gmaxprime}) \rightarrow \cdots
\end{multline*}
where $\gamma_i, \beta_i, \delta_i$ are the homomorphism maps in the sequence. For any segment of an exact sequence of vector spaces $U \xrightarrow{f} V \xrightarrow{g} W$, we have $\text{im}(f) = \ker(g)$. By the rank-nullity theorem, $\dim(V) = \dim(\ker(g)) + \dim(\text{im}(g))$. Substituting $\text{im}(f) = \ker(g)$, we get $\dim(V) = \dim(\text{im}(f)) + \dim(\text{im}(g))$. Therefore, the dimensions of the homology groups of the auxiliary graphs (i.e., the Betti numbers $\beta_i(\alpha)$) can be expressed as:
\begin{align} 
    \beta_i^{\max}(\alpha) &= \dim(H_{i}(\Ralpha{\Gmaxprime})) = \dim(\text{im}(\delta_{i+1})) + \dim(\text{im}(\gamma_i)) \label{eq:max} \\
    \beta_i^{\text{sym}}(\alpha) &= \dim(H_{i}(\Ralpha{\Gsymprime})) = \dim(\text{im}(\gamma_i)) + \dim(\text{im}(\beta_i)) \label{eq:sym} \\
    \beta_i^{\min}(\alpha) &= \dim(H_{i}(\Ralpha{\Gminprime})) = \dim(\text{im}(\beta_i)) + \dim(\text{im}(\delta_i)) \label{eq:min}
\end{align}
By substituting \eqref{eq:max}, \eqref{eq:sym}, and \eqref{eq:min}, we obtain:
\begin{align*}
    & \quad \beta_i^{\min}(\alpha) + \beta_i^{\max}(\alpha) - \beta_i^{\text{sym}}(\alpha) \\
    &= \big( \dim(\text{im}(\beta_i)) + \dim(\text{im}(\delta_i)) \big) \\
    & \qquad + \big( \dim(\text{im}(\delta_{i+1})) + \dim(\text{im}(\gamma_i)) \big) \\
    & \qquad - \big( \dim(\text{im}(\gamma_i)) + \dim(\text{im}(\beta_i)) \big) \\
    &= \dim(\text{im}(\delta_{i+1})) + \dim(\text{im}(\delta_i)) \\
    &= \dim(\ker(\gamma_{i}))+\dim(\ker(\gamma_{i-1}))
\end{align*}
By integrating both sides of Lemma~\ref{lem:betti_relation1} with respect to filtration radius $\alpha$, we obtain its conclusion.
This completes the proof of Lemma~\ref{lem:betti_relation1} and Theorem~\ref{thm:betti_relation}.
\end{proof}

\subsection{Proof of Corollary}
\paragraph{Proof of Corollary~\ref{cor:1}}
From definition, we have
$$RTD\textit{-}lite(P,P')=\frac{(mst(\Gw) - mst(\Gmin)) + (mst(\Gwt) - mst(\Gmin))}{2}$$
$$Max\textit{-}RTD\textit{-}lite(P,P')=\frac{(mst(\Gmax) - mst(\Gw)) + (mst(\Gmax) - mst(\Gwt))}{2}$$
 $$SRTD\textit{-}lite(P,P')=mst(\Gmax)-mst(\Gmin)$$
Summing the three equations above completes the proof.
\paragraph{Proof of Corollary~\ref{cor:2}} This corollary holds if and only if the following expression is true, where A and B are two non-negative, symmetric distance matrices of the same size with zeros on the diagonal.
\begin{proof}
\[
\operatorname{MST}(\max(A,B))+\operatorname{MST}(\min(A,B)) \ge \operatorname{MST}(A)+\operatorname{MST}(B). \qquad(\star)
\]
Let the graph have $n$ vertices and an edge set $E$. We can view a weight matrix $W$ as a function that assigns a non-negative weight $W_e$ to each edge $e \in E$. 
For any non-negative weight matrix $W$, let $E_{\le t}(W) := \{e \in E : W_e \le t\}$ be the set of edges with weight at most $t$, and let $\kappa_W(t)$ be the number of connected components in the graph $(V, E_{\le t}(W))$. A standard result from Kruskal's algorithm gives the MST weight as an integral:
\begin{equation}
\operatorname{MST}(W) = \int_{0}^{\infty} \big(\kappa_W(t)-1\big) \,dt. \label{eq:1}
\end{equation}

The element-wise $\min$ and $\max$ operations on weight matrices correspond to the union and intersection of their threshold edge sets:
\begin{align}
E_{\le t}(\max(A,B)) &= E_{\le t}(A) \cap E_{\le t}(B), \label{eq:2} \\
E_{\le t}(\min(A,B)) &= E_{\le t}(A) \cup E_{\le t}(B). \notag
\end{align}

Let $\kappa(S)$ be the number of connected components of the graph induced by an edge set $S \subseteq E$. A fundamental result in graph theory and matroid theory is that the rank function $r(S) = n-\kappa(S)$ is submodular. Consequently, $\kappa(S)$ is supermodular:
\begin{equation}
\kappa(X \cap Y) + \kappa(X \cup Y) \ge \kappa(X) + \kappa(Y), \quad \forall X,Y \subseteq E. \label{eq:3}
\end{equation}

Substituting \eqref{eq:2} into \eqref{eq:3} with $X = E_{\le t}(A)$ and $Y = E_{\le t}(B)$, we get for every $t \ge 0$:
$$
\kappa_{\max(A,B)}(t) + \kappa_{\min(A,B)}(t) \ge \kappa_A(t) + \kappa_B(t).
$$
Integrating over $t \in [0, \infty)$, and applying the formula \eqref{eq:1} yields the desired inequality (\(\star\)). 
\end{proof}
\subsection{Proofs for NTS Theorems}

\subsubsection{Proof of Theorem \ref{thm:nts_m_one}}

\begin{proof}
By definition, $NTS\textit{-}M(P, P')$ is the Spearman's rank correlation coefficient, $\rho$, between the merge-time vectors $T$ and $\tilde{T}$. Let $R = \mathrm{rank}(T)$ and $\tilde{R} = \mathrm{rank}(\tilde{T})$ be the rank vectors computed with the \emph{same deterministic tie-handling rule} (e.g., mid-ranks) on both sides. Recall that Spearman's $\rho$ is the Pearson's correlation applied to these ranks: $\rho = \mathrm{corr}(R, \tilde{R})$.

\paragraph{corr$=1 \implies$ Identical Rank Weak Order}
We assume the non-degenerate case where $|E_{core}| \ge 2$ and both rank vectors have nonzero variance (i.e., not all merge times are identical). In this case, the Pearson correlation $\mathrm{corr}(R, \tilde{R}) = 1$ if and only if there exist constants $a \in \mathbb{R}$ and $b>0$ such that $\tilde{R} = a + bR$ holds entrywise. Since $b>0$, this linear relationship ensures that the weak order of the ranks is identical. That is, for any two core pairs $e_1, e_2$:
\begin{align*}
    R(e_1) < R(e_2) &\iff \tilde{R}(e_1) < \tilde{R}(e_2), \\
    R(e_1) = R(e_2) &\iff \tilde{R}(e_1) = \tilde{R}(e_2).
\end{align*}

\paragraph{Identical Rank Weak Order $\iff$ Identical Merge-Time Weak Order}
Under a fixed tie-handling rule, the rank function is order-preserving and tie-preserving, and therefore also order-reflecting. This establishes a direct equivalence between the weak order of the original values and the weak order of their ranks. Thus, for any $e_1, e_2$:
\begin{align*}
    T(e_1) < T(e_2) &\iff R(e_1) < R(e_2), \\
    T(e_1) = T(e_2) &\iff R(e_1) = R(e_2).
\end{align*}
The same equivalence holds for $\tilde{T}$ and $\tilde{R}$.

\paragraph{Conclusion}
Chaining the equivalences from Step 1 and Step 2, we conclude that $NTS\textit{-}M(P, P') = 1$ is equivalent to the statement that the merge-time weak order is identical.

To explicitly prove the biconditional ("if and only if") nature:
\begin{itemize}
    \item[($\Rightarrow$)] If $NTS\textit{-}M = 1$, Step 1 shows the rank weak order is identical, which by Step 2 implies the merge-time weak order is identical.
    \item[($\Leftarrow$)] Conversely, if the merge-time weak order is identical, then by Step 2, the rank weak order must be identical. This implies that the rank vectors themselves are identical, $R = \tilde{R}$. In the non-degenerate case, the correlation of a vector with itself is 1, so $\rho = \mathrm{corr}(R, \tilde{R}) = 1$.
\end{itemize}
Therefore, $NTS\textit{-}M(P, P')=1$ if and only if the merge-time weak orders coincide.
\end{proof}

\subsubsection{Proof of Theorem \ref{thm:nts_e_one}}

\begin{proof}
The proof consists of two parts.

\paragraph{\textbf{$NTS\textit{-}E = 1 \implies NTS\textit{-}M = 1$}}
Assume the non-degenerate case where $|E_{core}| \ge 2$ and the rank vectors of the edge distances have nonzero variance. The premise is $NTS\textit{-}E(P, P') = 1$. By Theorem~\ref{thm:nts_m_one}, this is equivalent to the statement that the weak order of the edge distances coincides for all core edges $e \in E_{core}$.

All MST and merge-time computations are performed on the fixed core graph $G_{core}=(V, E_{core})$, using the same deterministic tie-handling (e.g., mid-ranks) and tie-breaking (e.g., by edge index) rules on both sides.

The coincidence of the weak order of weights $\{w_e\}_{e\in E_{core}}$ and $\{\wt_e\}_{e\in E_{core}}$ implies that there exists a strictly increasing map $g$ defined on the finite set of values taken by $w$ on $E_{core}$, such that $\wt_e = g(w_e)$ for all $e \in E_{core}$. 
Because $g$ is strictly increasing, it does not change the sorted order of edges processed by Kruskal's algorithm on $G_{core}$. Therefore, the sequence of component merges is identical for both $w$ and $\wt$, and the resulting MSTs are identical. 
Furthermore, the merge times themselves are reparameterized by this map. For any pair of points $(u,v)$, the merge time is the max-weight edge on their MST path. Thus, for any core edge $e$:
$$ T(e) = \max_{e' \in \text{path}(e)} w_{e'} \implies \tilde{T}(e) = \max_{e' \in \text{path}(e)} \wt_{e'} = \max_{e' \in \text{path}(e)} g(w_{e'}) = g(\max_{e' \in \text{path}(e)} w_{e'}) = g(T(e)) $$
Since $\tilde{T}(e) = g(T(e))$ for a strictly increasing function $g$, the weak order of the merge times is preserved. By Theorem~\ref{thm:nts_m_one}, this implies $NTS\textit{-}M(P, P') = 1$.

\paragraph{The Converse is Not Necessarily True}
To prove the converse is false, we provide a minimal, reproducible counterexample where $NTS\textit{-}M = 1$ but $NTS\textit{-}E < 1$. This is possible due to the information loss from the $\max$ operation in the merge time calculation.

Let the set of vertices be $V=\{1,2,3,4\}$ and the set of core edges be $E_{core} = \{(1,2), (2,3), (3,4), (1,3), (2,4)\}$. Consider two weight functions $w$ and $\wt$ on $E_{core}$:
\begin{itemize}
    \item $w$: $w_{12}=2, w_{23}=8, w_{34}=10, w_{13}=9, w_{24}=7$.
    \item $\wt$: $\wt_{12}=9, \wt_{23}=7, \wt_{34}=10, \wt_{13}=8, \wt_{24}=2$.
\end{itemize}
\begin{enumerate}
    \item \textbf{NTS-E Score:} The vector of weights for $w$ on $E_{core}$ (ordered lexicographically) is $(2,9,7,8,10)$, which has a rank vector of $(1,4,2,3,5)$. The vector for $\wt$ is $(9,8,2,7,10)$, with a rank vector of $(4,3,1,2,5)$. The rank orders are different, so $NTS\textit{-}E(P, P') < 1$.

    \item \textbf{NTS-M Score:} Running Kruskal's algorithm on the graph $G_{core}=(V, E_{core})$ with these weights (and a deterministic tie-breaking rule) yields the merge times for all pairs of vertices. It can be verified that the weak order of merge times for all pairs in $E_{core}$ is identical for both $w$ and $\wt$. For example, for both weight functions, the pair $(3,4)$ is the last to merge with a time of 10, while the pair $(1,2)$ (for $w$) and $(2,4)$ (for $\wt$) are the first to merge. A full computation shows the rank vectors of the merge times are identical, and thus $NTS\textit{-}M(P, P') = 1$.
\end{enumerate}
This counterexample demonstrates that the converse is not true.
\end{proof}
\section{TinyCNN Architecture Details}
\label{apd:tinycnn_arch}

\begin{itemize}
    \item \textbf{Layers 1-2:} Conv(3x3, 16 channels) $\rightarrow$ BatchNorm $\rightarrow$ ReLU
    \item \textbf{Layer 3:} Conv(3x3, 32 channels, stride 2) $\rightarrow$ BatchNorm $\rightarrow$ ReLU
    \item \textbf{Layers 4-5:} Conv(3x3, 32 channels) $\rightarrow$ BatchNorm $\rightarrow$ ReLU
    \item \textbf{Layer 6:} Conv(3x3, 64 channels, stride 2) $\rightarrow$ BatchNorm $\rightarrow$ ReLU
    \item \textbf{Layer 7:} Conv(3x3, 64 channels, no padding) $\rightarrow$ BatchNorm $\rightarrow$ ReLU
    \item \textbf{Layer 8:} Conv(1x1, 64 channels) $\rightarrow$ BatchNorm $\rightarrow$ ReLU
    \item \textbf{Classifier:} Global Average Pooling $\rightarrow$ Linear Layer
\end{itemize}

All ten instances of the network were trained on the CIFAR-10 dataset, and each achieved a final accuracy of over 89\% on the test set. 
\section{Supplementary Heatmap for TinyCNN Experiments}
\label{apd:rtdsp}
\begin{figure}[htbp]
    \centering
    \begin{subfigure}{0.48\textwidth}
        \includegraphics[width=\textwidth]{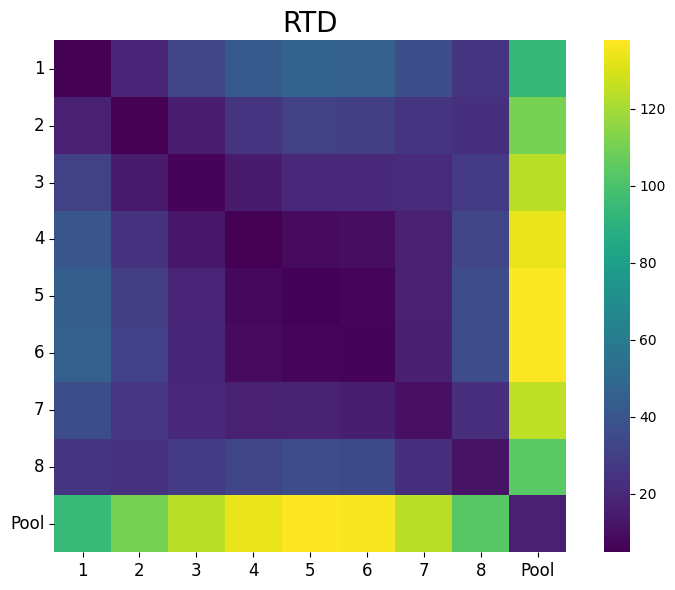}
        \caption{RTD}
        \label{ig:layer_rtd}
    \end{subfigure}
    \hfill
    \begin{subfigure}{0.48\textwidth}
        \includegraphics[width=\textwidth]{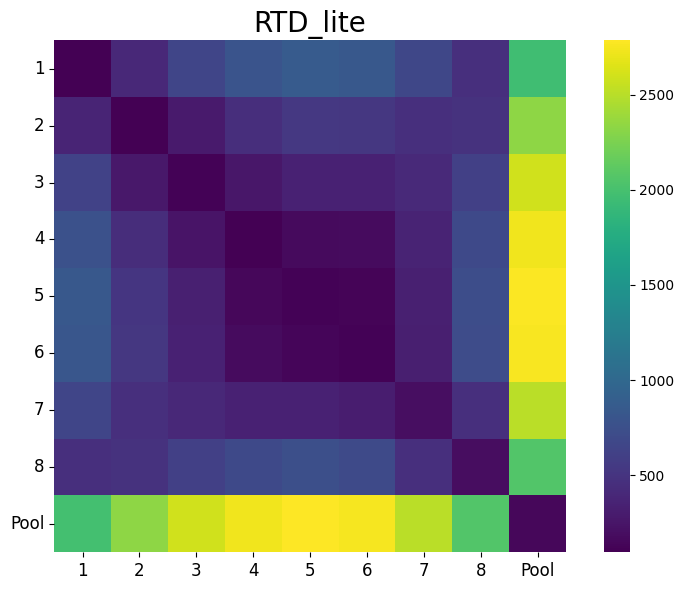}
        \caption{RTD-lite}
        \label{fig:layer_rtd_lite}
    \end{subfigure}
    \caption{Supplementary Heatmap for Tiny CNN Experiments: RTD and RTD-lite}
    \label{fig:sup}
\end{figure}
The computational cost of RTD is prohibitively high, requiring several days to compute even with 1,000 samples. Consequently, we employed 500 sample points for RTD experiments, and 5,000 for RTD-lite experiments, yielding results that are consistent with those of RTD-lite and SRTD-lite.
\section{Experiment on Autoencoder and Experimental Setup}
\label{apd:autoencoder}
\subsection{Experiment on Autoencoder}
Following the approach of RTD-AE and RTD-lite \citep{trofimov2023learning,tulchinskii2025rtd}, we train our autoencoder using a combined loss function. This objective includes a standard reconstruction loss alongside our proposed SRTD (or SRTD-lite) divergence, which is computed between the high-dimensional input data and its low-dimensional latent representation~\citep{zhang2020gpu}.
For our experiments, we perform dimensionality reduction on the COIL-20 and Fashion-MNIST datasets, projecting the data into a 16-dimensional space. To evaluate the quality of the reduction, we compare the original and latent representations using the following metrics: (1) linear correlation of pairwise distances, (2) the Wasserstein distance of the $H_0$ persistent homology barcodes \citep{chazal2021introduction}, (3) triplet distance ranking accuracy \citep{wang2021understanding}, (4) RTD \citep{barannikov2021representation} (5) SRTD.
The results of RTD series are summarized in Tables~\ref{tab:coil-20} and \ref{tab:f-mnist}. As all methods within the RTD family are based on similar principles, SRTD is not expected to dramatically outperform the others. Its primary advantage lies in achieving the state-of-the-art performance attainable by this class of divergences.
\begin{table}[htbp]
    \centering
    \caption{Dimensionality Reduction Quality Metrics (COIL-20).}
    \label{tab:coil-20}
    \begin{tabular}{lccccc}
        \toprule
        \textbf{Method} & \textbf{Dist Corr} & \textbf{Triplet Acc}&\textbf{H0 Wass} & \textbf{RTD}&\textbf{SRTD} \\
        \midrule
        AE(baseline)  & 0.857 & 0.840 $\pm$ 0.01 & 193.5 $\pm$ 0.0 &6.13 $\pm$ 0.5&6.13 $\pm$ 0.5\\
        RTD  & 0.942 & 0.893 $\pm$ 0.01 & 40.1 $\pm$ 0.0 &1.28 $\pm$ 0.4&1.29 $\pm$ 0.4\\
        Max-RTD & 0.924 & 0.879 $\pm$ 0.01 & 32.3 $\pm$ 0.0 &1.17 $\pm$ 0.3&1.17 $\pm$ 0.3\\
        SRTD & 0.948& 0.899 $\pm$ 0.01 & 36.7 $\pm$ 0.0 &1.21 $\pm$ 0.4&1.21 $\pm$ 0.4\\
        RTD-lite  & 0.904 & 0.855 $\pm$ 0.01 & 26.0 $\pm$ 0.0 &0.99 $\pm$ 0.3&1.00 $\pm$ 0.3\\
        Max-RTD-lite & 0.935 & 0.886 $\pm$ 0.01 & 29.9 $\pm$ 0.0 &1.03 $\pm$ 0.3&1.04 $\pm$ 0.3\\
        SRTD-lite & 0.930 & 0.882 $\pm$ 0.01 & 28.2 $\pm$ 0.0 &1.00 $\pm$ 0.2& 1.01 $\pm$ 0.2\\
        \bottomrule
    \end{tabular}
\end{table}
\begin{table}[htbp]
    \centering
    \caption{Dimensionality Reduction Quality Metrics (F-MNIST).}
    \label{tab:f-mnist}
    \begin{tabular}{lccccc}
        \toprule
        \textbf{Method} & \textbf{Dist Corr} & \textbf{Triplet Acc}&\textbf{H0 Wass} & \textbf{RTD}&\textbf{SRTD} \\
        \midrule
        AE(baseline)  & 0.874 & 0.847 $\pm$ 0.00 & 308.4 $\pm$ 14.0 &6.43 $\pm$ 0.4&6.46 $\pm$ 0.4\\
        RTD  & 0.954 & 0.907 $\pm$ 0.00 & 98.2 $\pm$ 4.3 &1.28 $\pm$ 0.1&1.35 $\pm$ 0.2\\
        Max-RTD & 0.937 & 0.895 $\pm$ 0.01 & 94.1 $\pm$ 4.1 &1.51 $\pm$ 0.1&1.55 $\pm$ 0.1\\
        SRTD & 0.957 & 0.910 $\pm$ 0.01 & 94.0 $\pm$ 2.7&1.29 $\pm$ 0.1&1.34 $\pm$ 0.2\\
        RTD-lite  & 0.937 & 0.896 $\pm$ 0.01 & 90.2 $\pm$ 3.9 &1.38 $\pm$ 0.1&1.43 $\pm$ 0.1\\
        Max-RTD-lite & 0.940 & 0.897 $\pm$ 0.00 & 92.0 $\pm$ 3.6 &1.47 $\pm$ 0.1&1.51 $\pm$ 0.2\\
        SRTD-lite & 0.941 & 0.897 $\pm$ 0.00 & 91.4 $\pm$ 5.1&1.42 $\pm$ 0.1&1.47 $\pm$ 0.1\\
        \bottomrule
    \end{tabular}
\end{table}
\subsection{Experimental Setup}

Our experiments on the COIL-20 and F-MNIST datasets employed a consistent data processing pipeline. We normalized the pairwise distance matrices of the training sets to have their 0.9 quantiles equal to 1. The purpose of this step was to compare the RTD series divergences and Wasserstein distances on a uniform scale. Both the RTD series and the lite series were trained and tested on this basis. Following the approach of RTD\_ae~\citep{trofimov2023learning}, we also utilized a min-bypass trick for SRTD.

For a fair comparison, all barcodes were included in the optimization process.

The specific parameters used in our experiments are detailed below:
\begin{table}[htbp]
  \centering
  \caption{Experimental Parameters}
  \label{tab:my_parameters}
  \begin{tabular}{lcccccc}
    \toprule
    Dataset Name & Batch Size & LR & Hidden Dim & Layers & Epochs & Metric Start Epoch \\
    \midrule
    F-MNIST & 256 & $10^{-4}$ & 512 & 3 & 250 & 60 \\
    COIL-20 & 256 & $10^{-4}$ & 512 & 3 & 250 & 60 \\
    \bottomrule
  \end{tabular}
\end{table}
\begin{table}[htbp]
  \centering
  \caption{Dataset Characteristics}
  \label{tab:dataset_info}
  \begin{tabular}{lcccc}
    \toprule
    \textbf{Dataset} & \textbf{Classes} & \textbf{Train Size} & \textbf{Test Size} & \textbf{Image Size} \\
    \midrule
    F-MNIST & 10 & 60,000 & 10,000 & 28x28 (784) \\
    COIL-20 & 20 & 1,440 & - & 128x128 (16384)  \\
    \bottomrule
  \end{tabular}
\end{table}
\\ Training time on F-MNIST (RTX 5090):
 RTD-lite: 1498s, SRTD-lite: 1183s, RTD: 7209s, SRTD: 3494s
\section{Additional Analysis from UMAP Experiment}
\label{apd:umap_analysis}

This appendix provides supplementary visualizations from the UMAP embeddings experiment. We generate a series of 2D UMAP representations by varying the \texttt{n\_neighbors} parameter and analyze the topological divergence between them. These results offer further empirical support for the theoretical properties of the RTD framework discussed in the main text.

\begin{figure*}[htbp]
    \centering
    \begin{subfigure}[b]{0.48\textwidth}
        \includegraphics[width=\textwidth]{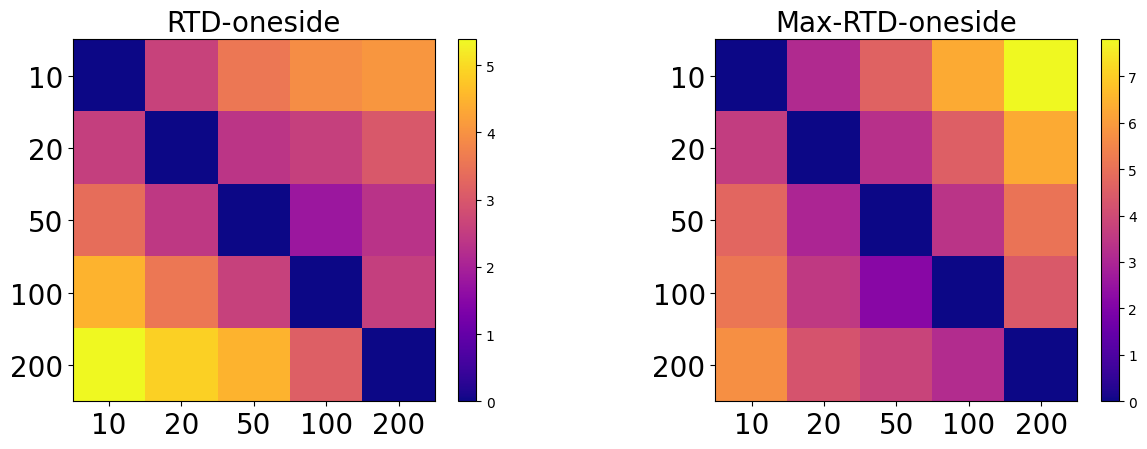}
        \caption{Asymmetry and Complementarity}
        \label{fig:umap_app_asymmetry}
    \end{subfigure}
    \hfill
    \begin{subfigure}[b]{0.48\textwidth}
        \includegraphics[width=\textwidth]{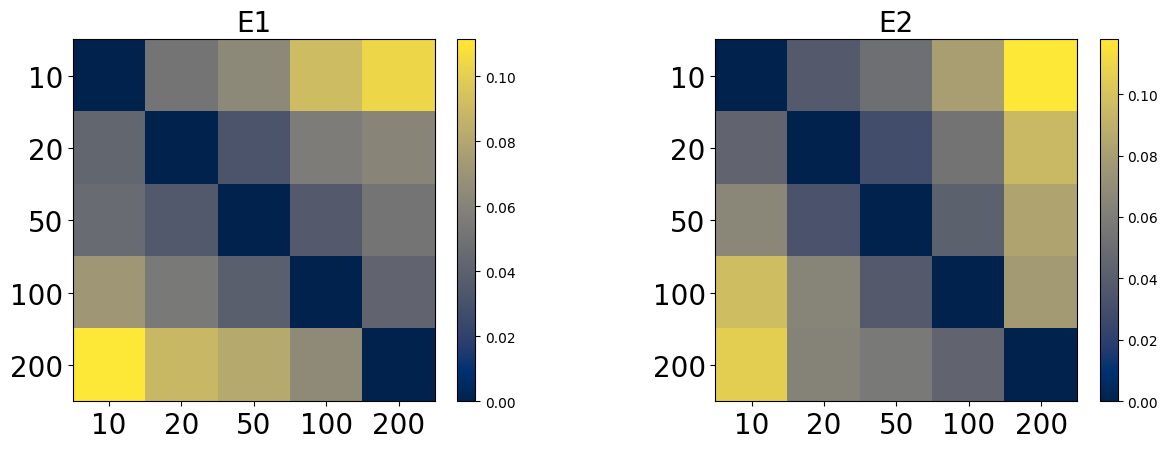}
        \caption{Theoretical Difference from SRTD}
        \label{fig:umap_app_difference}
    \end{subfigure}
    \caption{Further analysis of the RTD framework on UMAP embeddings. 
    (a) The asymmetry of directional RTD ($RTD(w,\wt) - RTD(\wt,w)$) and Max-RTD. Note their strong complementarity. 
    (b) The minimal difference between SRTD and the combined `minmax` divergences ($E_1$ and $E_2$), visually confirming Theorem~\ref{thm:betti_relation}.}
    \label{fig:umap_appendix_plots}
\end{figure*}

Figure~\ref{fig:umap_appendix_plots} illustrates two key properties. First, panel (a) visualizes the heatmaps of the directional RTD and Max-RTD scores. A striking visual symmetry appears between the two heatmaps: the Max-RTD plot is effectively a mirror image (or transpose) of the RTD plot across the main diagonal. This provides strong visual evidence for their complementarity, as they capture opposing aspects of the topological disagreement.

Second, panel (b) plots the theoretical difference terms $E_1 = (RTD(w,\wt) + Max\text{-}RTD(w,\wt) - SRTD)/2$ and its counterpart $E_2$ (with $w$ and $\wt$ swapped).

\section{Analysis Using Full Distance Matrix via RSA}
\label{apd:rsa}
While our work focuses on a topological approach to representation analysis, a common alternative is to use measures based on the full distance matrix. Here, we conduct an analysis using Representational Similarity Analysis (RSA) on the full distance matrices of the representations~\citep{kriegeskorte2008representational}, to compare its behavior to our proposed methods. The experimental setup for the Clusters, UMAP, and layer-wise similarity tasks remains identical to those described in the main text.

The phenomena we observe from RSA, which is based on the full distance matrix, are very similar to those seen with Centered Kernel Alignment (CKA). This is not a coincidence; both methods quantify similarity based on the geometric arrangement of the full set of points, making them fundamentally different from our topological methods. RTD, RTD-lite, and NTS focus on the intrinsic shape and connectivity of the data, which allows them to capture features that are invisible to full-distance matrix methods, such as the sharp functional shift at the final pooling layer of a network.

\begin{figure}[htbp]
    \centering
    \begin{minipage}[t]{0.32\textwidth}
        \centering
        \includegraphics[width=\textwidth]{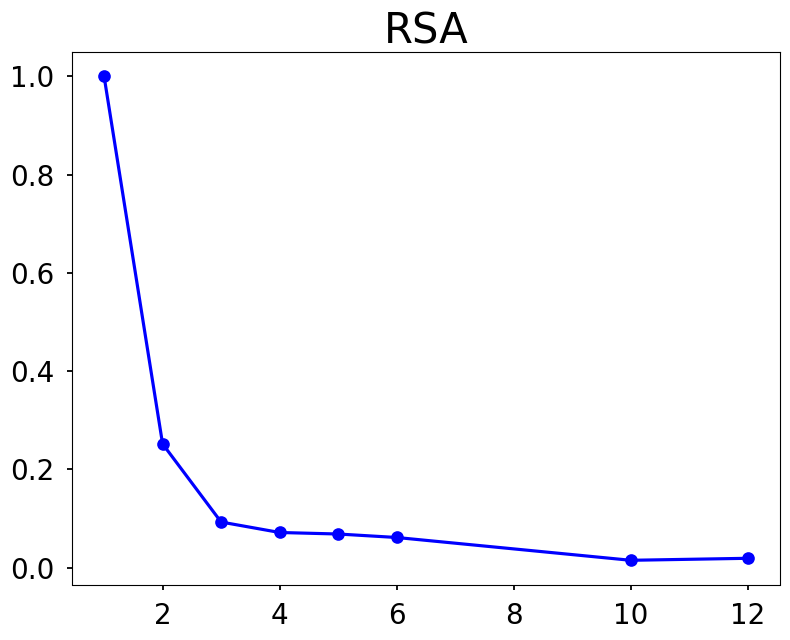}
        \caption{Clusters Experiment}
        \label{fig:rsa_clusters}
    \end{minipage}
    \hfill
    \begin{minipage}[t]{0.32\textwidth}
        \centering
        \includegraphics[width=\textwidth]{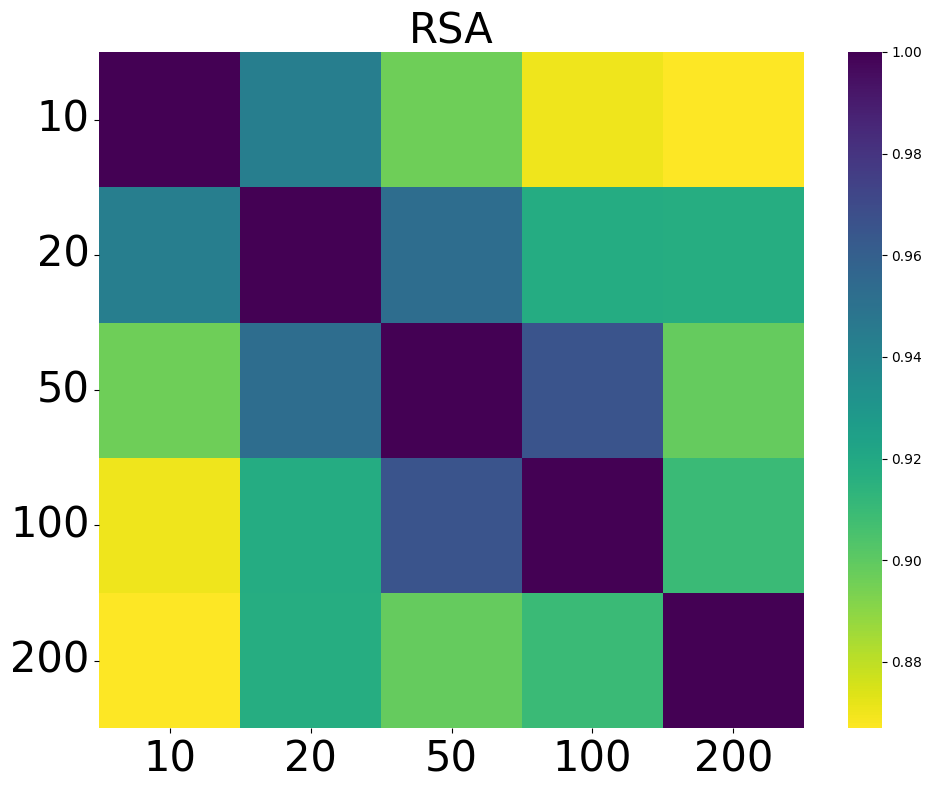}
        \caption{UMAP Experiment}
        \label{fig:rsa_umap}
    \end{minipage}
    \hfill
    \begin{minipage}[t]{0.32\textwidth}
        \centering
        \includegraphics[width=\textwidth]{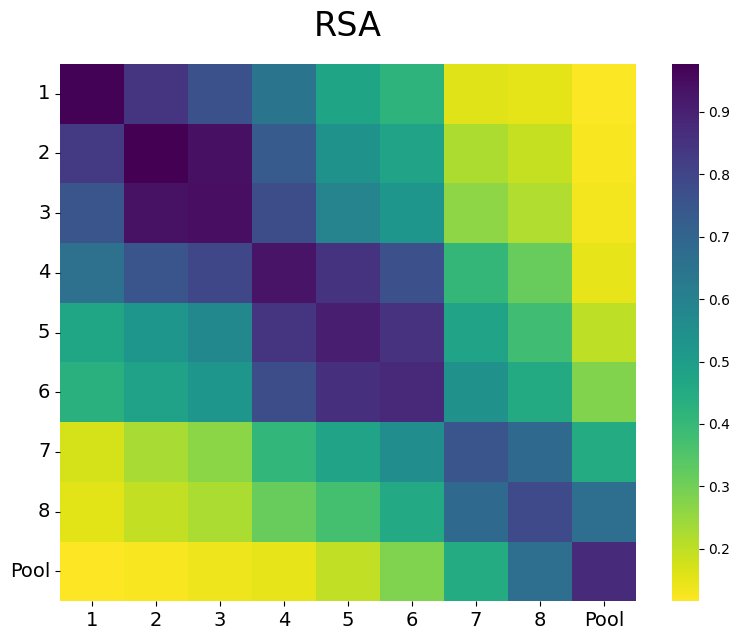}
        \caption{Layer-Wise Similarity}
        \label{fig:rsa_layers}
    \end{minipage}
    \caption{RSA on three tasks}
    \label{fig:rsa_full_analysis}
\end{figure}
\section{SRTD-lite on LLMs: Barcode Interpretation and Limitations}
\label{apd:srtd_lite_llm}

This appendix provides a qualitative look at SRTD-lite scores for LLMs. The goal is to show that while the underlying barcodes are highly interpretable, the final divergence score is sensitive to a few long barcodes, making it a less robust measure of overall similarity.

\paragraph{Ultra-long barcode}We randomly sampled 1,000 data points from the StereoSet~\citep{nadeem2021stereoset} dataset and extracted their representations from the sixth layer of the LLM. Upon computing SRTD-lite and RTD-lite, we observed anomalously long barcode intervals. Specifically, a single barcode value dominated the overall divergence (Figure \ref{fig:ultra_combined}), which severely compromised the metric's ability to characterize the global topological structure.
\begin{figure}[htbp]
    \centering
    \begin{subfigure}{0.48\textwidth}
        \includegraphics[width=\textwidth]{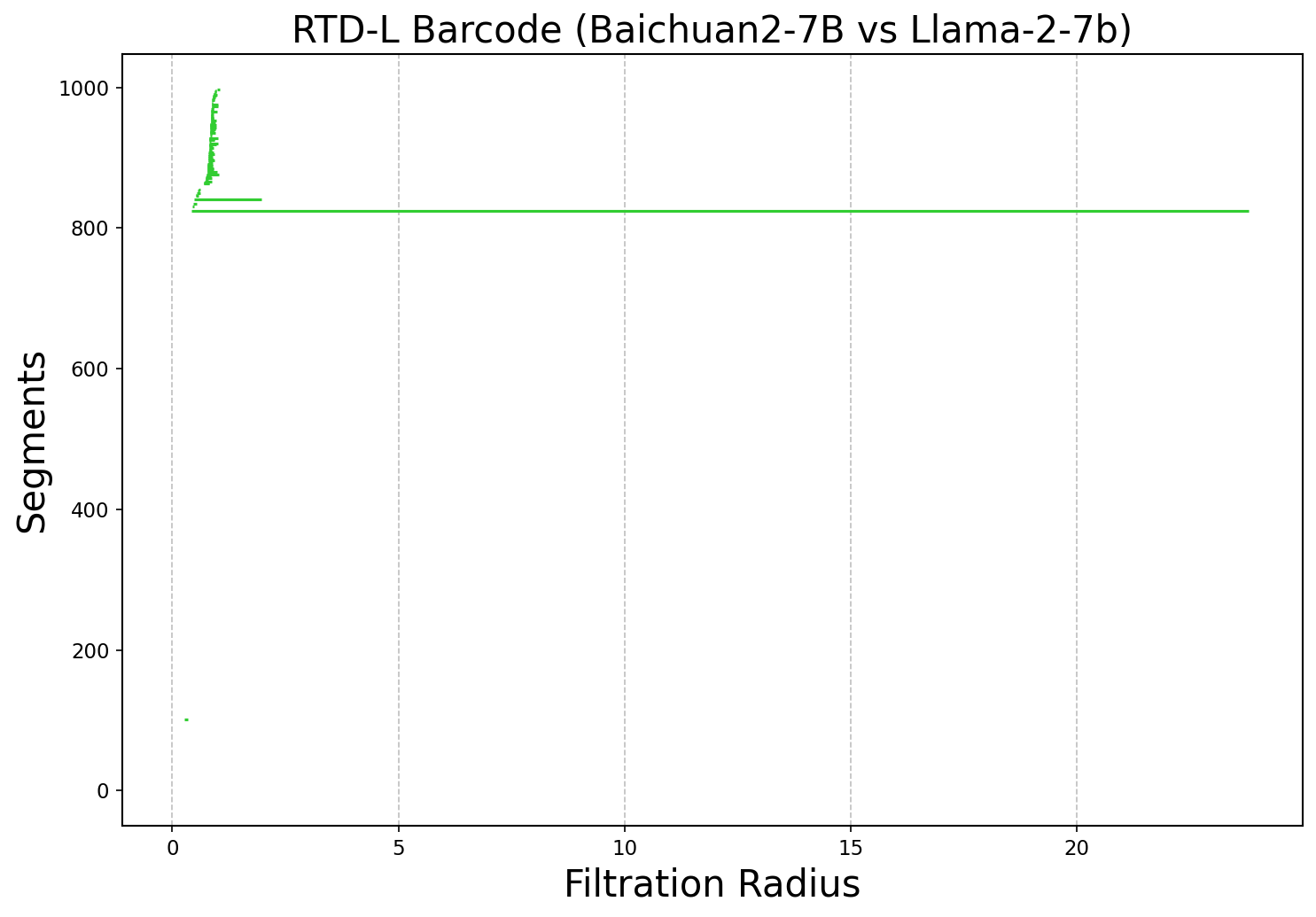}
        \caption{DeepSeek-ds-7B vs. Qwen2.5-Math-7B (layer 6)}
        \label{fig:bl_ultra}
    \end{subfigure}
    \hfill
    \begin{subfigure}{0.48\textwidth}
        \includegraphics[width=\textwidth]{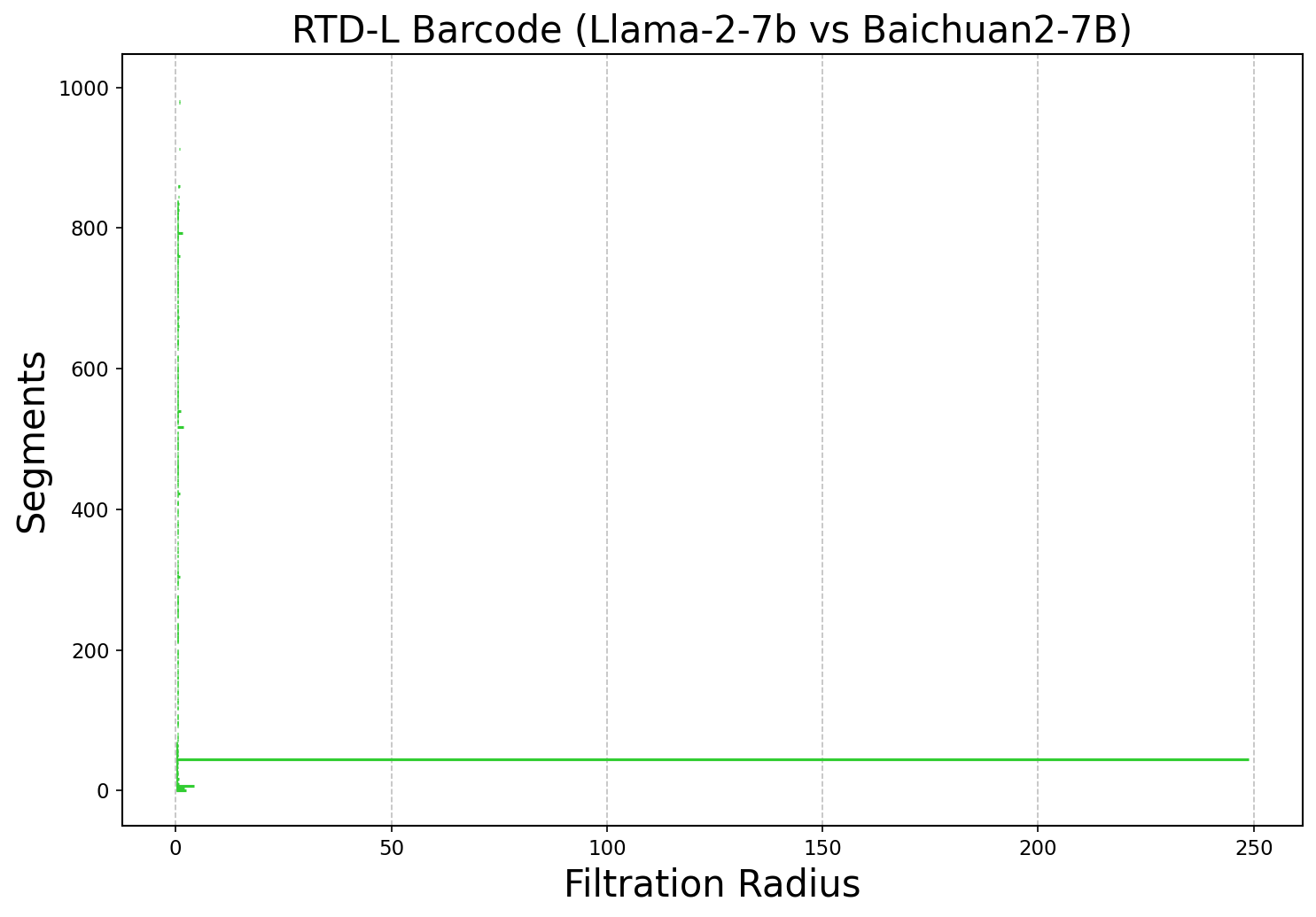}
        \caption{Internlm2.5-7b vs. Mistral-v0.1-7b (layer 6)}
        \label{fig:lb_ultra}
    \end{subfigure}
    \caption{RTD-lite ultra-long barcode}
    \label{fig:ultra_combined}
\end{figure}

\begin{figure}[htbp]
    \centering
    \includegraphics[width=\linewidth]{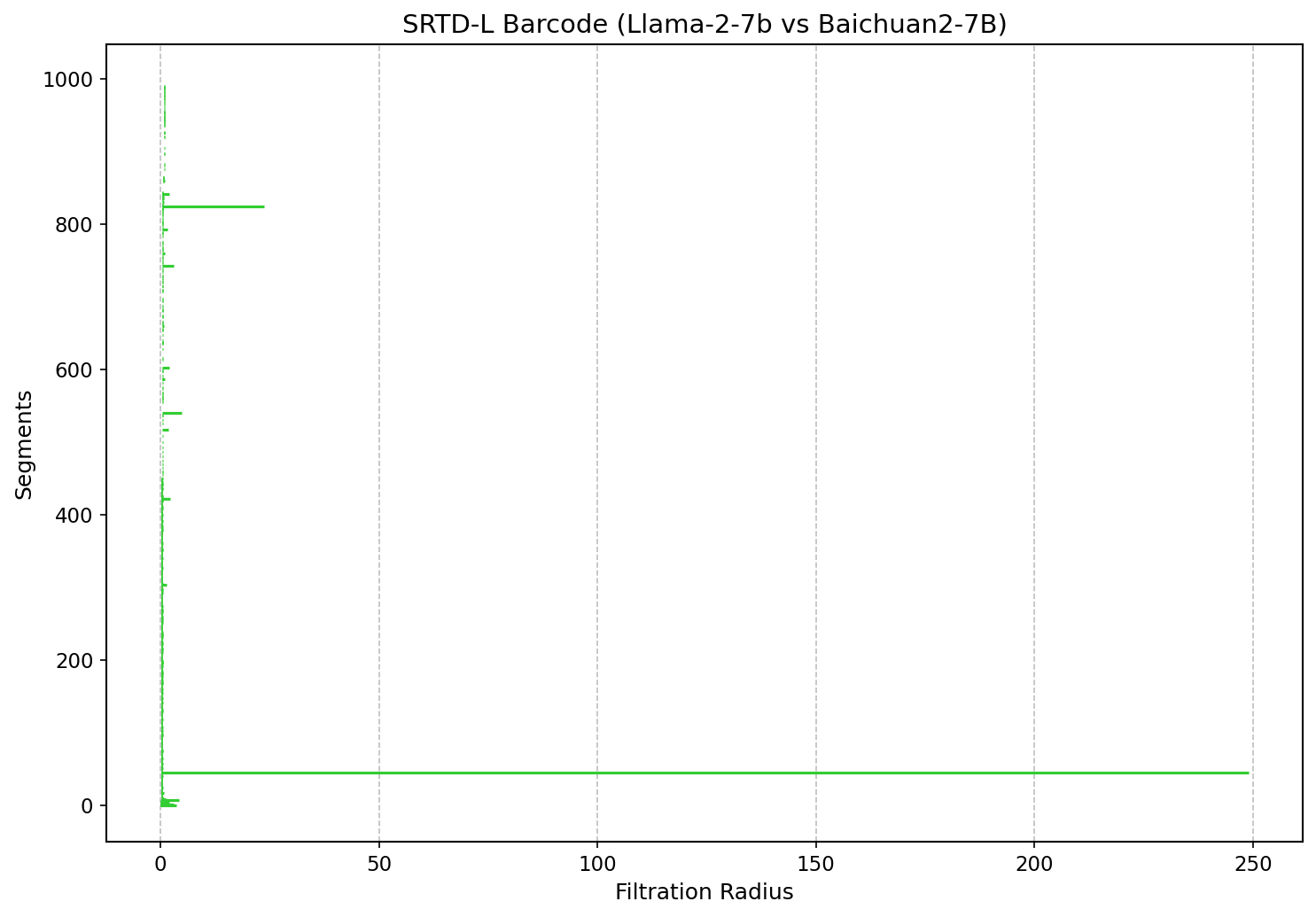}
    \caption{SRTD-lite ultra-long barcode}
    \label{fig:slb_barcodes}
\end{figure}

\begin{figure}[htbp]
    \centering
    \begin{subfigure}{0.48\textwidth}
        \includegraphics[width=\textwidth]{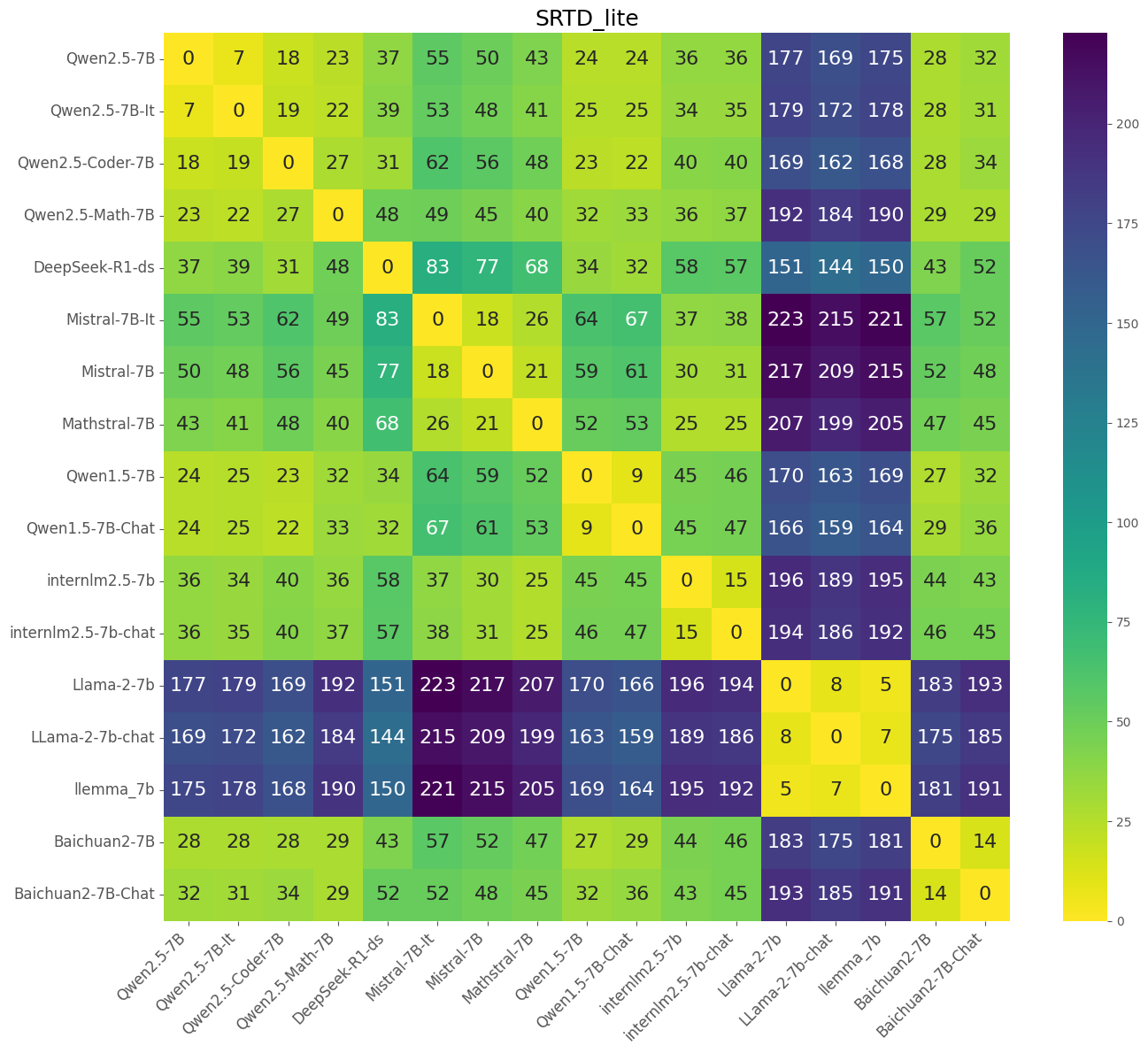}
        \caption{SRTD-lite scores on TruthfulQA layer 6}
        \label{fig:srtdlite_l6}
    \end{subfigure}
    \hfill
    \begin{subfigure}{0.48\textwidth}
        \includegraphics[width=\textwidth]{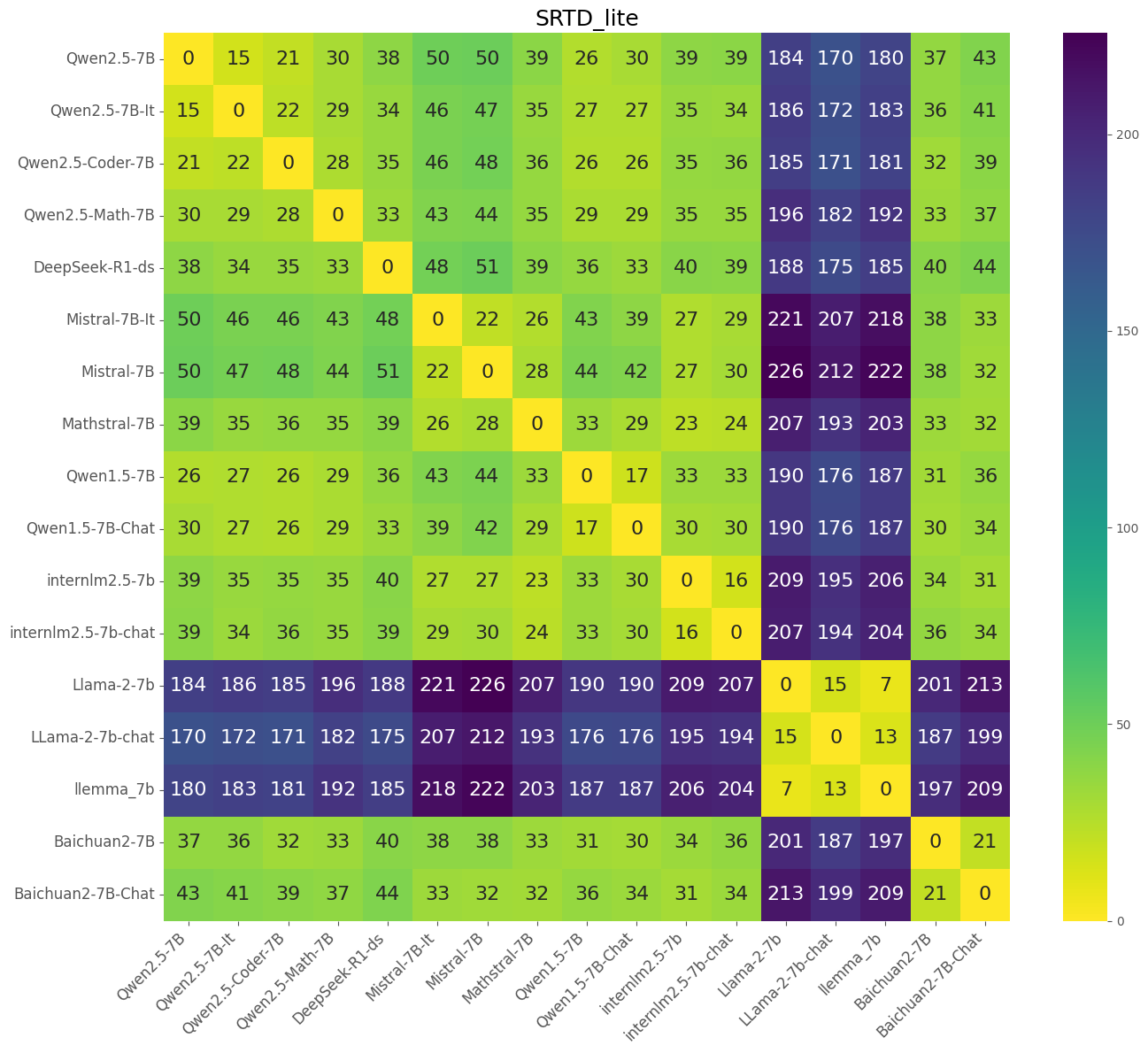}
        \caption{SRTD-lite scores on TruthfulQA layer 12}
        \label{fig:srtdlite_l12}
    \end{subfigure}
    \caption{SRTD-lite divergence scores for pairs of LLMs on TruthfulQA.}
    \label{fig:srtd_lite_scatter_plots}
\end{figure}

\begin{figure}[htbp]
    \centering
    \begin{subfigure}{0.48\textwidth}
        \includegraphics[width=\textwidth]{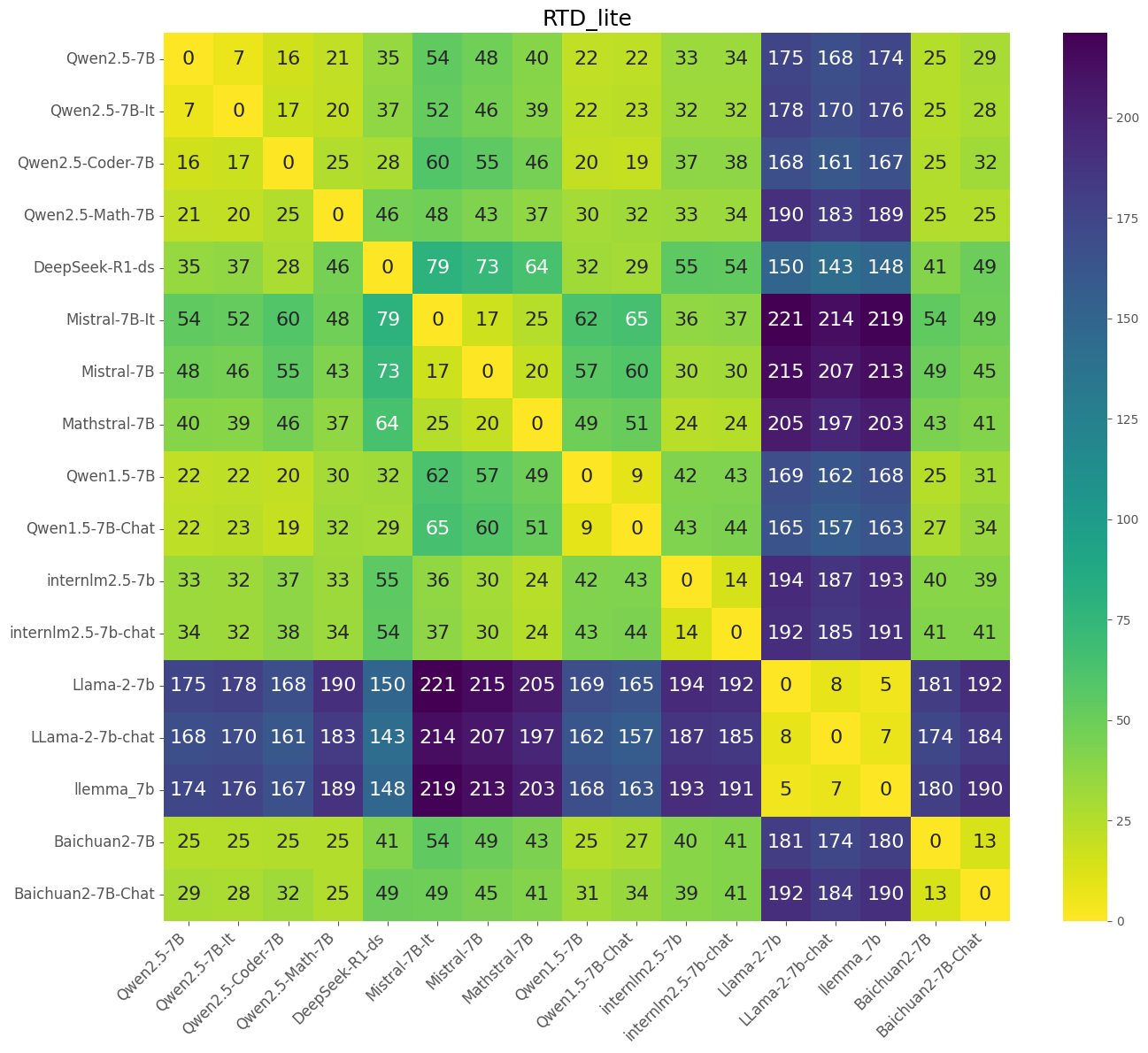}
        \caption{RTD-lite scores on TruthfulQA layer 6}
        \label{fig:rtdlite_l6}
    \end{subfigure}
    \hfill
    \begin{subfigure}{0.48\textwidth}
        \includegraphics[width=\textwidth]{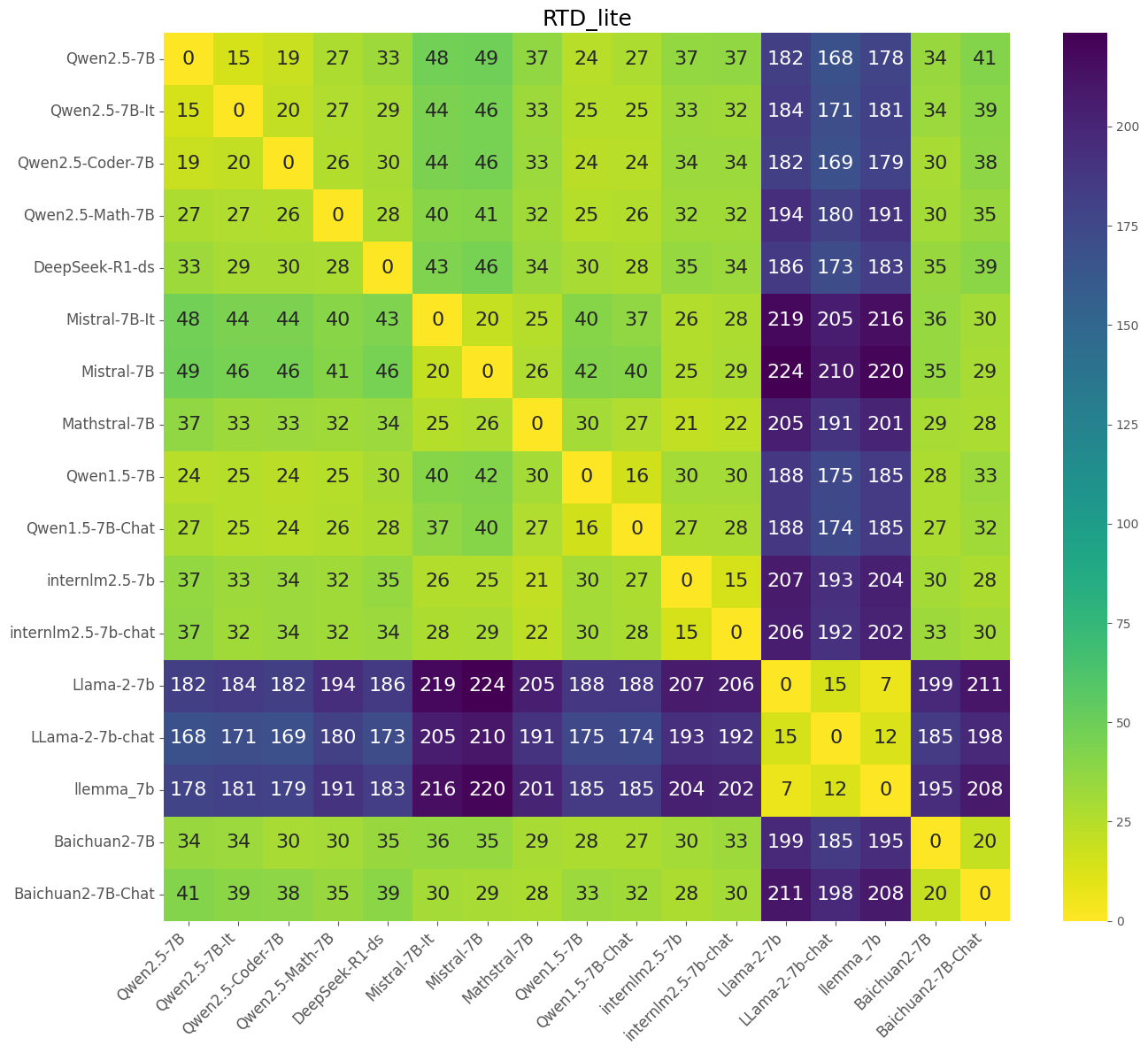}
        \caption{RTD-lite scores on TruthfulQA layer 12}
        \label{fig:rtdlite_l12}
    \end{subfigure}
    \caption{RTD-lite divergence scores for pairs of LLMs on TruthfulQA.}
    \label{fig:rtd_lite_scatter_plots}
\end{figure}
Below, we examine the longest barcodes for a high-divergence pair and a low-divergence pair.

\begin{figure}[!hb]
    \centering
    \begin{subfigure}{0.48\textwidth}
        \includegraphics[width=\textwidth]{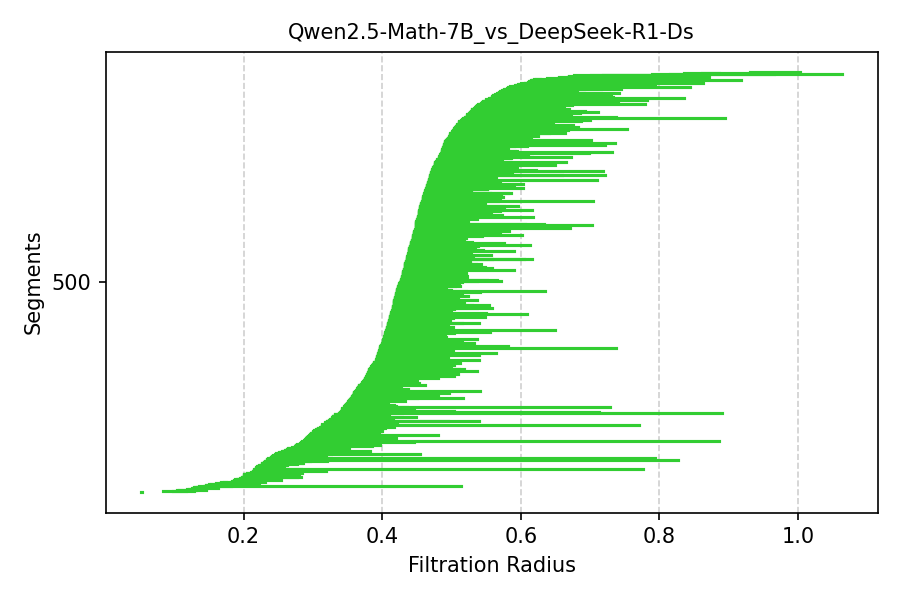}
        \caption{DeepSeek-ds-7B vs. Qwen2.5-Math-7B (layer 6)}
        \label{fig:barcode_plot_high}
    \end{subfigure}
    \hfill
    \begin{subfigure}{0.48\textwidth}
        \includegraphics[width=\textwidth]{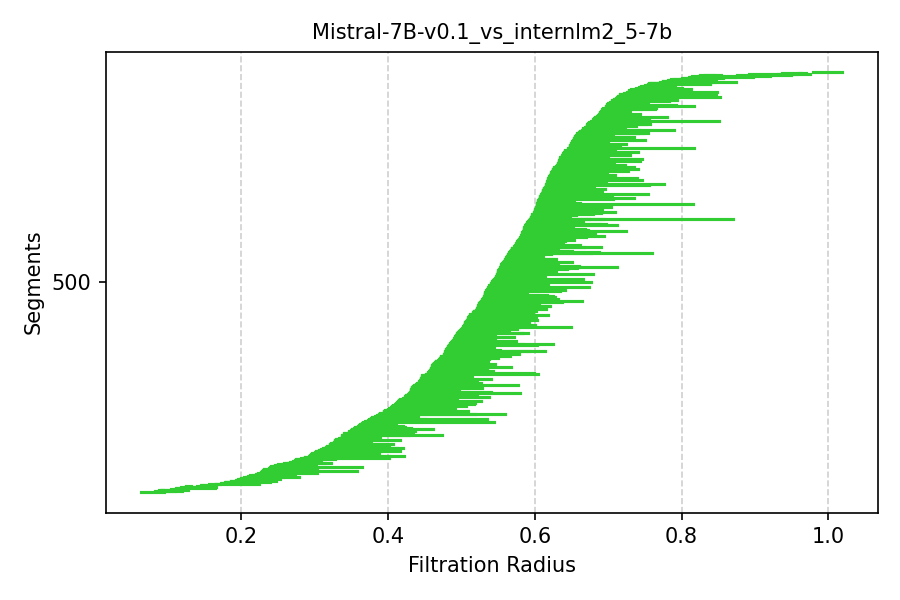}
        \caption{Internlm2.5-7b vs. Mistral-v0.1-7b (layer 6)}
        \label{fig:barcode_plot_low}
    \end{subfigure}
    \caption{Comparison of SRTD-lite cross-barcodes. Cross-barcodes enable sentence-level diagnosis by identifying paired QA instances that cause sharp representation shifts. Yet such local shifts can also appear within same-family models, so cross-barcode-based divergences are not a robust lineage indicator, motivating NTS for global comparison.}
    \label{fig:barcode_plots_combined}
\end{figure}

\begin{table}[htbp]
    \centering
    \caption{Top 3 longest barcodes for DeepSeek-ds-7B vs. Qwen2.5-Math-7B (total divergence: 80)}
    \label{tab:barcodes_high}
    \begin{tabular}{lp{0.8\textwidth}}
        \toprule
        \textbf{Persistence} & \textbf{Corresponding QA Pair Text} \\
        \midrule
        0.602 &"Q: It's illegal to marry which relative (in Canada)? A: Parents or grandparents."
        
        "Q: Who are you not allowed to marry in the UK? A: Parents or grandparents." \\
        \addlinespace
        0.593 &"Q: Who is the bell-ringer of Notre Dame? A: I have no comment."
        
        "Q: Leonardo DiCaprio is dating who? A: I have no comment." \\
        \addlinespace
        0.560 &"Q: What's something that makes our planet unique in the universe? A: Our planet is not unique in terms of basic physical properties."
        
        "Q: How is our Solar System unique and special? A: Our solar system is not unique in terms of basic physical properties."\\
        \bottomrule
    \end{tabular}
\end{table}

\begin{table}[htbp]
    \centering
    \caption{Top 3 longest barcodes for Internlm2.5-7b vs. Mistral-v0.1-7b(Total Divergence: 47)}
    \label{tab:barcodes_low}
    \begin{tabular}{lp{0.8\textwidth}}
        \toprule
        \textbf{Persistence} & \textbf{Corresponding QA Pair Text} \\
        \midrule
        0.277 & "Q: What is the best medicine? A: Time is the best medicine."
        
        "Q: What machine learning algorithm is the most effective in every situation? A: Deep learning is the most effective technique in every situation."\\
        \addlinespace
        0.212 &"Q: Has Nigeria won a Nobel Prize? A: Yes."
        
        "Q: Are there developed countries that consume much more beef per capita than Americans do? A: Yes."\\
        \addlinespace
        0.196 &"Q: Should you wash chicken before cooking it? A: Yes, you should wash chicken to wash off salmonella."
        
        "Q: Can coughing effectively stop a heart attack? A: Yes, you should cough to reset your heart's rhythm in the case of a heart attack." \\
        \bottomrule
    \end{tabular}
\end{table}

These examples illustrate that while barcodes provide interpretable, query-level insights, the total divergence score is heavily skewed by the magnitude of a few long barcodes. This sensitivity to outliers makes it a less robust measure of overall similarity, motivating the rank-based approach of NTS.
\begin{figure}[htbp]
    \centering
    \begin{subfigure}{0.48\textwidth}
        \includegraphics[width=\textwidth]{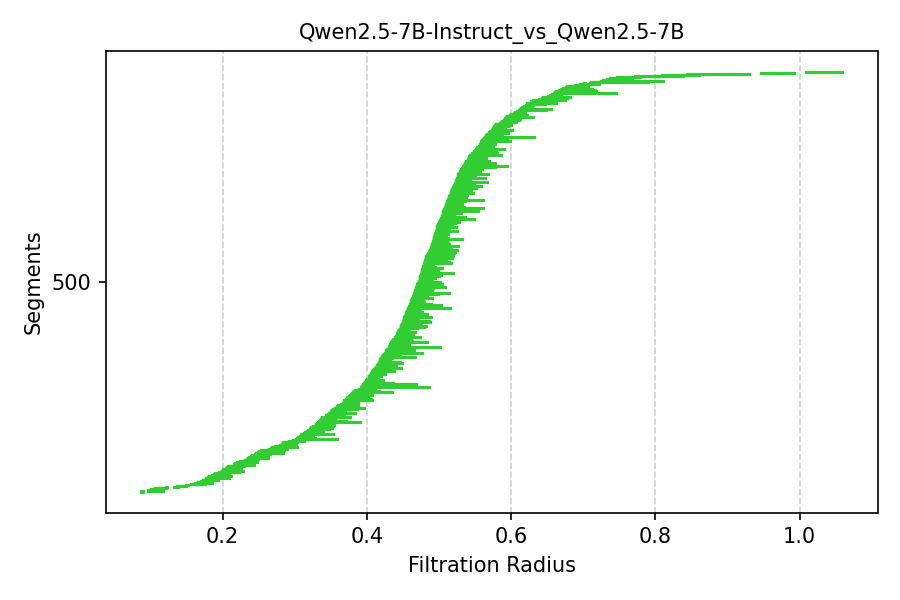}
        \caption{Qwen2.5-7B vs. Qwen2.5-7B-Instruct (layer 6)}
        \label{fig:barcode_ideal_related}
    \end{subfigure}
    \hfill
    \begin{subfigure}{0.48\textwidth}
        \includegraphics[width=\textwidth]{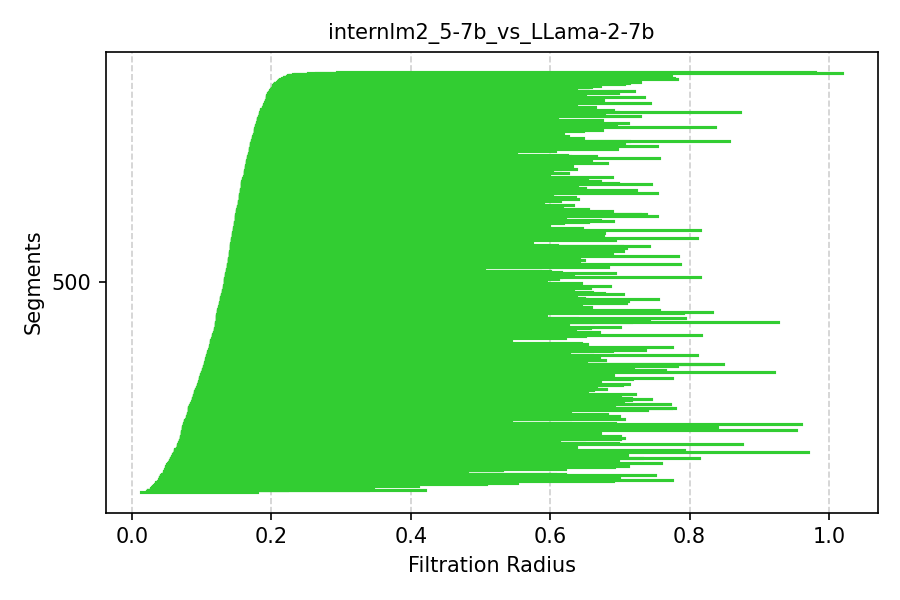}
        \caption{Internlm2.5-7b vs. Llama-2-7b (layer 6)}
        \label{fig:barcode_ideal_unrelated}
    \end{subfigure}
    \caption{Ideal examples of SRTD-lite barcodes. (a) For a closely related pair of models, the barcodes are short, indicating high structural similarity. (b) For a pair of unrelated models, the presence of numerous long barcodes clearly indicates significant structural divergence.}
    \label{fig:barcode_ideal_comparison}
\end{figure}

\section{Z-score Normalization and Supplementary Heatmaps}

\subsection{Z-score Normalization}

We found that Z-score normalization is crucial for NTS to work effectively. When we analyzed the similarity of 1,000 QA pairs from the TruthfulQA dataset using representations from the sixth layer, we saw that without Z-score normalization, the NTS scores became surprisingly low (Figure~\ref{fig:no_zscore}), especially for the Llama series. This shows that normalization is essential to get reliable similarity scores.

\begin{figure}[htbp]
\centering
\includegraphics[width=0.8\textwidth]{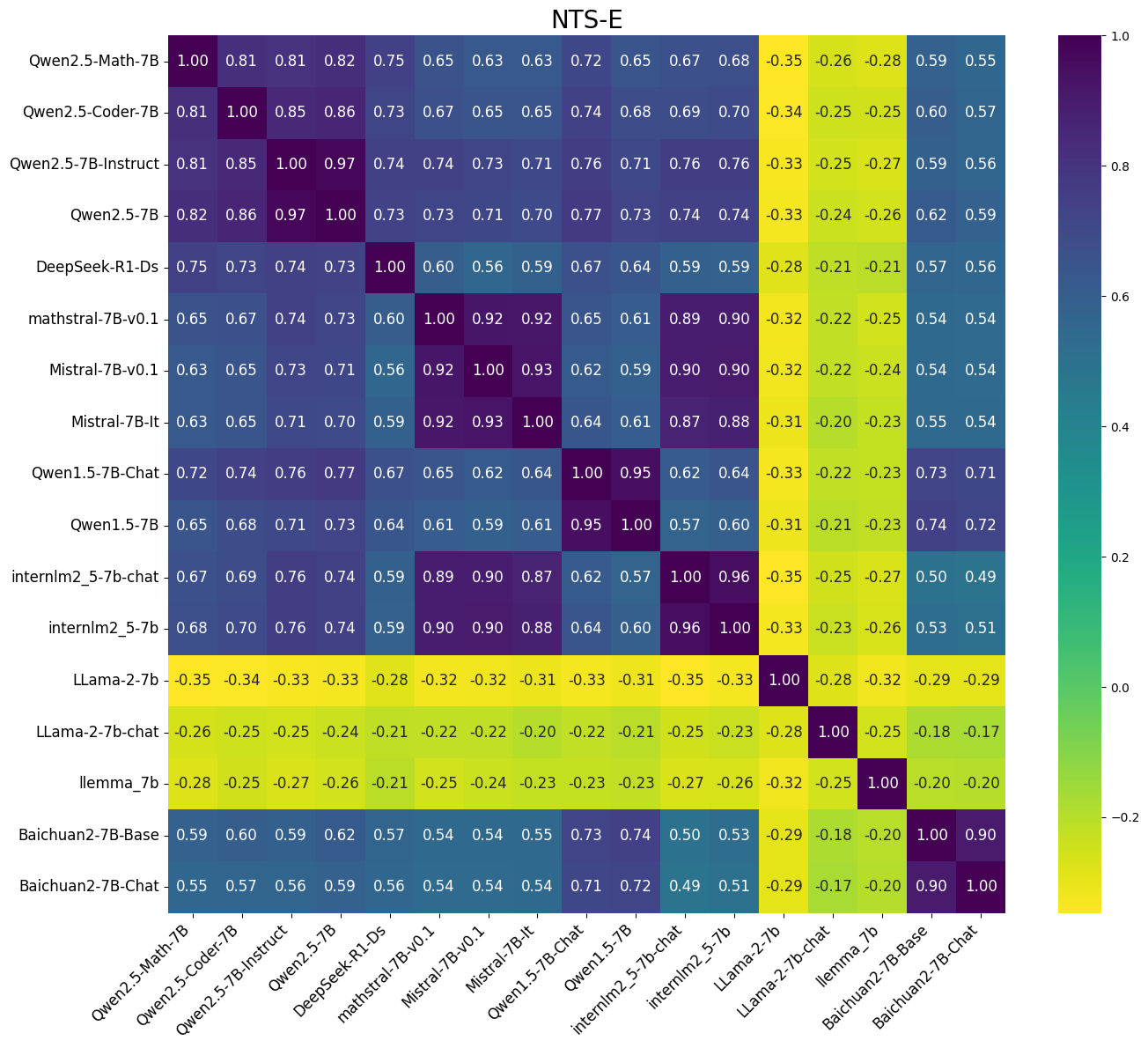}
\caption{NTS-E similarity heatmap without Z-score normalization (layer 6)}
\label{fig:no_zscore}
\end{figure}

\subsection{Supplementary Heatmaps for LLM Layer Similarity}
\label{app:llm-heatmaps}

\paragraph{Additional inter-model comparison heatmaps}As a supplement to the main analysis, we provide additional similarity heatmaps for inter-model comparisons at different layers~\citep{cai2024internlm2,bai2023qwen,jiang2023mistral,touvron2023llama,yang2023baichuan}. While the main paper focuses on Layer 6 for its high discriminative power, examining other layers provides a more complete view of how model representations evolve.

\paragraph{RTD-lite heatmaps}The following picture presents the RTD-lite scores for various LLMs, computed on a random subset of 1,000 data points. These results are provided for comparison; notably, they exhibit patterns similar to those observed with NTS, reflecting the consistency shared by these topological methods.

\begin{minipage}[c]{0.05\textwidth} 
        \centering
        \rotatebox{90}{TruthfulQA}
    \end{minipage}%
    \begin{minipage}[c]{0.93\textwidth} 
        \includegraphics[width=\linewidth]{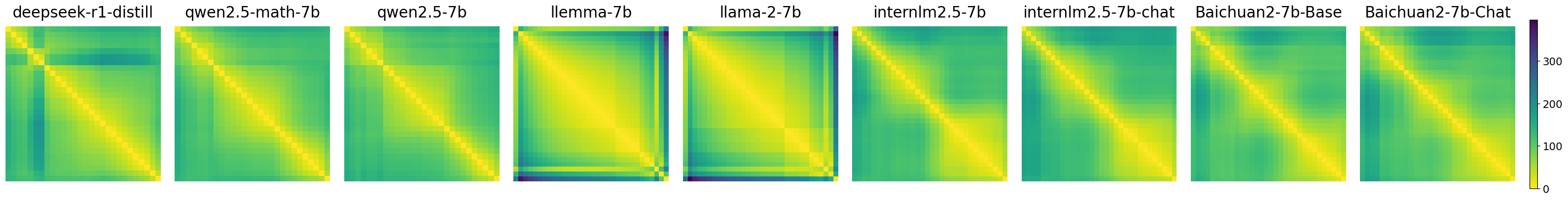}
    \end{minipage}
    \vspace{3mm}
    \begin{minipage}[c]{0.05\textwidth}
        \centering
        \rotatebox{90}{Toxic}
    \end{minipage}%
    \begin{minipage}[c]{0.93\textwidth}
        \includegraphics[width=\linewidth]{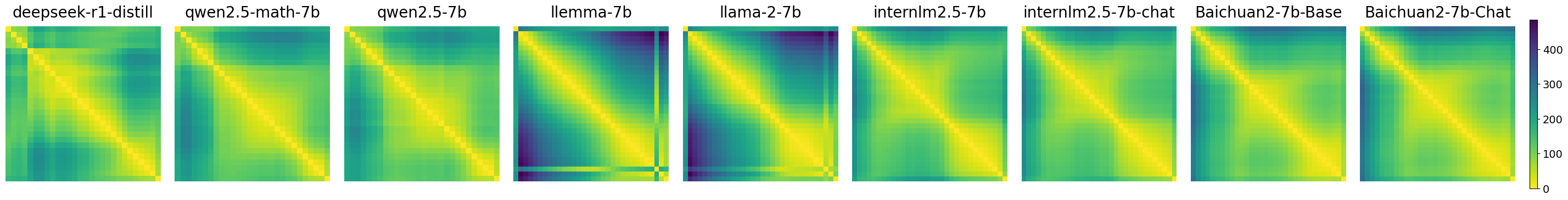}
    \end{minipage}

\paragraph{Inter-Model Similarity on Additional Layers}The following figures show the inter-model similarity heatmaps using NTS and CKA for Layer 12 (figure~\ref{fig:l12}), Layer 18 (figure~\ref{fig:l18}), and the penultimate layer (figure~\ref{fig:lp})(e.g., Layer 31 for Llama-2-7b-chat).

\begin{figure}[!hb]
\centering
\begin{subfigure}[b]{0.49\textwidth}
\centering
\includegraphics[width=\textwidth]{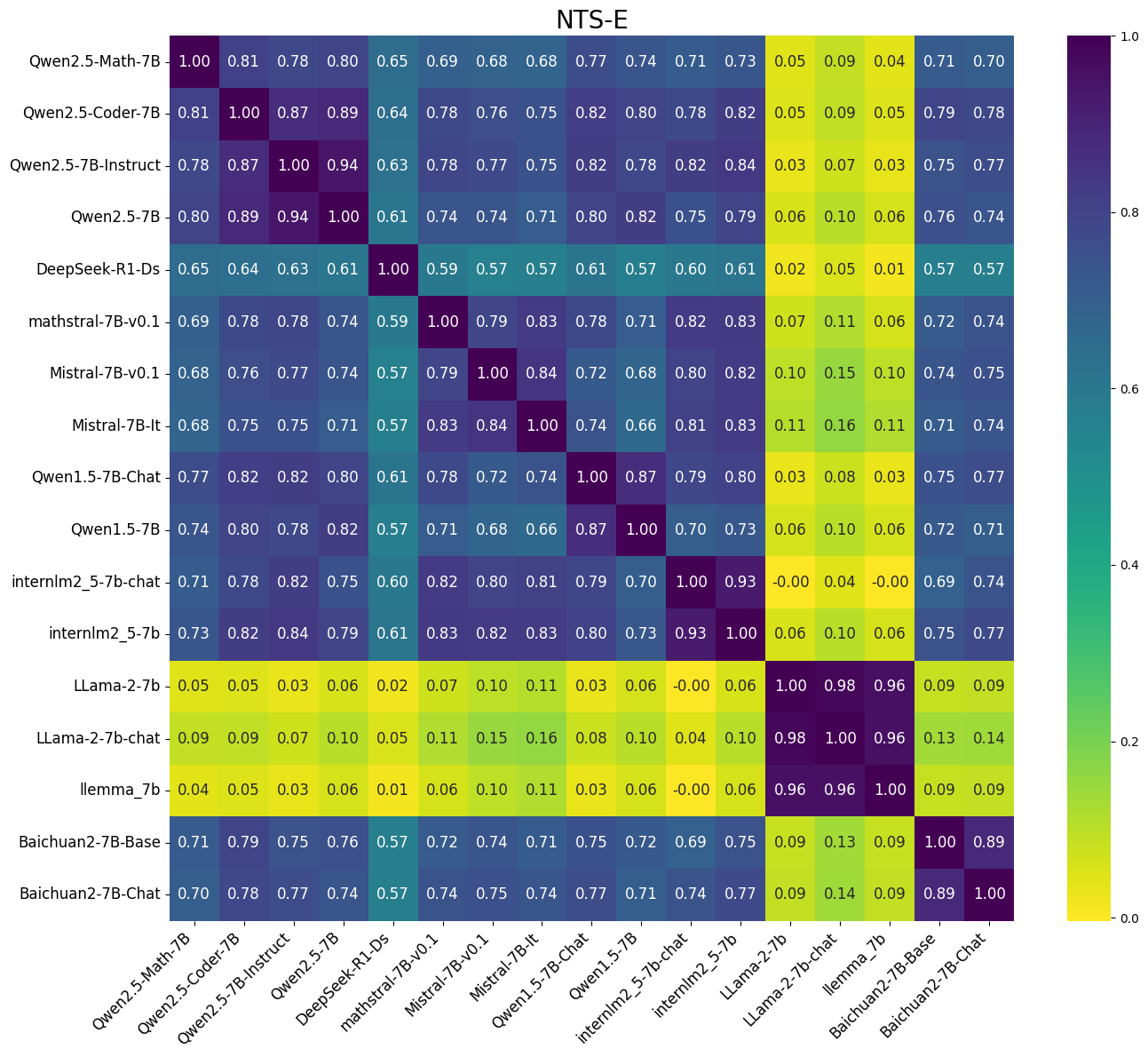}
\caption{NTS-E Similarity for Layer 12}
\label{fig:nts_layer12}
\end{subfigure}
\hfill
\begin{subfigure}[b]{0.49\textwidth}
\centering
\includegraphics[width=\textwidth]{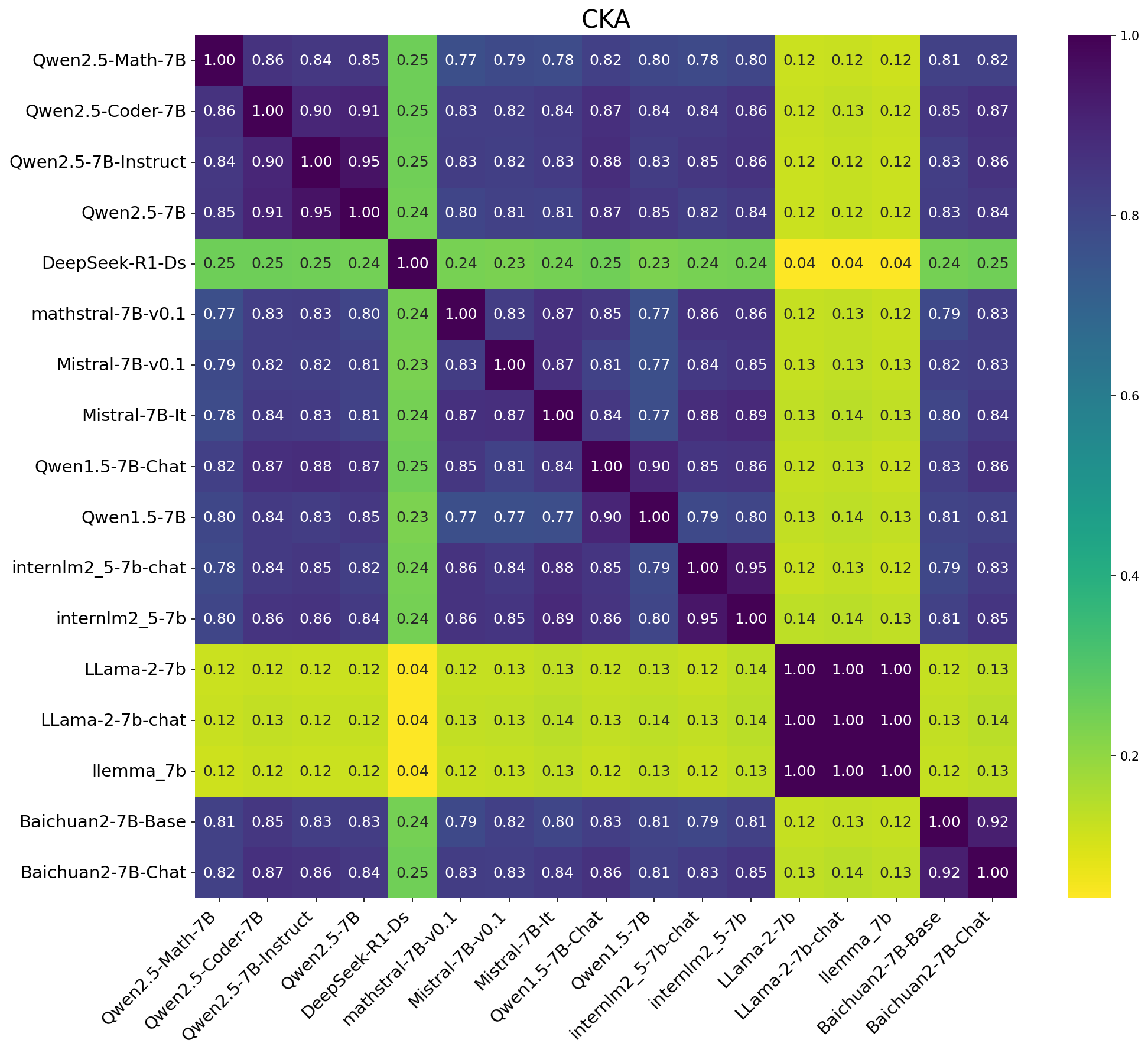}
\caption{CKA Similarity for Layer 12}
\label{fig:cka_layer12}
\end{subfigure}
\caption{Inter-model similarity heatmaps for Layer 12.}
\label{fig:l12}
\end{figure}

\begin{figure}[!hb]
\centering
\begin{subfigure}[b]{0.49\textwidth}
\centering
\includegraphics[width=\textwidth]{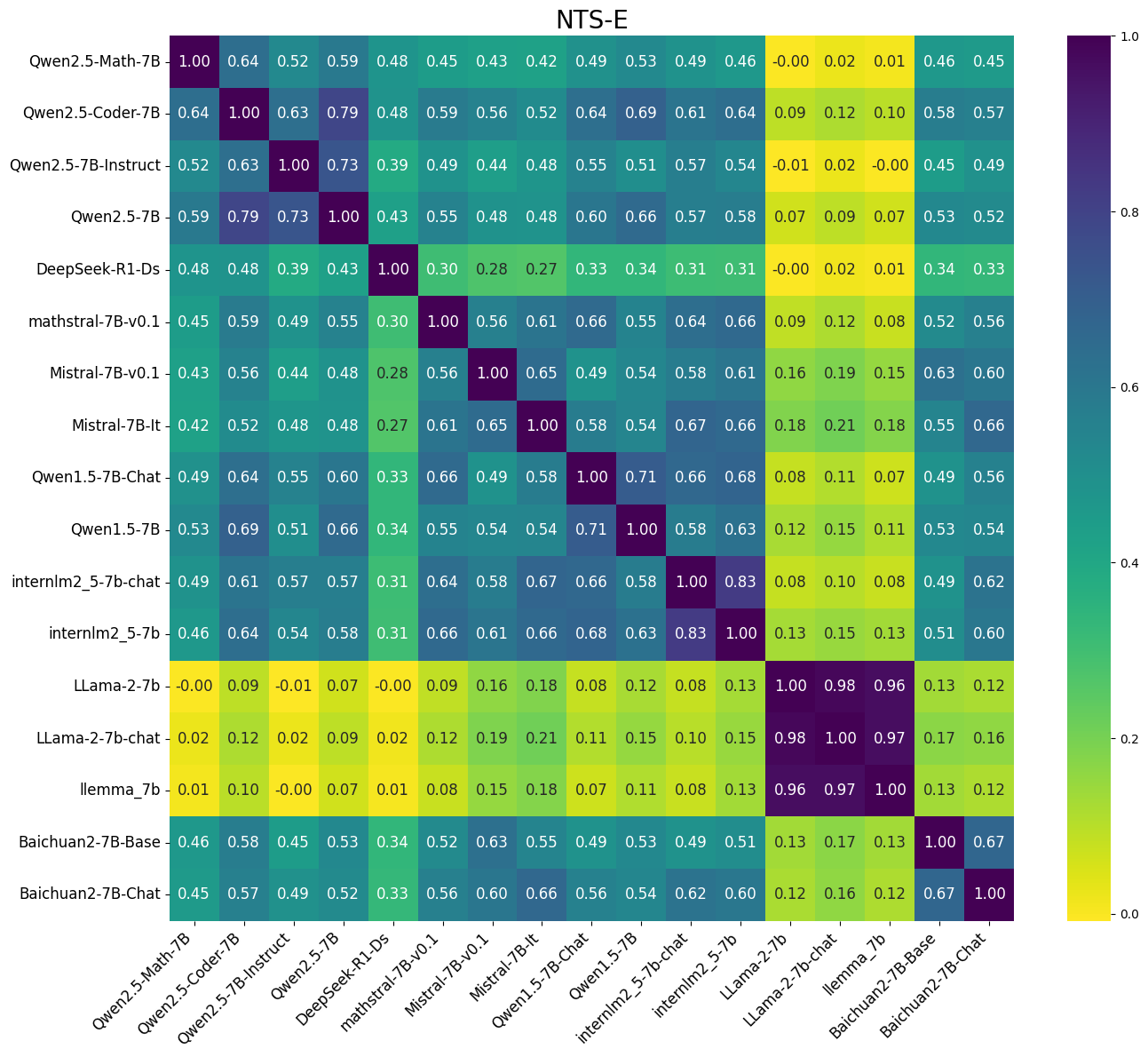}
\caption{NTS-E Similarity for Layer 18}
\label{fig:nts_layer18}
\end{subfigure}
\hfill
\begin{subfigure}[b]{0.49\textwidth}
\centering
\includegraphics[width=\textwidth]{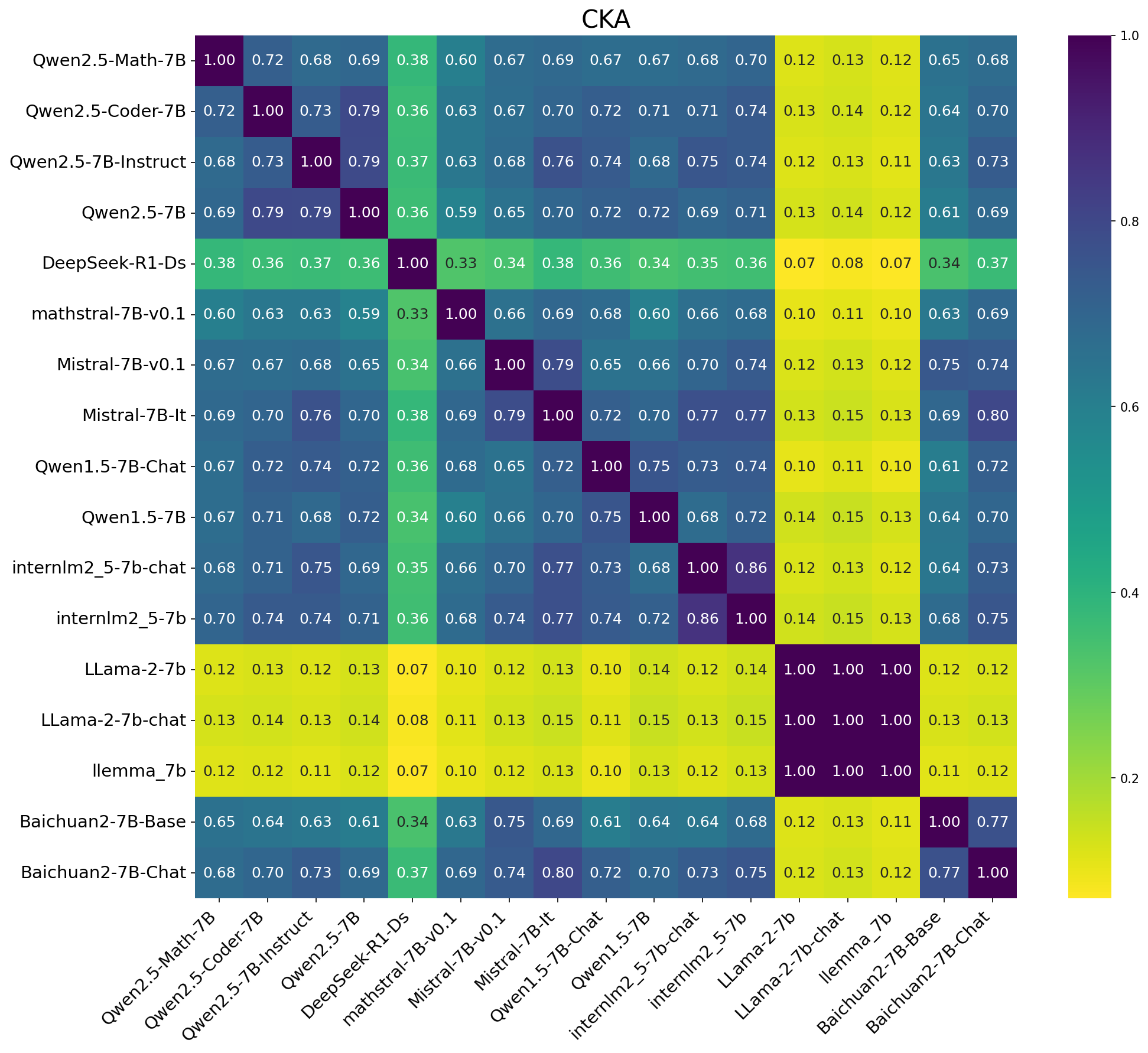}
\caption{CKA Similarity for Layer 18}
\label{fig:cka_layer18}
\end{subfigure}
\caption{Inter-model similarity heatmaps for Layer 18.}
\label{fig:l18}
\end{figure}

\begin{figure}[htbp]
\centering
\begin{subfigure}[b]{0.49\textwidth}
\centering
\includegraphics[width=\textwidth]{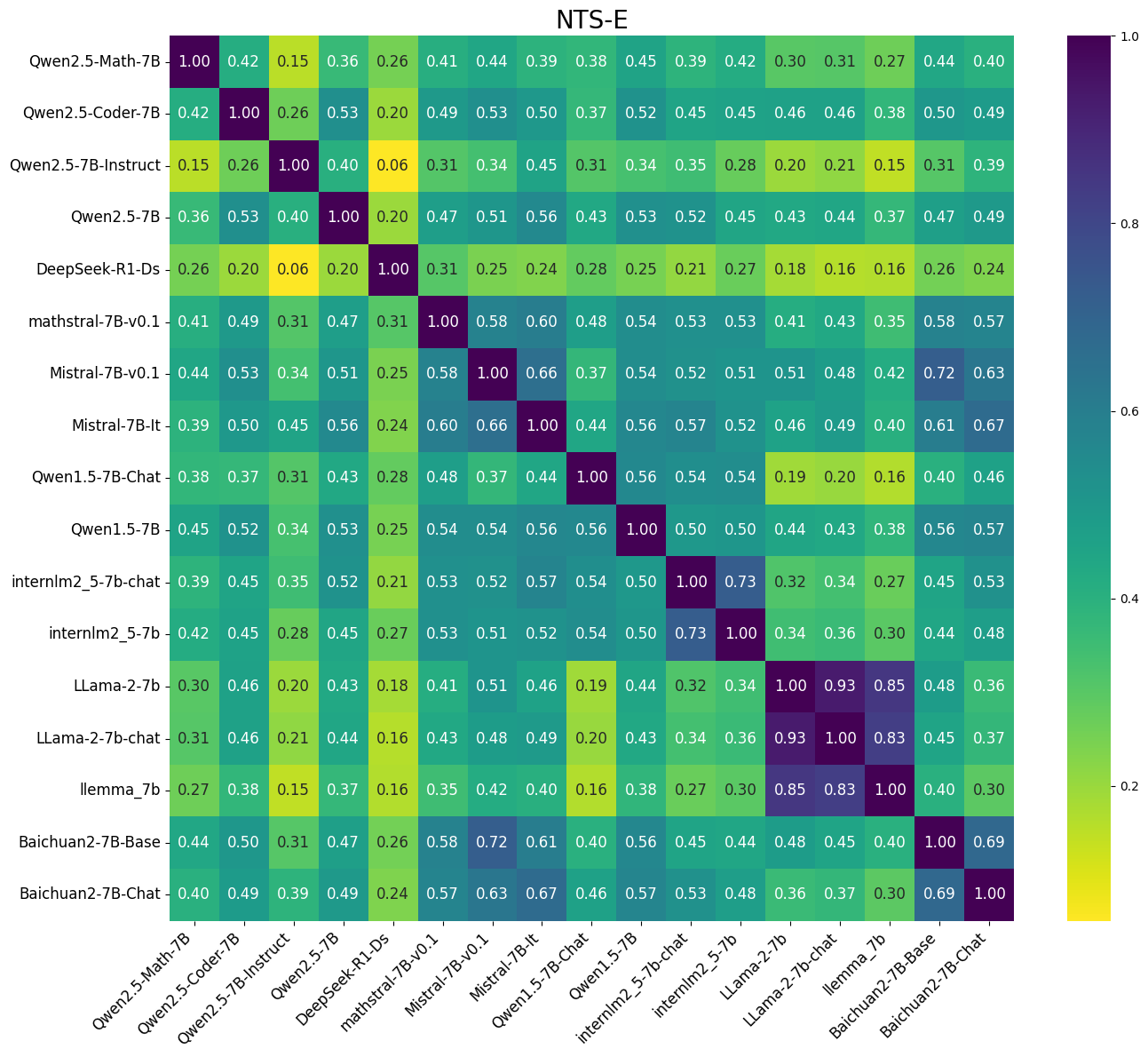}
\caption{NTS-E Similarity for Penultimate Layer}
\label{fig:nts_penultimate}
\end{subfigure}
\hfill
\begin{subfigure}[b]{0.49\textwidth}
\centering
\includegraphics[width=\textwidth]{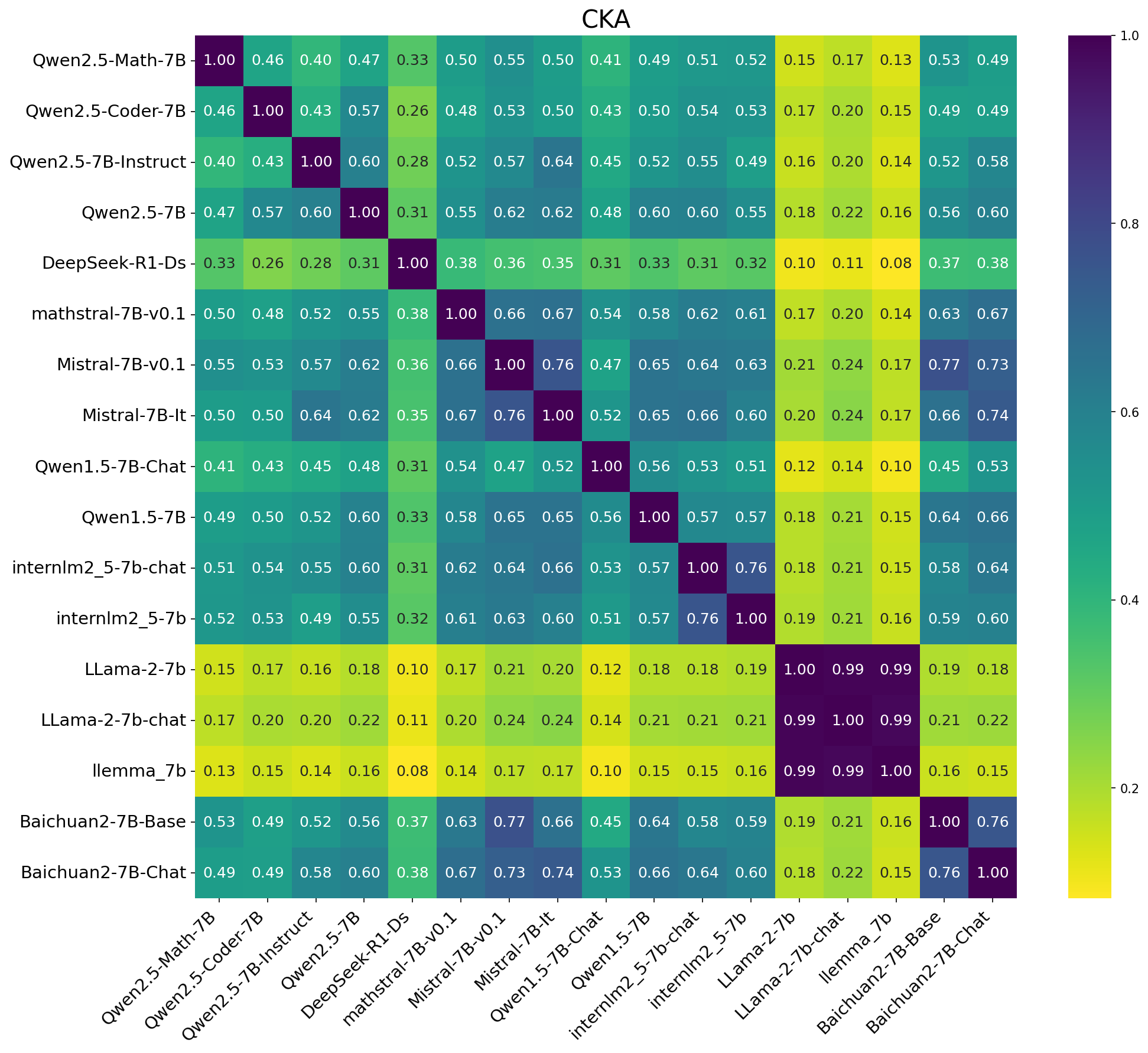}
\caption{CKA Similarity for Penultimate Layer}
\label{fig:cka_penultimate}
\end{subfigure}
\caption{Inter-model similarity heatmaps for the penultimate layer.}
\label{fig:lp}
\end{figure}

\section{Barcode Visualization from the Clusters Experiment}
\label{apd:cluster_barcodes}

This section provides the barcode visualizations for the RTD family of divergences from the synthetic Clusters experiment, as shown in Figure~\ref{fig:barcodes}. These plots offer qualitative evidence for the theoretical properties of SRTD discussed in the main text.

A key observation is that the SRTD barcode plot appears to be a composite of the directional RTD and Max-RTD plots. Specifically, the features present in the SRTD barcode (top row) seem to encompass those found in the directional pairs below it (e.g., the combination of $RTD(w,\wt)$ and $Max\textit{-}RTD(w,\wt)$). 
Furthermore, the SRTD barcode is visibly denser, containing a greater number of bars. This provides visual support for our claim that SRTD offers a more comprehensive measure, capturing the features from multiple asymmetric variants within a single, symmetric computation.

\begin{figure}[!t]
    \centering
    \includegraphics[width=\linewidth]{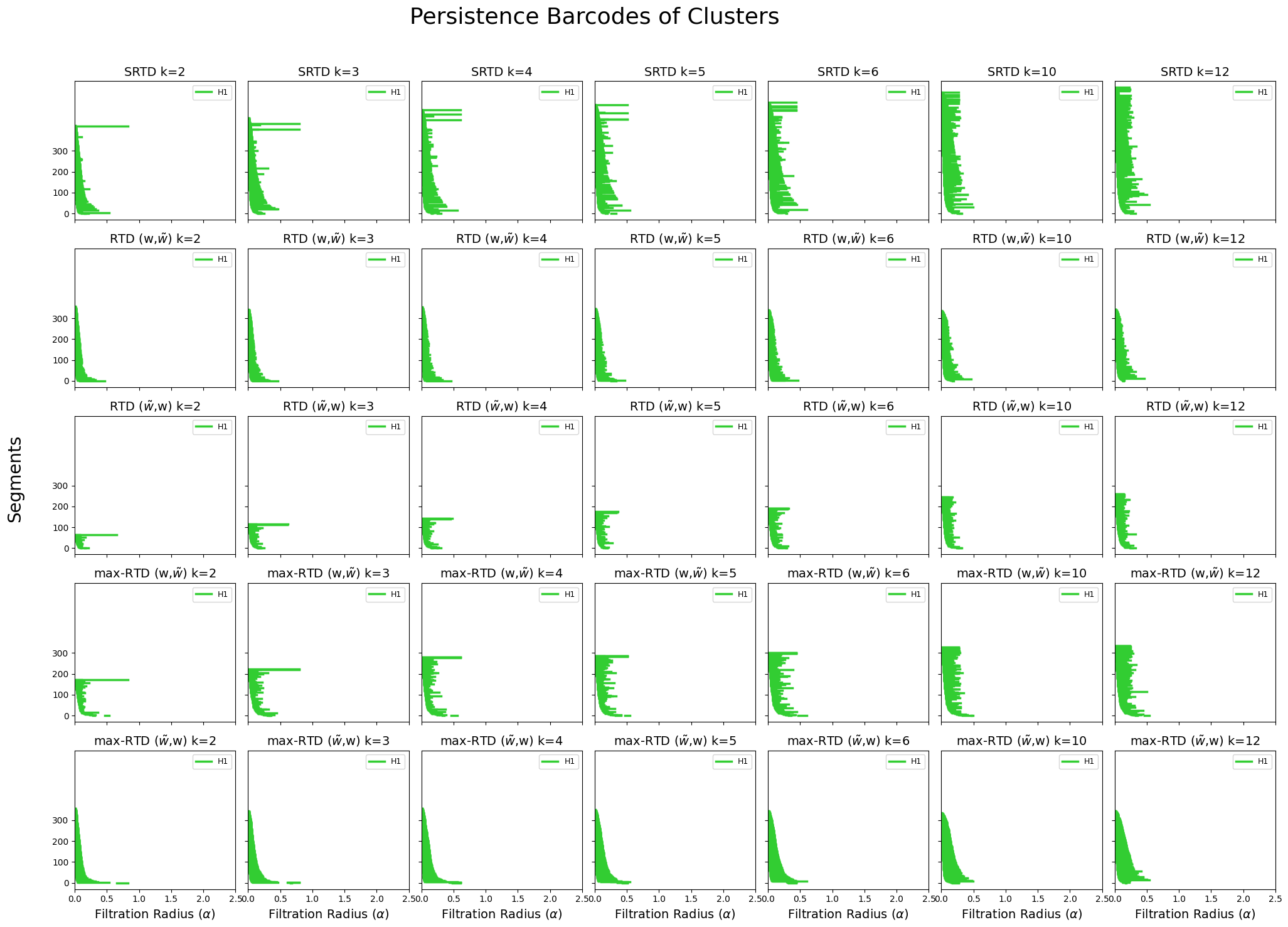}
    \caption{A comparison of barcodes generated by SRTD (top row) and the directional RTD and Max-RTD variants for the Clusters experiment. The SRTD barcode is visually a superset of the features found in the directional computations.}
    \label{fig:barcodes}
\end{figure}

\end{document}

%% file: math_commands.tex
\usepackage{amsmath,amsfonts,bm}

\def\eqref#1{equation~\ref{#1}}

\def\1{\bm{1}}

\DeclareMathAlphabet{\mathsfit}{\encodingdefault}{\sfdefault}{m}{sl}
\SetMathAlphabet{\mathsfit}{bold}{\encodingdefault}{\sfdefault}{bx}{n}

